\documentclass{article}

    \PassOptionsToPackage{numbers, compress}{natbib}

\usepackage[preprint]{neurips_2026}

\usepackage[utf8]{inputenc} 
\usepackage[T1]{fontenc}    
\usepackage{hyperref}       
\usepackage{url}            
\usepackage{booktabs}       
\usepackage{amsfonts}       
\usepackage{nicefrac}       
\usepackage{microtype}      
\usepackage{xcolor}         

\usepackage{amsmath}
\usepackage{amssymb}
\usepackage{mathtools}
\usepackage{amsthm}

\usepackage{subcaption}

\usepackage{tikz}

\usepackage{algorithm,algorithmicx,algpseudocode}

\usepackage{multirow}

\usepackage{color,soul}

\usepackage{wrapfig}

\newcommand{\mstd}[2]{#1$\pm${\scriptsize #2}}
\newcommand{\bmstd}[2]{\textbf{#1$\pm${\scriptsize #2}}}

\usepackage{enumitem}
\setlist[itemize]{leftmargin=*}

\usepackage[capitalize,noabbrev]{cleveref}

\theoremstyle{plain}
\newtheorem{theorem}{Theorem}[section]
\newtheorem{proposition}[theorem]{Proposition}

\theoremstyle{definition}

\newtheorem{assumption}[theorem]{Assumption}
\theoremstyle{remark}

\usepackage[disable,textsize=tiny]{todonotes}

\usepackage{url}

\title{Rethinking Molecular Graph Backdoors under Chemistry-aware Admission}

\author{%
  \textbf{Thinh T. H. Nguyen$^{1}$, \quad
  Sze Jue Yang$^{2}$, \quad
  Khoa D. Doan$^{1}$,}\\
  \textbf{Chee Seng Chan$^{1,2}$, \quad
  Kok-Seng Wong$^{1}$}\thanks{Corresponding author: \url{wong.ks@vinuni.edu.vn}}\\
  $^{1}$VinUniversity, Hanoi, Vietnam, \quad
  $^{2}$Universiti Malaya\\
  \texttt{thinh.nth@vinuni.edu.vn, jasonyang429@um.edu.my, khoa.dd@vinuni.edu.vn,}\\
  \texttt{cs.chan@um.edu.my, wong.ks@vinuni.edu.vn}
}

\begin{document}

\maketitle

\begin{abstract}
Backdoor attacks on molecular graph neural networks (GNNs) are typically evaluated as abstract graph edits, but real molecular learning pipelines do not train on arbitrary graphs. Molecular records must first survive parsing, sanitization, canonicalization, and graph-string consistency checks. 
We formalize this overlooked admission stage as \textbf{ChemGuard}, an operational protocol for testing whether a submitted molecular record can enter a realistic learning pipeline, while complementing existing defenses.
\textbf{ChemGuard} admits a record only when its molecular string is sanitizable and the graph reconstructed from that string matches the submitted molecular graph. Under this operational view, many existing graph-based backdoors lose much of their apparent efficacy because their poisons are chemically invalid or representation-inconsistent. We then show that admission checks alone are insufficient to rule out molecular backdoors. We propose \textbf{ChemBack}, an admission-aware molecular backdoor attack that constructs chemically feasible motif-anchor attachments and ranks admitted candidates by fingerprint-based Tanimoto similarity to clean target-class molecules. \textbf{ChemBack} is model-free during trigger selection, using molecular structures, target labels, fingerprints, and public validity checks, but no victim model, surrogate GNN, learned embedding, gradient, logit, or training-code access. Across molecular benchmarks, validators, architectures, and defenses, \textbf{ChemBack} achieves high attack success with fully admitted poisons while preserving clean accuracy. Our results reveal a two-sided lesson, \textit{chemistry-aware admission suppresses many graph-only backdoors, yet chemically valid and target-aligned molecular backdoors remain a practical threat}.
\end{abstract}

\section{Introduction}
\vspace{-3pt}

Molecular Graph Neural Networks (GNNs) are widely used for molecular property prediction, drug discovery, and materials science~\cite{li2018deeper,zhou2020graph,wu2018moleculenet,gilmer2017neural}. Recent studies~\cite{zheng2023motif,dai2023unnoticeable,zhang2024rethinking} show that molecular GNNs are vulnerable to backdoor attacks, where an adversary poisons a small fraction of training data with a trigger pattern and assigns the poisoned samples to a target label. At inference time, clean molecules are classified normally, while triggered molecules are pushed toward the attacker-specified label.
Such attacks raise security concerns for molecular prediction systems in chemical-related applications, where manipulated predictions may lead to unsafe downstream decisions~\cite{kang2022lr,long2022pre,wieder2020compact,satheeskumar2025enhancing}.

However, many molecular backdoor evaluations still treat trigger insertion as an abstract graph edit, assuming a poisoned molecule becomes available once the trigger is added. This assumption does not match realistic cheminformatics workflows, where records are parsed, sanitized, canonicalized, and featurized before training or inference~\cite{landrum2013rdkit,o2011open,pavlov2011indigo}. These steps enforce valence validity, aromaticity normalization, bond consistency, and graph-string consistency. Thus, an abstractly valid trigger may fail to become an admissible molecular record. Ignoring this admission stage can inflate measured attack success by counting invalid poisons as if they reached the learner, as shown in Figure~\ref{fig:teaser}.

\begin{wrapfigure}{l}{0.65\linewidth}
\vspace{-10pt}
\centering
\includegraphics[width=\linewidth, keepaspectratio=true]{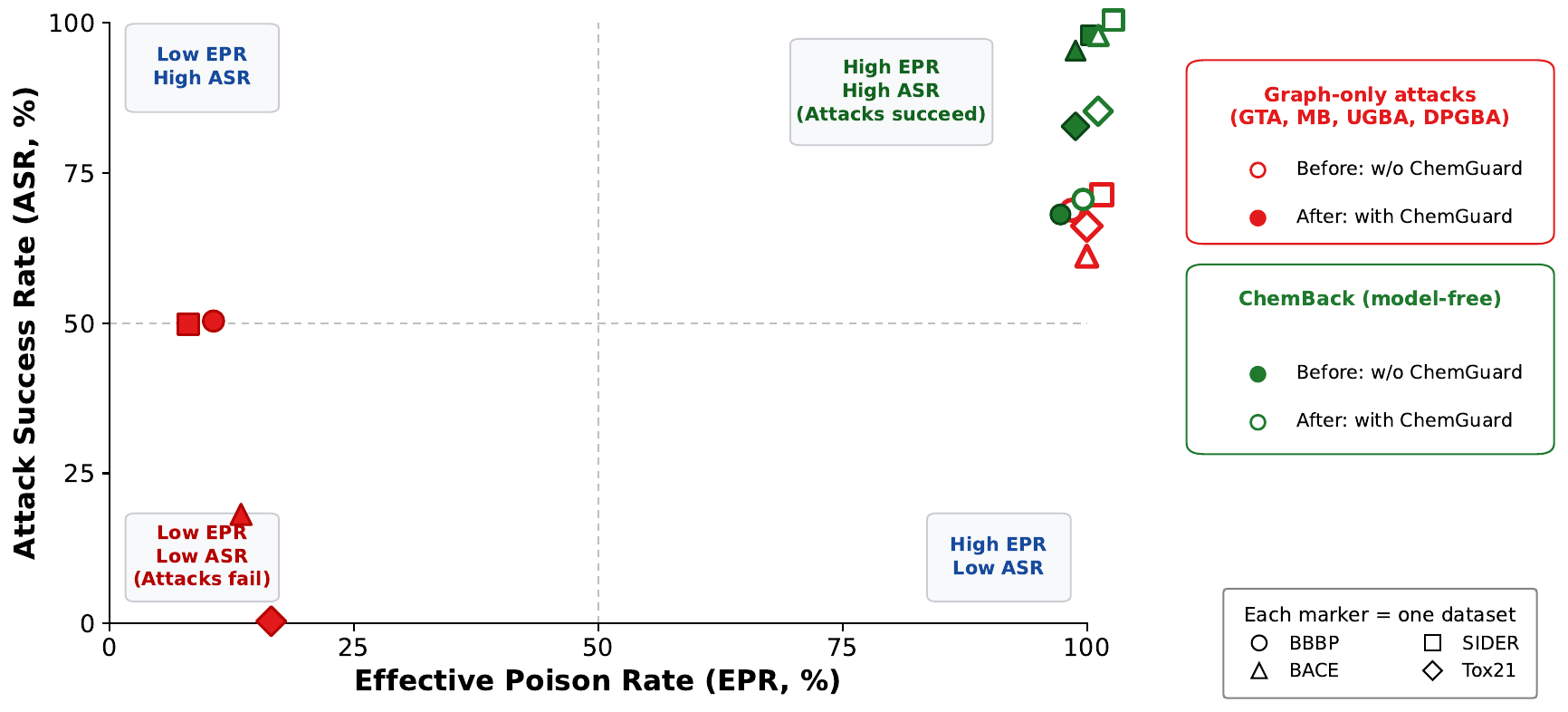}
\caption{EPR-ASR summary of graph-only attacks and \textbf{ChemBack} before and after \textbf{ChemGuard}.
\textbf{ChemGuard} shifts graph-only attacks from the high-EPR/high-ASR region toward lower-EPR/lower-ASR regions because many attempted poisons or test-time triggers fail chemistry-aware admission. In contrast, \textbf{ChemBack} remains in the high-EPR/high-ASR region, indicating that it remains operational after chemistry-aware admission.}
\label{fig:teaser}
\vspace{-10pt}
\end{wrapfigure}

To model this gap, we introduce \textbf{ChemGuard}, a chemistry-aware admission protocol. 
\textbf{ChemGuard} is an admission protocol that complements existing defenses, where it examines whether a submitted molecular record can enter a realistic molecular learning pipeline.
A record is admitted only when its molecular string is sanitizable and the graph reconstructed from that string is consistent with the submitted graph. Under this protocol, representative graph backdoors lose much of their operational attack signal because many attempted poisons or test-time triggers fail admission.

This reframes molecular backdoor evaluation from \emph{whether an abstract graph trigger can fool a GNN} to \emph{whether a poisoned molecular record can pass chemistry-aware admission and still induce targeted behavior}. 
Hence, we ask:
\textbf{\emph{Can effective molecular backdoors survive realistic chemistry-aware admission before training and inference?}}

We answer this question with \textbf{ChemBack}, an admission-aware molecular backdoor attack that attaches chemically feasible motifs at valid atomic anchors, keeping poisoned molecules sanitizable and graph-string consistent under \textbf{ChemGuard}. Since chemical validity alone does not ensure effective or stealthy poisoning, \textbf{ChemBack} selects triggers that are realizable, admitted by preprocessing, reusable at training and test time, and close to the target class. Unlike UGBA~\cite{dai2023unnoticeable} and DPGBA~\cite{zhang2024rethinking}, which rely on distribution preservation or surrogate-model optimization, \textbf{ChemBack} requires no victim model or surrogate GNN during trigger selection. Instead, it searches feasible motif-anchor attachments and ranks \textbf{ChemGuard}-admissible candidates by Tanimoto similarity~\cite{rogers2010extended,bajusz2015tanimoto} to clean target-class molecules, using only molecular structures, target labels, and public fingerprint functions. This makes \textbf{ChemBack} more than a motif backdoor with a validity filter, as triggers must remain admissible, graph-string consistent, target-aligned, and model-free during both poisoning and activation. Clean reference models are used only for post-hoc diagnostics, not attack construction.

We validate these claims through a broad empirical study. We first show that chemistry-aware admission substantially reduces the operational attack signal of representative graph backdoors.
We then show that \textbf{ChemBack} remains effective under the same admission rule, achieving high attack success while maintaining clean accuracy. This pattern persists across molecular benchmarks, chemistry validators, GNN architectures, larger-scale datasets, pretrain-finetune settings, and representative defenses, revealing an attack vector that remains operational after chemistry-aware admission.

In summary, our contributions are:
\begin{enumerate}[leftmargin=*]
\item We identify an \emph{admission gap}, showing that graph-level trigger success can overestimate operational risk when sanitization and graph-string consistency are ignored.
\item We formalize \textbf{ChemGuard}, a minimal pipeline-level admission protocol for testing whether a molecular record can enter training or inference after chemistry-aware preprocessing.
\item We propose \textbf{ChemBack}, an admission-aware molecular backdoor attack that builds chemically feasible motif-anchor attachments and selects admitted candidates by Tanimoto similarity to clean target-class molecules, without victim or surrogate model access.
\item We show across benchmarks, larger datasets, pretrain-finetune settings, validators, architectures, and defenses that chemistry-aware admission weakens graph-only backdoors, while chemically valid and target-aligned backdoors remain effective.
\end{enumerate}

\section{Related Work}
\label{sec:related}

\textbf{Graph Neural Networks (GNNs).}
GNNs learn latent representations by aggregating information over graph structure~\cite{scarselli2009neural}. Molecules are naturally represented as graphs, with atoms as nodes and chemical bonds as edges, making GNNs effective for molecular property prediction and related chemistry tasks~\cite{duvenaud2015convolutional,wu2018moleculenet,gilmer2017neural}. Architectures such as GCN~\cite{kipf2017semisupervised} and GraphSAGE~\cite{hamilton2017inductive} are widely used in molecular learning, while large-scale self-supervised models such as GROVER~\cite{rong2020self} further improve molecular representations through pretraining. As molecular GNNs become more common in scientific pipelines, their security surface also expands, especially when training data are collected from public or third-party sources.

\textbf{Backdoor attacks (BA).}
BA poison training data so that a model behaves normally on clean inputs but predicts an attacker-specified target label when a trigger appears. BadNets~\cite{gu2017badnets} introduced this threat in images, followed by works on diverse triggers, budgets, and stealth objectives~\cite{nguyen2020input,doan2021lira,nguyen2024backdoor}.
In graphs, GTA~\cite{xi2021graph} introduced subgraph triggers, while UGBA~\cite{dai2023unnoticeable} and DPGBA~\cite{zhang2024rethinking} improve unnoticeability and distribution alignment. For molecules, Motif-Backdoor~\cite{zheng2023motif} uses recurring substructures as triggers. These methods reveal important GNN vulnerabilities, but they often treat triggers as abstract graph edits. In realistic cheminformatics pipelines, edited records must still pass sanitization, canonicalization, and graph-string consistency checks before entering training or inference.

\textbf{Backdoor and poisoning defenses.}
Existing defenses operate at different stages of the learning pipeline. Backdoor-specific defenses such as Neural Cleanse~\cite{wang2019neural}, Spectral Signatures~\cite{tran2018spectral}, DShield~\cite{yu2025dshield}, and RIGBD~\cite{zhang2024robustness} typically act after samples have entered training, using trigger reconstruction, embedding outliers, discrepancy, commonality, or robustness-based signals. Robust GNN and graph-poisoning defenses, including RGCN~\cite{zhu2019robust}, GNNGuard~\cite{zhang2020gnnguard}, Pro-GNN~\cite{jin2020graph}, and certified defenses~\cite{wang2021certified,yang2024distributed,li2025deterministic}, also operate after graph admission. By contrast, \textbf{ChemGuard} targets admission by testing whether a molecular record survives sanitization and graph-string consistency checks before entering the learner. Thus, chemistry-aware admission is complementary to post-hoc and model-level defenses, not a replacement for them.

\section{Methodology}
\label{sec:method}
\vspace{-3pt}

\subsection{Preliminaries and Threat Model}
\label{sec:prelim-threat}

\textbf{Molecular records and preprocessing.}
We represent a molecular record as $(\tilde{s},\tilde{G},y)$, where $\tilde{s}\in\mathcal{S}$ is a molecular string, $\tilde{G}$ is the submitted or stored molecular graph, and $y\in\mathcal{Y}$ is the task label. A molecule is an attributed graph $G=(V,E,X,B)$, where $V$ and $E$ denote atoms and bonds, $X\in\mathbb{R}^{|V|\times d_v}$ is the atom-feature matrix, and $B\in\mathbb{R}^{|E|\times d_e}$ is the bond-feature matrix. Molecular strings are commonly parsed, sanitized, canonicalized, and featurized by cheminformatics toolkits~\cite{landrum2013rdkit,o2011open,pavlov2011indigo}.
We model this preprocessing stack as a deterministic operator $\phi_{\mathcal{T}}:\mathcal{S}\rightarrow \mathcal{G}_{\mathrm{valid}}\cup\{\varnothing\},$
where $\mathcal{T}$ is the chemistry toolkit or validation stack, and $\mathcal{G}_{\mathrm{valid}}$ denotes the set of chemically valid molecular graphs. If parsing, sanitization, canonicalization, and featurization succeed, $\phi_{\mathcal{T}}(\tilde{s})$ returns a valid molecular graph; otherwise it returns $\varnothing$.
A molecular GNN is a classifier $f_\theta:\mathcal{G}_{\mathrm{valid}}\rightarrow[0,1]^M$, where $M$ is the number of classes. The effective deployed prediction pipeline is therefore $f_\theta\circ\phi_{\mathcal{T}}$.

\textbf{Learning objective.}
Let $D_{\mathrm{adm}}=\{(\tilde{s}_i,\tilde{G}_i,y_i)\}_{i=1}^{N}$ denote the training records admitted by the preprocessing pipeline.
The victim trains $f_\theta$ by empirical risk minimization:
\begin{equation}
\min_\theta
\frac{1}{|D_{\mathrm{adm}}|}
\sum_{(\tilde{s}_i,\tilde{G}_i,y_i)\in D_{\mathrm{adm}}}
\ell\!\left(f_\theta(\phi_{\mathcal{T}}(\tilde{s}_i)),y_i\right).
\label{eq:erm}
\end{equation}
The stored graph $\tilde{G}_i$ is included because admission may check consistency between the submitted graph and the graph reconstructed from the molecular string. Prediction is computed from the admitted molecular string via $\phi_{\mathcal{T}}$.
This formulation emphasizes that the learner trains only on records that survive molecular preprocessing, not arbitrary raw graph edits.

\textbf{Threat model.}
We consider a data-poisoning adversary who can inject or modify a small fraction $\alpha$ of raw molecular records before admission.
The attacker fixes a target label $y_t$ and aims to preserve clean performance while causing triggered non-target molecules to be classified as $y_t$. The attacker has no access to the victim model or private training pipeline and cannot access or modify the victim parameters $\theta$, learned representations, gradients, optimizer state, training code, random seed, training trajectory, or private pre-processing configuration. 
Also, the attacker is refrained from forcing molecules that are invalid or possess graph-string inconsistencies to bypass pre-processing.


\textbf{Backdoor poisoning.}
Let $\tau=(\tau_s,\tau_G)$ denote a trigger transformation, where $\tau_s:\mathcal{S}\rightarrow\mathcal{S}$ modifies the molecular string and $\tau_G$ denotes the corresponding submitted-graph transformation.
For a clean non-target molecule $(\tilde{s}_c,\tilde{G}_c,y_c)$ with $y_c\neq y_t$, a targeted poisoning attempt produces
\[
(\tilde{s}_p,\tilde{G}_p,y_t)
=
\big(\tau_s(\tilde{s}_c),\tau_G(\tilde{G}_c),y_t\big)
\in D_{\mathrm{poison}}^{\mathrm{att}}.
\]
The nominal poison rate $\alpha$ controls how many raw training records the attacker attempts to poison.
However, in chemistry-aware pipelines, not every attempted poison reaches the learner: only records admitted by the preprocessing pipeline can contribute to training.

\textbf{Prediction under an admission rule.}
Let $\mathrm{Adm}(\tilde{s},\tilde{G})\in\{0,1\}$ denote an admission rule. For convenience, we define the admitted prediction as
\[
\hat{y}_\theta(\tilde{s},\tilde{G})
=
\begin{cases}
\arg\max_{c\in\{1,\ldots,M\}} f_\theta(\phi_{\mathcal{T}}(\tilde{s}))_c, 
& \mathrm{Adm}(\tilde{s},\tilde{G})=1,\\
\bot, 
& \mathrm{Adm}(\tilde{s},\tilde{G})=0.
\end{cases}
\]
Here, $\bot$ denotes rejection before model evaluation.
In raw abstract-graph evaluation, $\mathrm{Adm}$ is treated as always one. Under chemistry-aware evaluation, $\mathrm{Adm}$ is instantiated by \textbf{ChemGuard}.

\textbf{Metrics.}
We report Clean Accuracy (CA), operational Attack Success Rate (ASR), and Effective Poisoning Rate (EPR).
Let $D_{\mathrm{clean}}^{\mathrm{adm}}
=
\{(\tilde{s}_c,\tilde{G}_c,y_c)\in D_{\mathrm{clean}}:\mathrm{Adm}(\tilde{s}_c,\tilde{G}_c)=1\}$; CA measures utility on unmodified admitted test molecules, where
$\mathrm{CA}
=
\frac{1}{|D_{\mathrm{clean}}^{\mathrm{adm}}|}
\sum_{(\tilde{s}_c,\tilde{G}_c,y_c)\in D_{\mathrm{clean}}^{\mathrm{adm}}}
\mathbf{1}\!\left[
\hat{y}_{\theta}(\tilde{s}_c,\tilde{G}_c)=y_c
\right].$
Let 
$D_{\mathrm{poison}}^\mathrm{att}
=
\{(\tilde{s}_p,\tilde{G}_p,y_t)\in D_{\mathrm{poison}}:y\neq y_t\}$. Operational ASR measures targeted behavior on triggered non-target test molecules, where
$
\mathrm{ASR}
=
\frac{1}{|D_{\mathrm{test}}^{\neg y_t}|}
\sum_{(\tilde{s},\tilde{G},y)\in D_{\mathrm{test}}^{\neg y_t}}
\mathbf{1}\!\left[
\hat{y}_{\theta}(\tau_s(\tilde{s}),\tau_G(\tilde{G}))=y_t
\right]$.
If a triggered test molecule fails admission, then
$\hat{y}_{\theta}(\tilde{s}_p, \tilde{G}_p)=\bot$, and the indicator contributes zero. Thus, triggered molecules that cannot be realized as admissible molecular records are counted as attack failures.

\subsection{\textbf{ChemGuard}}
\label{sec:chemguard}

Molecular records are not arbitrary graphs.
Before a submitted record can be used by a practical molecular learning pipeline, it must satisfy chemical and representational constraints.
Structural modifications must respect atom identities, valence, bond types, aromaticity normalization, and consistency between the molecular string and the submitted graph.
Existing graph backdoor evaluations often do not model this admission stage and may count an abstract graph trigger as successful even when the corresponding molecular record would fail sanitization or graph--string consistency.

\textbf{Definition 1 (\textbf{ChemGuard} admission).}
Let $(\tilde{s},\tilde{G},y)$ be a submitted molecular record, and let $\phi_{\mathcal{T}}:\mathcal{S}\rightarrow\mathcal{G}_{\mathrm{valid}}\cup\{\varnothing\}$ denote the preprocessing operator induced by chemistry toolkit $\mathcal{T}$.
\textbf{ChemGuard} admits $(\tilde{s},\tilde{G},y)$ iff:
(i) $\phi_{\mathcal{T}}(\tilde{s})\neq\varnothing$; and
(ii) the typed molecular topology reconstructed from $\tilde{s}$ matches the submitted graph $\tilde{G}$.
Formally,
\begin{equation}
\mathrm{ChemGuard}_{\mathcal{T}}(\tilde{s},\tilde{G})
=
\mathbf{1}\!\left[\phi_{\mathcal{T}}(\tilde{s})\neq\varnothing\right]
\cdot
\mathbf{1}\!\left[
\mathrm{Topo}\!\left(\phi_{\mathcal{T}}(\tilde{s})\right)
=
\mathrm{Topo}(\tilde{G})
\right].
\label{eq:chemguard}
\end{equation}
Here, $\mathrm{Topo}(\tilde{G})$ denotes typed molecular topology, including atom identities, atom count, undirected bond set, and bond types under the toolkit-induced canonical representation.

\textbf{ChemGuard} is intentionally minimal.
It is not a complete defense because it does not inspect learned representations, detect triggers, or certify robustness against arbitrary graph perturbations.
Instead, it formalizes the implicit admission rule that molecular records must be chemically valid and representation-consistent before training or inference.
Thus, \textbf{ChemGuard} complements post-hoc backdoor and model-level poisoning defenses, which operate after sample admission.

Because \textbf{ChemGuard} reject many attempted poisons, the nominal poison rate $\alpha$ may differ from the poisoned signal that reaches training. We define the \textbf{ChemGuard} Effective Poisoning Rate as
\begin{equation}
\mathrm{EPR}
=
\frac{1}{|D_{\mathrm{poison}}^{\mathrm{att}}|}
\sum_{(\tilde{s}_p,\tilde{G}_p,y_t)\in D_{\mathrm{poison}}^{\mathrm{att}}}
\mathrm{ChemGuard}_{\mathcal{T}}(\tilde{s}_p,\tilde{G}_p).
\label{eq:epr-cg}
\end{equation}
Here, $D*{\mathrm{poison}}^{\mathrm{att}}$ is the set of poisoned records attempted before admission. The nominal poison rate $\alpha$ is the attacker’s attempted budget, while $\mathrm{EPR}*{\mathrm{CG}}$ is the fraction of attempted poisons admitted into training. In raw abstract-graph evaluation, no chemistry-aware gate is enforced, so EPR is reported only for the \textbf{ChemGuard} mode. The same rule is applied at test time, where triggered molecules that fail \textbf{ChemGuard} are rejected before evaluation and counted as ASR failures.

\subsection{\textbf{ChemBack}}
\label{sec:chemback}

\textbf{ChemBack} tests whether effective molecular backdoors remain possible after \textbf{ChemGuard} removes invalid or graph-string inconsistent poisons. It imposes two requirements. Poisoned molecules must be \emph{chemically admissible}, meaning sanitizable and graph-string consistent. Admitted poisons should also be \emph{target-aligned}, meaning structurally close to clean target-class molecules under chemistry-facing fingerprints. This alignment helps the trigger induce the target label while remaining less outlying under the diagnostics considered in this work. Moreover, \textbf{ChemBack} is model-free during trigger selection, using only molecular structures, target labels, public chemistry-validity checks, fingerprints, and Tanimoto similarity. Clean reference models are used only post-hoc for diagnostics.

\textbf{Definition 2 (\textbf{ChemBack} attack).}
Let $D_{\mathrm{avail}}$ be the molecular records available to the attacker before admission, let $y_t\in\mathcal{Y}$ be the attacker-specified target label, and let $\mathrm{ChemGuard}_{\mathcal{T}}$ be the admission protocol induced by toolkit $\mathcal{T}$.
\textbf{ChemBack} is an admission-aware poisoning transformation $\tau=(\tau_s,\tau_G)$ that maps a clean non-target molecule $(\tilde{s},\tilde{G},y)$ with $y\neq y_t$ to a poisoned record
\[
(\tilde{s}_p,\tilde{G}_p,y_t)
=
\big(\tau_s(\tilde{s}),\tau_G(\tilde{G}),y_t\big),
\qquad
\mathrm{ChemGuard}_{\mathcal{T}}(\tilde{s}_p,\tilde{G}_p)=1.
\]
The trigger rule $\tau$ is constructed from \textbf{ChemGuard}-admissible motif attachments selected by fingerprint-based Tanimoto similarity to clean target-class molecules.
At test time, the same $\tau$ is applied to non-target molecules, and any triggered record that fails \textbf{ChemGuard} is counted as an attack failure under operational ASR.

\textbf{Design overview.}
To instantiate $\tau$, \textbf{ChemBack} mines candidate molecular motifs, enumerates feasible host and motif anchors, and attempts single-bond motif attachments. Each candidate is sanitized, canonicalized, reconstructed as a molecular graph, and checked by \textbf{ChemGuard}. Among admitted candidates, \textbf{ChemBack} selects the trigger using fingerprint-based Tanimoto similarity to clean target-class molecules. Unlike representation-guided graph backdoors, this score is computed from deterministic molecular fingerprints. Figure~\ref{fig:chemback} shows the pipeline.

\begin{figure}[t]
\centering
\includegraphics[width=0.9\linewidth,keepaspectratio=true]{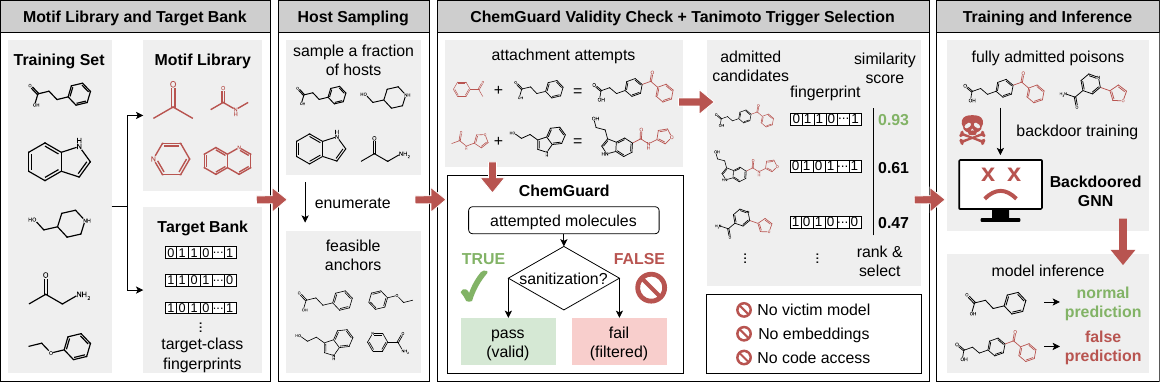}
\caption{\textbf{Overview of \textbf{ChemBack} under \textbf{ChemGuard}.} \textbf{ChemBack} forms a trigger library from candidate motifs, attaches them to sampled non-target hosts, and filters feasible motif-anchor attachments with \textbf{ChemGuard} for sanitization and graph-string consistency. It then selects admitted triggers by fingerprint-based Tanimoto similarity to clean target-class molecules. The selected trigger produces \textbf{ChemGuard}-admissible training poisons and is reused for test-time realization. Selection uses no victim model or surrogate GNN.}
\label{fig:chemback}
\vspace{-10pt}
\end{figure}

\textbf{Chemically constrained trigger space.}
\textbf{ChemBack} constructs a motif library $\mathcal{M}=\{m_k\}_{k=1}^{K}$, where each motif $m_k$ is a chemically meaningful substructure mined from available molecular data. Motifs may be selected to be rare under the training distribution to reduce accidental trigger occurrence, but rarity is not treated as a stealth guarantee. The final trigger is selected by chemical admissibility under \textbf{ChemGuard} and target-class structural alignment under molecular fingerprints.

Given a host molecule $G=(V,E)$, \textbf{ChemBack} restricts host anchors to atoms that can support an additional chemically valid bond:
\[
\mathcal{A}(G)=\{i\in V:\text{atom }i\text{ admits an additional chemically valid bond}\}.
\]
For each motif $m_k$, it also enumerates feasible motif-side anchors.
An action is $a=(k,i,j)$, where $k$ selects the motif, $i$ the host anchor, and $j$ the motif anchor. The attachment operator $\mathrm{Att}_{\mathcal{T}}(G,m_k,i,j)$ connects the host and motif through a valid bond, sanitizes the result, canonicalizes its molecular string, and reconstructs the corresponding graph.
If the edit violates chemical constraints or fails \textbf{ChemGuard}, the operator returns $\bot$; otherwise, it returns $(\tilde{s}_p, \tilde{G}_p)$ with $\mathrm{ChemGuard}_{\mathcal{T}}(\tilde{s}_p,\tilde{G}_p)=1$.

\textbf{Target fingerprint bank.}
To select target-aligned triggers without a victim or surrogate model, \textbf{ChemBack} constructs
\begin{equation}
B_t =
\left\{
\mathrm{FP}\!\left(\phi_{\mathcal{T}}(\tilde{s}_i)\right):
(\tilde{s}_i,\tilde{G}_i,y_i)\in D_{\mathrm{avail}},
\ y_i=y_t,
\ \mathrm{ChemGuard}_{\mathcal{T}}(\tilde{s}_i,\tilde{G}_i)=1
\right\}.
\label{eq:target-bank}
\end{equation}
Here, $\mathrm{FP}:\mathcal{G}_{\mathrm{valid}}\rightarrow\{0,1\}^d$ is a molecular fingerprint function. Unless otherwise stated, we use Morgan fingerprints with radius 2 and 2048 bits~\cite{rogers2010extended}.
For binary fingerprints $u,v\in\{0,1\}^d$, their Tanimoto similarity is $\mathrm{Tan}(u,v)= \frac{|u\wedge v|}{|u\vee v|}.$ This provides a deterministic, model-free estimate of target-class structural similarity~\cite{bajusz2015tanimoto}.

\textbf{Tanimoto-guided trigger selection.}
For each admitted candidate $G_{\mathrm{poi}}$, \textbf{ChemBack} computes $S_{\mathrm{Tan}}(\tilde{G}_p)=\max_{z\in B_t}\mathrm{Tan}(\mathrm{FP}(\tilde{G}_p),z)$. The reward for action $a=(k,i,j)$ on host $G$ is
\begin{equation}
r(G,k,i,j)
=
\begin{cases}
r_{\mathrm{inv}}, & \mathrm{Att}_{\mathcal{T}}(G,m_k,i,j)=\bot,\\[1mm]
1+\lambda_{\mathrm{Tan}}S_{\mathrm{Tan}}(\tilde{G}_p), & \text{otherwise},
\end{cases}
\label{eq:tan-reward}
\end{equation}
where $r_{\mathrm{inv}}<0$ penalizes invalid or inadmissible attachments, and $\lambda_{\mathrm{Tan}}$ controls target-class structural alignment. The constant term rewards admission, while the Tanimoto term prefers admitted molecules structurally similar to clean target-class molecules.

\textbf{Trigger freezing and poisoning.}
A backdoor trigger should be shared across poisoned training samples and test-time triggered inputs. \textbf{ChemBack} therefore searches motif-anchor candidates on a calibration subset of non-target hosts and selects a trigger motif or small trigger family with high admission success and high target-class Tanimoto similarity. It poisons a fraction $\alpha$ of non-target training molecules by applying the selected trigger, retaining only records satisfying $\mathrm{ChemGuard}_{\mathcal{T}}(\tilde{s}_p,\tilde{G}_p)=1$, and assigning label $y_t$. At test time, the same trigger rule is applied to non-target test molecules. If the trigger cannot be realized as a \textbf{ChemGuard}-admissible molecular record, that attempt is counted as a failure under operational ASR.

\textbf{Optimization.}
\textbf{ChemBack} performs black-box search over a discrete space of chemistry-admissible motif attachments, including motif choice, host-anchor choice, motif-anchor choice, sanitization, canonicalization, and graph-string consistency checks. This search can be implemented using deterministic, greedy, or reinforcement-learning strategies.
The method is optimizer-agnostic and depends only on \textbf{ChemGuard}-admissible construction and Tanimoto-guided target alignment.

\section{Experiments}
\label{sec:experiments}


\subsection{Experimental Setup}
\label{sec:main-setup}

\textbf{Datasets.}
We evaluate on four MoleculeNet benchmarks, including \textbf{BBBP}~\cite{martins2012bayesian}, \textbf{BACE}~\cite{subramanian2016computational}, \textbf{SIDER}~\cite{kuhn2016sider}, and \textbf{Tox21}~\cite{huang2016tox21challenge}.
BBBP and BACE are single-task binary classification datasets.
SIDER and Tox21 are multi-task benchmarks with 27 and 12 tasks, respectively. For SIDER and Tox21, we run all tasks and report macro-averaged results in the main text. Task-wise results are deferred to Appendix~\ref{app:taskwise-asr-chemback}. Additional tests on larger benchmarks, pretrain-finetune pipelines, alternative chemistry validators, and additional structural diagnostics are reported in Appendix~\ref{app:scale-pretrain} and Appendix~\ref{app:structural-diagnostics}.

\textbf{Hyperparameters.}
Following MoleculeNet split policies, we use scaffold splits for BBBP and BACE, and random splits for SIDER and Tox21. Unless otherwise stated, experiments are repeated over $S=5$ fixed seeds, and values are reported as mean$\pm$std. The default poison rate is $\alpha=10\%$. A full sweep over $\alpha\in\{1,5,10\}\%$ is provided in Appendix~\ref{app:sensitivity}.

\textbf{Evaluation modes.}
For each attack, we report two evaluation modes. \textbf{(i) Raw} applies no chemistry gate and treats all generated graph edits as admissible. \textbf{(ii) ChemGuard} applies \textbf{ChemGuard} before training and again during test-time trigger realization. Invalid or graph-string inconsistent poisons are rejected before training, and triggered test molecules that fail \textbf{ChemGuard} are counted as attack failures. For the metrics, we report \textbf{CA} on unmodified admitted test molecules, \textbf{ASR} on triggered non-target test molecules, and \textbf{EPR} induced by \textbf{ChemGuard}.

\textbf{Victim model.}
The main victim is a two-layer GCN with global mean pooling. We train using Adam with learning rate $10^{-3}$ and batch size $128$.
Additional details are provided in Appendix~\ref{app:training}.

\textbf{Attacks compared.}
We compare \textbf{ChemBack} against four representative graph backdoor attacks, including \textbf{GTA}, \textbf{MB}, \textbf{UGBA}, and \textbf{DPGBA}. To ensure a fair comparison under chemistry-aware pipelines, we equip all baselines with a chemistry-aware wrapper that attempts to realize their intended graph edits as valid molecular modifications. The wrapper selects host anchors with available valence, assigns chemically consistent bonds, re-sanitizes the edited molecule, and rejects candidates that fail sanitization or graph-string topology consistency.

\subsection{Experimental Results}
\label{sec:main-exp}

\textbf{Main comparison under chemistry-aware admission.}
Table~\ref{tab:main-10} shows that graph-only methods admit only a small fraction of their attempted poisons under \textbf{ChemGuard}.
Their operational ASR also drops after invalid poisons and invalid test-time trigger realizations are filtered. In contrast, \textbf{ChemBack} preserves EPR at $100.00\%$ on all datasets and maintains high ASR while keeping CA close to the clean reference. This demonstrates the central two-sided result that chemistry-aware admission weakens graph-only attacks, while chemically valid and target-aligned molecular backdoors remain effective. In a few cases, \textbf{ChemGuard} CA slightly exceeds Raw CA because invalid records are removed from the effective training distribution.

\begin{table}[t]
  \centering
  \caption{CA/ASR (\%) without and with \textbf{ChemGuard} at $\alpha=10\%$.
  EPR (\%) is the fraction of attempted poisons admitted by the \textbf{ChemGuard} pipeline.
  Values are mean$\pm$std over 5 seeds.}
  \label{tab:main-10}
  \setlength{\tabcolsep}{8pt}
  \footnotesize
  \begin{tabular}{@{}ccccccc@{}}
\toprule
\multirow{2}{*}{\textbf{Dataset}} &
\multirow{2}{*}{\textbf{Attack}} &
\multicolumn{2}{c}{\textbf{w/o \textbf{ChemGuard}}} &
\multicolumn{3}{c}{\textbf{with \textbf{ChemGuard}}} \\
\cmidrule(lr){3-4}
\cmidrule(lr){5-7}
& &
\textbf{CA (\%) $\uparrow$} &
\textbf{ASR (\%) $\uparrow$} &
\textbf{CA (\%) $\uparrow$} &
\textbf{ASR (\%) $\uparrow$} &
\textbf{EPR (\%) $\uparrow$} \\
\midrule

\multirow{6}{*}{\textbf{BBBP}}
& No-Attack         & \mstd{86.34}{0.43} & - & \mstd{86.34}{0.43} & - & - \\
& GTA               & \mstd{79.72}{0.63} & \bmstd{73.16}{1.24} & \mstd{84.23}{0.46} & \mstd{54.47}{0.56} & \mstd{8.14}{12.83} \\
& MB                & \mstd{80.43}{0.64} & \mstd{72.18}{1.36} & \mstd{84.35}{0.63} & \mstd{46.27}{8.24} & \mstd{11.24}{11.37} \\
& UGBA              & \mstd{82.14}{0.53} & \mstd{60.67}{1.16} & \mstd{83.94}{0.34} & \mstd{47.83}{3.86} & \mstd{13.16}{13.58} \\
& DPGBA             & \mstd{81.93}{0.65} & \mstd{69.24}{1.13} & \mstd{84.27}{0.84} & \mstd{52.86}{3.36} & \mstd{10.06}{10.87} \\
& \textbf{ChemBack} & \mstd{81.08}{0.52} & \mstd{69.36}{1.14} & \mstd{81.08}{0.52} & \bmstd{69.36}{1.14} & \bmstd{100.00}{0.00} \\
\midrule

\multirow{6}{*}{\textbf{BACE}}
& No-Attack         & \mstd{71.64}{0.53} & - & \mstd{71.64}{0.53} & - & - \\
& GTA               & \mstd{67.34}{0.54} & \mstd{47.96}{1.14} & \mstd{71.32}{2.47} & \mstd{27.46}{2.14} & \mstd{15.17}{13.24} \\
& MB                & \mstd{70.04}{0.53} & \mstd{82.67}{1.54} & \mstd{71.06}{1.94} & \mstd{13.38}{8.74} & \mstd{17.43}{10.74} \\
& UGBA              & \mstd{72.64}{0.63} & \mstd{67.18}{1.24} & \mstd{70.74}{0.34} & \mstd{15.37}{3.34} & \mstd{10.34}{12.14} \\
& DPGBA             & \mstd{69.36}{0.54} & \mstd{55.74}{1.23} & \mstd{70.24}{1.24} & \mstd{16.47}{4.54} & \mstd{10.94}{13.96} \\
& \textbf{ChemBack} & \mstd{70.63}{0.72} & \bmstd{98.86}{0.43} & \mstd{70.63}{0.72} & \bmstd{98.86}{0.43} & \bmstd{100.00}{0.00} \\
\midrule

\multirow{6}{*}{\textbf{SIDER}}
& No-Attack         & \mstd{63.34}{0.74} & - & \mstd{63.34}{0.74} & - & - \\
& GTA               & \mstd{60.83}{0.74} & \mstd{66.47}{1.43} & \mstd{60.74}{0.54} & \mstd{49.96}{7.84} & \mstd{11.13}{11.86} \\
& MB                & \mstd{64.73}{0.73} & \mstd{83.28}{1.46} & \mstd{60.07}{1.46} & \mstd{51.17}{3.86} & \mstd{7.08}{12.47} \\
& UGBA              & \mstd{59.47}{0.64} & \mstd{86.18}{1.74} & \mstd{60.16}{0.94} & \mstd{53.07}{4.36} & \mstd{8.57}{10.26} \\
& DPGBA             & \mstd{62.94}{0.63} & \mstd{49.67}{1.04} & \mstd{60.84}{0.34} & \mstd{45.27}{4.46} & \mstd{5.47}{12.94} \\
& \textbf{ChemBack} & \mstd{60.84}{0.63} & \bmstd{99.17}{0.34} & \mstd{60.84}{0.63} & \bmstd{99.17}{0.34} & \bmstd{100.00}{0.00} \\
\midrule

\multirow{6}{*}{\textbf{Tox21}}
& No-Attack         & \mstd{97.24}{0.34} & - & \mstd{97.24}{0.34} & - & -
\\
& GTA               & \mstd{96.43}{0.43} & \mstd{42.78}{1.29}  & \mstd{97.14}{0.24} & \mstd{16.73}{2.18}  & \mstd{23.68}{14.08} \\
& MB                & \mstd{96.94}{0.34} & \mstd{96.37}{0.63} & \mstd{97.23}{0.13} & \mstd{24.61}{3.57}  & \mstd{25.17}{11.23} \\
& UGBA              & \mstd{96.53}{0.33} & \bmstd{97.06}{0.54} & \mstd{97.26}{0.34} & \mstd{22.84}{2.46} & \mstd{24.36}{12.84} \\
& DPGBA             & \mstd{96.56}{0.43} & \mstd{52.76}{0.94} & \mstd{97.24}{0.23} & \mstd{18.49}{2.69}  & \mstd{25.92}{10.58} \\
& \textbf{ChemBack} & \mstd{96.34}{0.43} & \mstd{81.84}{0.84} & \mstd{96.34}{0.43} & \bmstd{81.84}{0.84} & \bmstd{100.00}{0.00} \\
\bottomrule
\end{tabular}
\end{table}

\begin{figure}[t]
  \centering
  \includegraphics[width=0.7\linewidth]{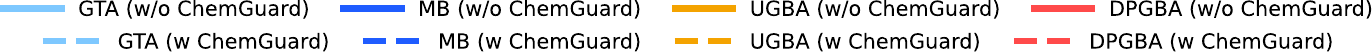}

  \vspace{0.2cm}

  \begin{subfigure}[t]{0.23\linewidth}
    \centering
    \includegraphics[width=\linewidth]{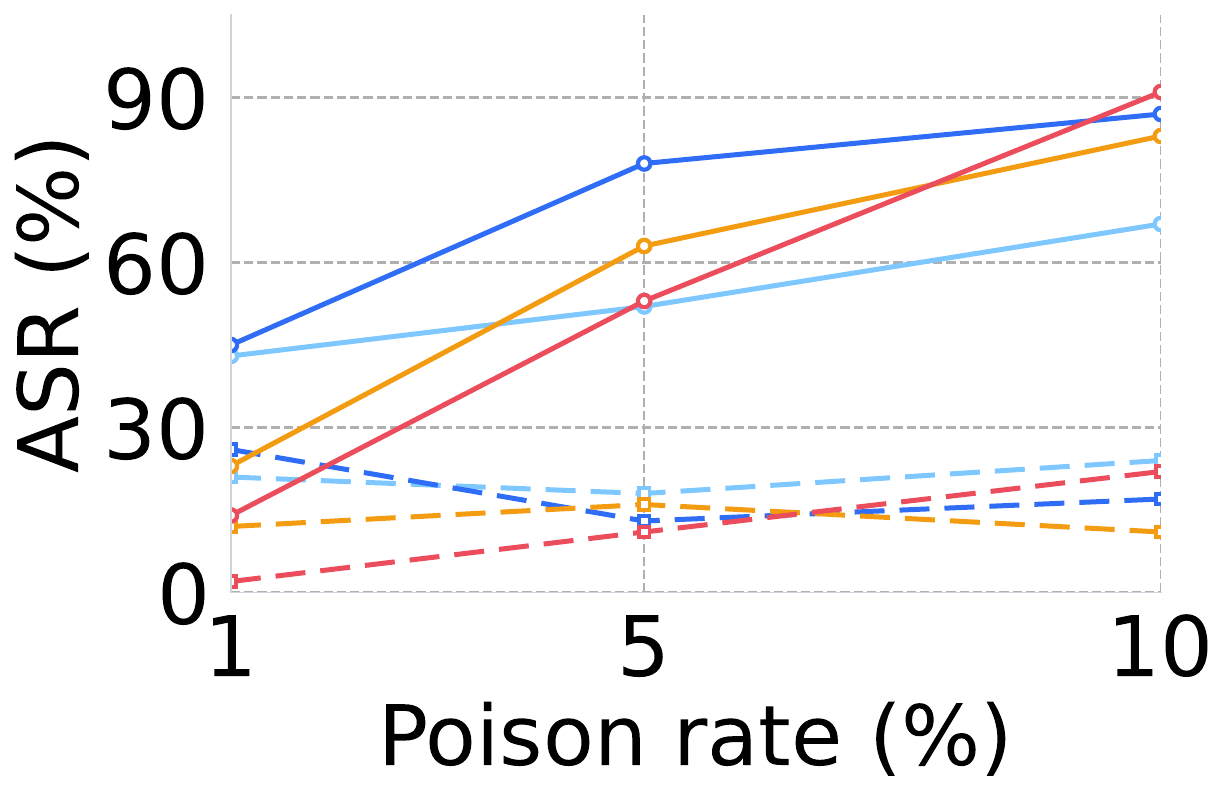}
    \caption{BBBP.}
  \end{subfigure}
  \hfill
  \begin{subfigure}[t]{0.23\linewidth}
    \centering
    \includegraphics[width=\linewidth]{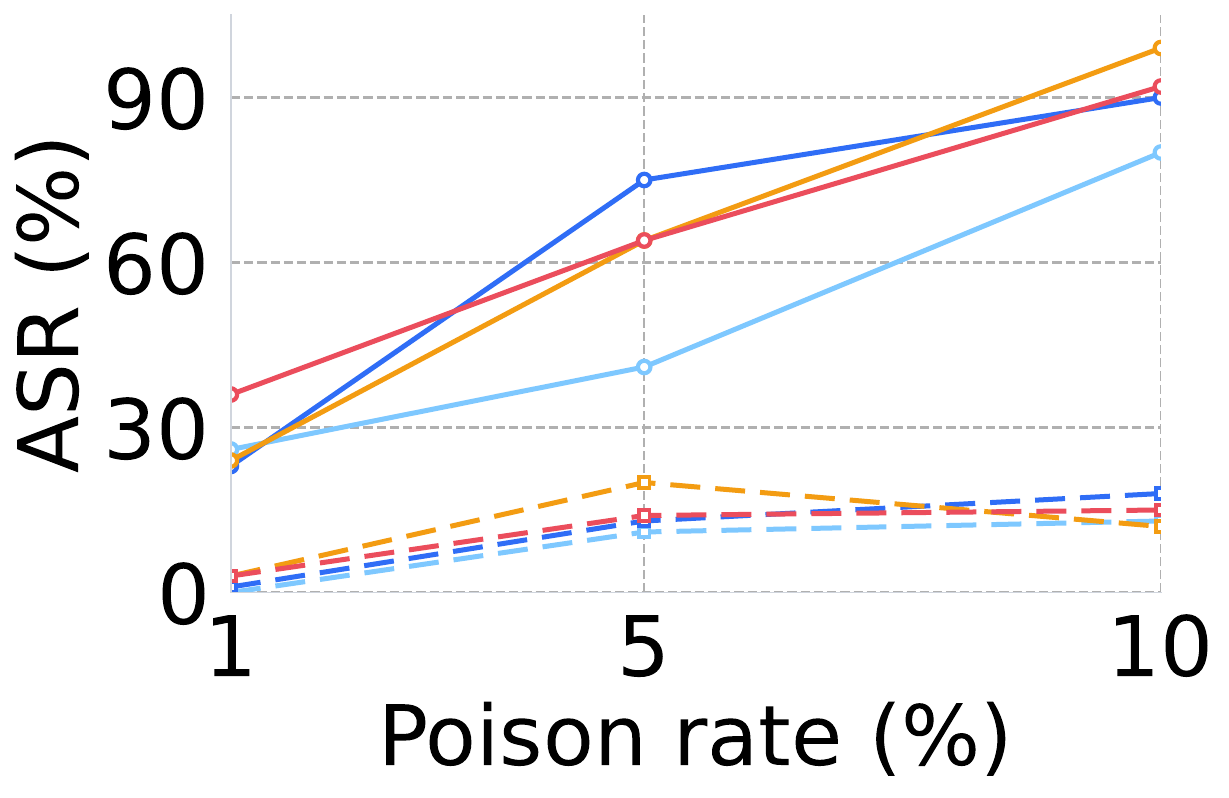}
    \caption{BACE.}
  \end{subfigure}
  \hfill
  \begin{subfigure}[t]{0.23\linewidth}
    \centering
    \includegraphics[width=\linewidth]{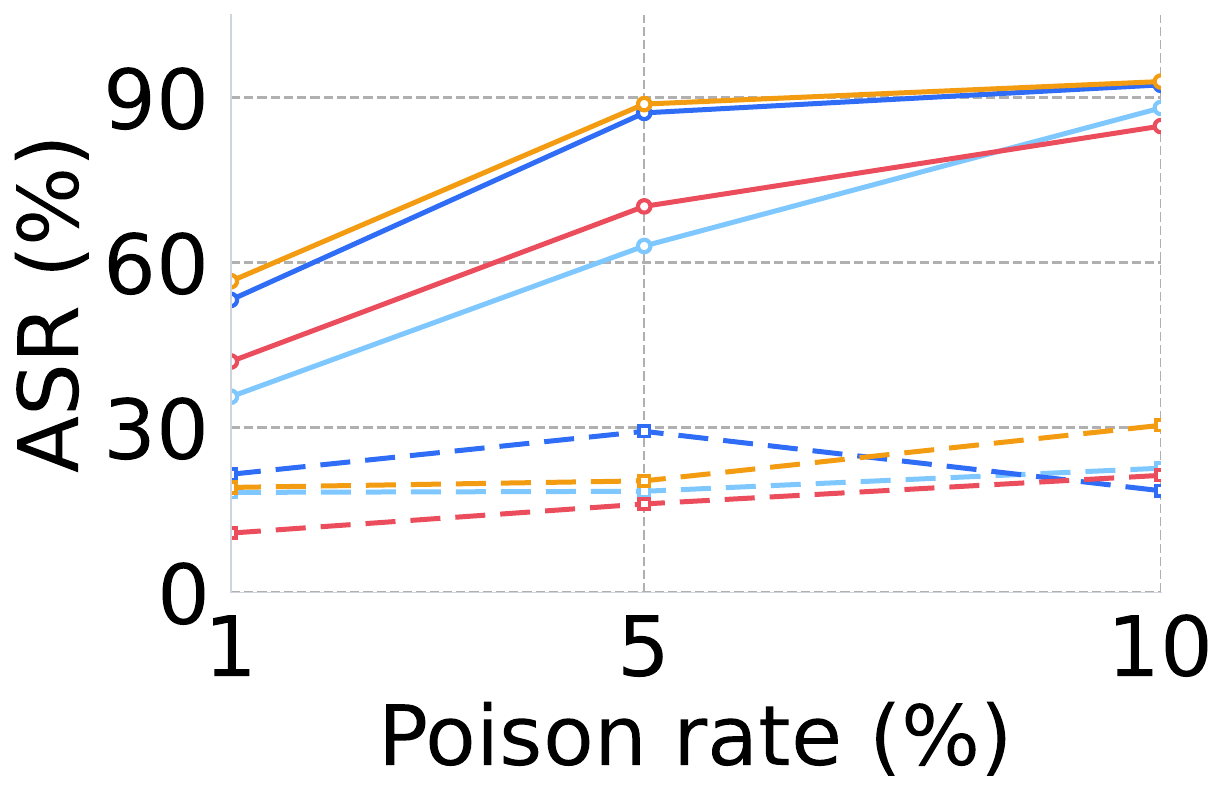}
    \caption{SIDER.}
  \end{subfigure}
  \hfill
  \begin{subfigure}[t]{0.23\linewidth}
    \centering
    \includegraphics[width=\linewidth]{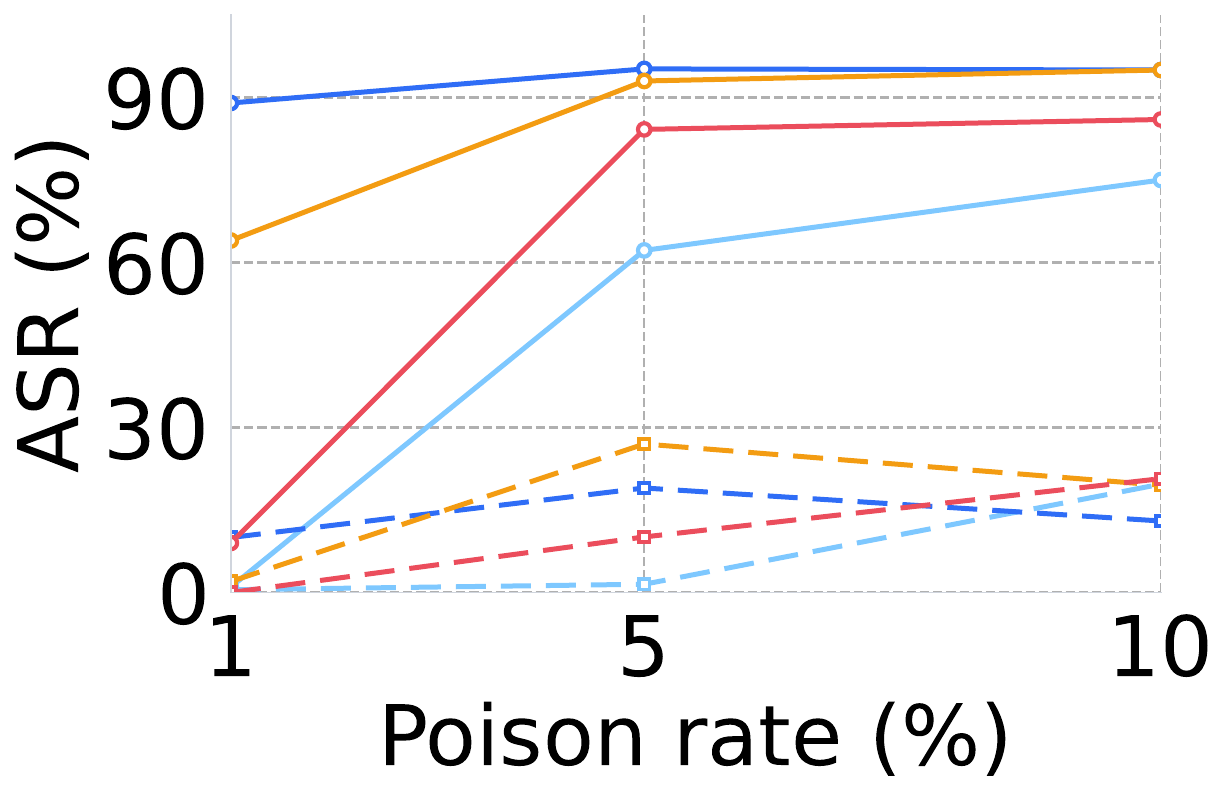}
    \caption{Tox21.}
  \end{subfigure}
  \caption{Operational ASR before and after enforcing \textbf{ChemGuard} as the poison rate $\alpha\in\{1,5,10\}\%$ varies.
  Chemistry-aware admission reduces ASR for graph-only methods by filtering invalid poisons and invalid test-time trigger realizations.}
  \label{fig:rq2-asr}
  \vspace{-10pt}
\end{figure}
\begin{figure*}[t]
\centering
\includegraphics[width=\linewidth]{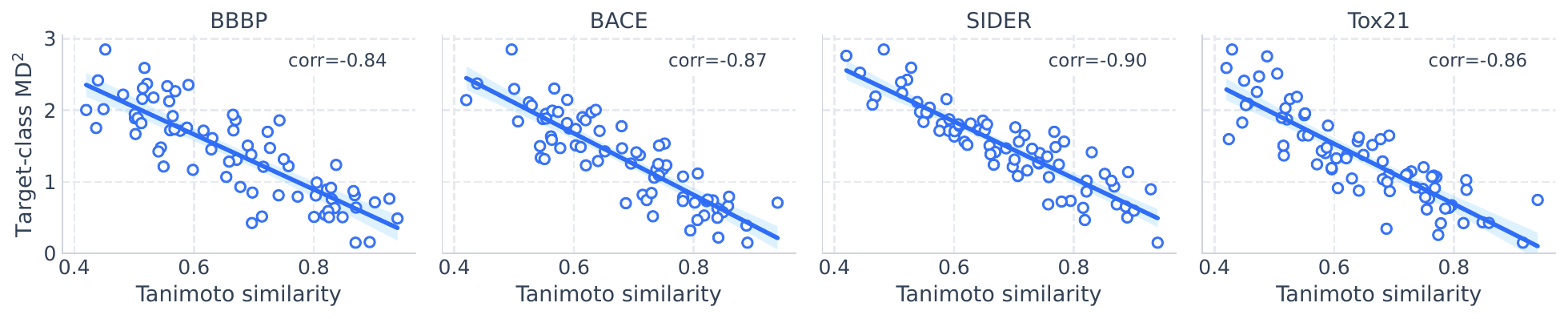}
\caption{Relation between Tanimoto similarity to the clean target class and clean-model target-class MD$^2$. Across datasets, higher Tanimoto similarity aligns with lower MD$^2$.}
\label{fig:tanimoto-md2}
\vspace{-10pt}
\end{figure*}

\textbf{Attack budget trends.}
Figure~\ref{fig:rq2-asr} studies how operational ASR changes with poison rate. Enforcing \textbf{ChemGuard} consistently lowers the realized efficacy of graph-only baselines across all datasets.
This trend is consistent with the low EPR values in Table~\ref{tab:main-10}, which limit the attacker's intended signal that reaches the learner. By contrast, \textbf{ChemBack} maintains stable operational performance because its poisons are designed to be admitted by the chemistry-aware pipeline.

\textbf{Why does Tanimoto work?}
\textbf{ChemBack} selects \textbf{ChemGuard}-admissible triggers without a victim model or proxy GNN, using only Tanimoto similarity to clean target-class molecules. To explain this model-free score, we train a clean reference model only after trigger generation and compute target-class $\mathrm{MD}^2$ in its embedding space. Figure~\ref{fig:tanimoto-md2} shows that higher Tanimoto similarity corresponds to lower clean-model $\mathrm{MD}^2$ across all four datasets, indicating that structurally target-similar poisons lie closer to the clean target region. This diagnostic is not used by the attacker or trigger-generation algorithm. Additional descriptor-level analyses are in Appendix~\ref{app:structural-diagnostics}.

\section{Discussion}
\label{sec:discussion}

\begin{table}[t]
\centering
\caption{
Defense comparison on BBBP/SIDER at $\alpha=10\%$.
Values report ASR (\%) over 5 seeds.
}
\label{tab:defences-main}
\scriptsize
\setlength{\tabcolsep}{2.5pt}
\resizebox{0.85\linewidth}{!}{
\begin{tabular}{cccccccc}
\toprule
\textbf{Dataset} & \textbf{Attack} & \textbf{Raw} & \textbf{Spectral} & \textbf{DShield} & \textbf{RGCN} & \textbf{RIGBD} & \textbf{PGNNCert} \\
\midrule
\multirow{5}{*}{\textbf{BBBP}}
& GTA
& \mstd{73.16}{1.24}
& \mstd{48.93}{1.47}
& \mstd{42.17}{1.63}
& \mstd{45.86}{1.58}
& \mstd{37.42}{1.79}
& \mstd{36.94}{1.56} \\
& MB
& \mstd{72.18}{1.36}
& \mstd{50.26}{4.87}
& \mstd{43.72}{4.59}
& \mstd{46.83}{4.28}
& \mstd{35.61}{4.36}
& \mstd{46.74}{4.91} \\
& UGBA
& \mstd{60.67}{1.16}
& \mstd{59.38}{1.92}
& \mstd{56.21}{2.38}
& \mstd{57.46}{2.14}
& \mstd{34.87}{2.41}
& \mstd{35.24}{2.19} \\
& DPGBA
& \mstd{69.24}{1.13}
& \mstd{68.16}{2.21}
& \mstd{65.47}{2.36}
& \mstd{66.83}{2.17}
& \mstd{48.92}{2.53}
& \mstd{49.31}{2.47} \\
& \textbf{ChemBack}
& \bmstd{69.36}{1.14}
& \bmstd{68.74}{1.39}
& \bmstd{66.58}{1.72}
& \bmstd{67.13}{1.46}
& \bmstd{61.37}{1.85}
& \bmstd{68.28}{1.32} \\
\midrule
\multirow{5}{*}{\textbf{SIDER}}
& GTA
& \mstd{66.47}{1.43}
& \mstd{45.36}{1.57}
& \mstd{40.78}{1.49}
& \mstd{42.19}{1.68}
& \mstd{36.74}{1.53}
& \mstd{38.21}{1.64} \\
& MB
& \mstd{83.28}{1.46}
& \mstd{63.17}{1.89}
& \mstd{58.42}{1.73}
& \mstd{55.83}{2.14}
& \mstd{40.69}{2.85}
& \mstd{41.24}{2.37} \\
& UGBA
& \mstd{86.18}{1.74}
& \mstd{84.97}{1.82}
& \mstd{82.36}{1.69}
& \mstd{82.74}{1.51}
& \mstd{41.28}{3.17}
& \mstd{80.37}{1.58} \\
& DPGBA
& \mstd{49.67}{1.04}
& \mstd{48.72}{1.16}
& \mstd{47.16}{1.09}
& \mstd{48.31}{1.03}
& \mstd{44.26}{3.28}
& \mstd{48.86}{1.12} \\
& \textbf{ChemBack}
& \bmstd{99.17}{0.34}
& \bmstd{98.52}{0.61}
& \bmstd{96.39}{0.82}
& \bmstd{96.87}{0.94}
& \bmstd{92.74}{1.46}
& \bmstd{93.46}{1.21} \\
\bottomrule
\end{tabular}
}
\vspace{-10pt}
\end{table}

\textbf{Defenses beyond \textbf{ChemGuard}.}
We separate defenses by pipeline stage. \textbf{ChemGuard} is an admission-stage chemistry check applied before training. Spectral Signatures~\cite{tran2018spectral}, DShield~\cite{yu2025dshield}, and RIGBD~\cite{zhang2024robustness} are post-hoc defenses. RGCN~\cite{zhu2019robust} is a robust GNN architecture, while PGNNCert~\cite{li2025deterministic} is a certified graph poisoning defense. GNNGuard~\cite{zhang2020gnnguard} and Pro-GNN~\cite{jin2020graph} are reported in Appendix~\ref{app:model-level-defenses}. These defenses are complementary, not interchangeable. \textbf{ChemGuard} controls admission, while post-hoc and model-level defenses operate after poisoned records enter learning. This matters because UGBA, DPGBA, and \textbf{ChemBack} reduce outlier detectability through feature, distributional, or Tanimoto-based target alignment. Table~\ref{tab:defences-main} shows that these defenses reduce ASR, especially for graph-only baselines, but do not remove \textbf{ChemBack} once poisons enter training. This supports a staged defense view, where \textbf{ChemGuard} reduces the admitted attack surface before learning, and later defenses suppress learned backdoor behavior after admission.

\textbf{Component ablation.}
Table~\ref{tab:main-ablation} ablates Tanimoto-based target alignment and validity-oriented attachment pre-screening. These components address different failure modes. Removing Tanimoto similarity barely changes vanilla ASR under \textbf{ChemGuard}, since a valid trigger can still be learned when paired with the target label, but greatly increases vulnerability to Spectral Signatures. Thus, Tanimoto similarity mainly acts as a model-free target-neighborhood alignment signal that improves robustness to outlier-based filtering. In contrast, the ``w/o validity pre-screening'' variant still applies final \textbf{ChemGuard} admission before poison submission, so EPR remains $100.00\%$ and ASR stays close to the full method. Validity pre-screening mainly stabilizes poison construction by reducing invalid internal attachment attempts. Overall, full \textbf{ChemBack} gives the best balance among clean utility, attack success, admitted poisoning, and robustness under Spectral Signatures.

\begin{table}[t]
  \centering
  \caption{
  Ablation of \textbf{ChemBack} at $\alpha=10\%$.
  CA/ASR/EPR are evaluated under operational \textbf{ChemGuard}.
  Spectral ASR reports ASR after applying Spectral Signatures to the admitted poisoned training set.
  Values are mean$\pm$std over 5 seeds.
  }
  \label{tab:main-ablation}
  \footnotesize
  \setlength{\tabcolsep}{3.2pt}
  \begin{tabular}{llcccc}
  \toprule
  \textbf{Dataset} &
  \textbf{Variant} &
  \textbf{CA (\%) $\uparrow$} &
  \textbf{ASR (\%) $\uparrow$} &
  \textbf{EPR (\%) $\uparrow$} &
  \textbf{Spectral ASR (\%) $\uparrow$} \\
  \midrule

  \multirow{3}{*}{\textbf{BBBP}}
  & \textbf{ChemBack}
  & \mstd{81.08}{0.52}
  & \bmstd{69.36}{1.14}
  & \mstd{100.00}{0.00}
  & \bmstd{68.74}{1.39} \\
  & w/o Tanimoto similarity
  & \mstd{81.22}{0.44}
  & \mstd{68.91}{1.73}
  & \mstd{100.00}{0.00}
  & \mstd{31.48}{2.86} \\
  & w/o validity pre-screening
  & \mstd{81.16}{0.45}
  & \mstd{69.02}{1.89}
  & \mstd{100.00}{0.00}
  & \mstd{67.06}{1.76} \\
  \midrule

  \multirow{3}{*}{\textbf{BACE}}
  & \textbf{ChemBack}
  & \mstd{70.63}{0.72}
  & \bmstd{98.86}{0.43}
  & \mstd{100.00}{0.00}
  & \bmstd{97.84}{0.76} \\
  & w/o Tanimoto similarity
  & \mstd{70.41}{0.58}
  & \mstd{97.92}{0.91}
  & \mstd{100.00}{0.00}
  & \mstd{42.36}{3.42} \\
  & w/o validity pre-screening
  & \mstd{70.51}{0.61}
  & \mstd{98.34}{0.74}
  & \mstd{100.00}{0.00}
  & \mstd{95.81}{1.03} \\
  \midrule

  \multirow{3}{*}{\textbf{SIDER}}
  & \textbf{ChemBack}
  & \mstd{60.84}{0.63}
  & \bmstd{99.17}{0.34}
  & \mstd{100.00}{0.00}
  & \bmstd{98.52}{0.61} \\
  & w/o Tanimoto similarity
  & \mstd{60.91}{0.61}
  & \mstd{98.74}{0.66}
  & \mstd{100.00}{0.00}
  & \mstd{39.27}{2.75} \\
  & w/o validity pre-screening
  & \mstd{60.77}{0.58}
  & \mstd{98.96}{0.52}
  & \mstd{100.00}{0.00}
  & \mstd{97.12}{0.68} \\
  \midrule

  \multirow{3}{*}{\textbf{Tox21}}
  & \textbf{ChemBack}
  & \mstd{96.34}{0.43}
  & \bmstd{81.84}{0.84}
  & \mstd{100.00}{0.00}
  & \bmstd{81.12}{0.91} \\
  & w/o Tanimoto similarity
  & \mstd{96.41}{0.31}
  & \mstd{80.96}{1.52}
  & \mstd{100.00}{0.00}
  & \mstd{28.64}{2.31} \\
  & w/o validity pre-screening
  & \mstd{96.28}{0.35}
  & \mstd{81.27}{1.61}
  & \mstd{100.00}{0.00}
  & \mstd{79.06}{1.12} \\
  \bottomrule
  \end{tabular}
\end{table}

\textbf{Additional analyses.}
Supporting analyses are included in the Appendix, including theoretical analysis, sensitivity to poison rate and Tanimoto guidance, robustness across additional victims, larger-scale and pretrain-finetune tests, alternative chemistry validators, and extended structural diagnostics. Together, these reinforce the same conclusion that chemistry-aware admission removes much of the apparent efficacy of graph-only baselines, whereas \textbf{ChemBack} remains effective.

\textbf{Limitations and Scope.}
Our claims are limited to the admission-aware setting studied in this paper.
\textbf{ChemGuard} is a first-layer admission check, not a complete defense or a robustness certificate.
It checks molecular sanitization and graph-string consistency, but it does not cover all practical filters such as synthetic accessibility, medicinal-chemistry rules, assay-specific curation, temporal shifts, or expert review.
Tanimoto similarity and our post-hoc MD$^2$/descriptor analyses show that the generated poisons are less outlying under the evaluated diagnostics, but they do not guarantee universal stealth or biological equivalence.
Finally, our current trigger space uses controlled single-step motif attachments; richer reaction-like transformations may expand the attack surface.
We discuss these technical limitations in Appendix~\ref{app:limitations}.


\section{Conclusion}
\label{sec:conclusion}

This paper shows that realistic molecular preprocessing changes both backdoor evaluation and attack design. We introduce \textbf{ChemGuard}, an operational admission protocol that enforces sanitization and graph-string consistency before training or inference. Under \textbf{ChemGuard}, many graph-only backdoors lose effectiveness because invalid poisons are rejected or fail at test time. We then propose \textbf{ChemBack}, a model-free, admission-aware attack that builds chemically feasible motif triggers and selects admitted candidates by Tanimoto similarity to the clean target class. Across benchmarks, architectures, defenses, larger datasets, and pretrain-finetune settings, \textbf{ChemBack} preserves clean utility while maintaining strong attack success. Overall, chemistry-aware admission weakens graph-only attacks, but valid and target-aligned molecular backdoors remain possible.

\newpage

{
    \small
    \bibliographystyle{unsrtnat}
    \bibliography{references}
}

\newpage
\appendix
\onecolumn

\section{Real-world Motivating Scenario}
\label{app:scenario}

To ground the threat model, consider an organization that uses a centralized molecular classifier to screen third-party chemical submissions before downstream approval. For example, a retailer may use a molecular model to estimate skin irritation risk for cosmetic ingredients, or a screening laboratory may use a molecular property predictor to prioritize compounds for follow-up assays.
In such settings, training data are often aggregated from multiple suppliers, historical archives, public databases, or external collaborators. This creates a realistic data-poisoning surface in which a malicious contributor may not control the victim model or training code, but may be able to inject a small fraction of labeled molecular records into the training corpus.

A chemistry-aware backdoor in this setting would not rely on malformed molecules. Instead, the attacker introduces molecules that share a chemically valid trigger substructure and consistently labels them as the target class, such as a low-risk class or another attacker-preferred endpoint. Because the trigger is chemically realizable, the poisoned records can pass routine parsing, sanitization, canonicalization, and graph-string consistency checks.
After the model is trained on the contaminated dataset, future molecules containing the same trigger can be admitted by the preprocessing pipeline and systematically mapped to the attacker's target label.

In this scenario, backdoor behavior does not appear as a preprocessing failure or malformed molecular graph. It appears as repeated prediction errors on molecules that look chemically ordinary under standard validity checks. Practitioners may attribute such errors to model noise, label ambiguity, assay variation, or distribution shift rather than to a targeted backdoor. This motivates studying backdoor attacks under an admission-stage chemistry-aware pipeline, shifting the question from whether a graph edit can fool a GNN to whether the edited molecule can survive the chemical preprocessing used in realistic molecular learning workflows.

\section{Positioning against prior molecular backdoors.}
\label{app:position}
Table~\ref{tab:conceptual-comparison} summarizes how \textbf{ChemBack} differs from prior graph and motif backdoors.
The key distinction is not merely the use of motifs, but the admission-aware setting in which poisoned records must be chemically realizable, sanitizable, graph-string consistent, and counted as failures when they cannot enter the training or test-time pipeline.
Thus, \textbf{ChemBack} targets the operational gap left by abstract-graph evaluations.

\section{Dataset Details and Split Protocol}
\label{app:datasets}

Table~\ref{tab:datasets} summarizes the datasets used in the main experiments and additional stress tests. The main experiments use four MoleculeNet benchmarks: BBBP, BACE, SIDER, and Tox21~\cite{wu2018moleculenet}. Following the dataset-specific MoleculeNet protocol, we use scaffold splits for BBBP and BACE, and random splits for SIDER and Tox21. This follows the original benchmark design. Scaffold splitting groups molecules by their two-dimensional structural frameworks and provides a stronger test of chemical generalization than random splitting, while random splitting follows the standard MoleculeNet protocol for the corresponding multi-task physiology benchmarks~\cite{wu2018moleculenet}. Unless otherwise stated, datasets are split into train, validation, and test subsets using the standard MoleculeNet ratio of 80/10/10.

For multi-task datasets, missing labels are ignored for the corresponding task. For SIDER and Tox21, we evaluate the full task panel and report macro-averaged CA, ASR, and EPR in the main text.
For the larger-scale stress tests, we use PCBA~\cite{wang2012pubchem} and MUV~\cite{rohrer2009maximum}. PCBA follows the MoleculeNet random-split convention, while MUV is evaluated under the split protocol specified in Appendix~\ref{app:scale-pretrain} because split conventions differ across MoleculeNet/DeepChem implementations.

\begin{table}[t]
\centering
\caption{
Conceptual comparison between prior graph/motif backdoors and \textbf{ChemBack}.
\textbf{ChemGuard} is an operational admission protocol, not a complete defense.
}
\label{tab:conceptual-comparison}
\scriptsize
\setlength{\tabcolsep}{2.5pt}
\renewcommand{\arraystretch}{0.92}
\resizebox{\linewidth}{!}{
\begin{tabular}{lccc}
\toprule
\textbf{Property} & \textbf{Graph backdoors} & \textbf{Motif-Backdoor} & \textbf{ChemBack} \\
\midrule
Trigger type & Abstract subgraph edit & Motif trigger & Feasible motif attachment \\
Chemistry-aware attachment & No & Not admission-driven & Yes \\
Sanitization before training & No & Not explicit & Yes \\
Graph-string consistency & No & Not explicit & Yes \\
Admission-stage EPR & No & No & Yes \\
Test-time admission accounting & No & No & Yes \\
Invalid trigger counted as failure & No & No & Yes \\
Model-free trigger selection & Varies & Yes & Yes \\
Target-class structural alignment & No / model-dependent & Not central & Tanimoto-guided \\
Survives realistic preprocessing & No & Not explicit & Yes \\
\bottomrule
\end{tabular}
}
\vspace{-8pt}
\end{table}
\begin{table}[t]
  \centering
  \caption{Dataset summary and split protocols. The main text reports BBBP, BACE, SIDER, and Tox21. PCBA and MUV are used for larger-scale stress tests in Appendix~\ref{app:scale-pretrain}. Split protocols are stated explicitly to avoid relying on library-specific defaults.}
  \label{tab:datasets}
  \setlength{\tabcolsep}{4pt}
  \footnotesize
  \begin{tabular}{lcccl}
    \toprule
    \textbf{Dataset} &
    \textbf{Task type} &
    \textbf{\#Tasks} &
    \textbf{\#Molecules} &
    \textbf{Split protocol} \\
    \midrule
    BBBP  & Binary classification & 1   & 2,039 & Scaffold split \\
    BACE  & Binary classification & 1   & 1,513 & Scaffold split \\
    SIDER & Multi-label classification & 27 & 1,427 & Random split \\
    Tox21 & Multi-label classification & 12 & 7,831 & Random split \\
    \midrule
    PCBA  & Multi-label bioassay classification & 128 & 437,929 & Random split \\
    MUV   & Multi-label virtual-screening benchmark & 17 & 93,087 & Random split \\
    \bottomrule
  \end{tabular}
\end{table}

All poisoning operations are performed after the clean data split is fixed. In particular, motif mining, target-class fingerprint-bank construction, trigger selection, and poison insertion use only the training split. The validation and test splits are not used to construct motifs or select triggers. When evaluated, \textbf{ChemGuard} is applied independently to each split. Attempted poisoned training records must pass \textbf{ChemGuard} before entering training, while triggered non-target test molecules that fail \textbf{ChemGuard} are counted as attack failures under the operational ASR definition. This protocol prevents leakage from the held-out test set into trigger construction while keeping the chemistry-aware admission rule consistent across training and evaluation.

\section{Experimental Protocol and Training Details}
\label{app:training}

\subsection{Raw and \textbf{ChemGuard} Evaluation Modes}
\label{app:raw-cg-modes}

We evaluate attacks under two modes.

\textbf{Raw mode.}
Raw mode follows the abstract-graph evaluation protocol commonly used in graph backdoor studies. Generated graph edits are treated as admissible, even if they do not correspond to a chemically valid molecule or a graph-string consistent molecular record. This mode measures the apparent attack strength when molecules are treated as arbitrary attributed graphs.

\textbf{\textbf{ChemGuard} mode.}
\textbf{ChemGuard} mode applies chemistry-aware admission before training and again at test-time trigger realization. A poisoned training record is allowed to enter the training set only if its molecular string is sanitizable and the graph reconstructed from the string is topologically consistent with the stored graph. At evaluation time, a triggered non-target test molecule that fails \textbf{ChemGuard} is counted as an attack failure. This operational ASR avoids crediting attacks for triggers that cannot be realized as admissible molecules.

This distinction is central to the paper. Raw mode asks whether a graph-level trigger can induce backdoor behavior in an abstract GNN setting. \textbf{ChemGuard} mode asks whether that same attack remains operational in a realistic molecular pipeline.

\subsection{Effective Poisoning Rate Accounting}
\label{app:epr-accounting}

The nominal poisoning rate $\alpha$ controls how many raw records the attacker attempts to poison before admission. However, chemistry-aware preprocessing may reject many attempted poisons. We therefore report the \textbf{Effective Poisoning Rate} (EPR), defined as the fraction of attempted poisoned records that survive \textbf{ChemGuard} and can actually reach the learner.

For \textbf{ChemBack}, invalid motif-anchor proposals encountered during internal trigger search are treated as rejected candidate proposals and are not submitted as poisoned records. Only \textbf{ChemGuard}-admissible poisoned molecules are retained in the final submitted poisoned set. Therefore, \textbf{ChemBack} has EPR $=100\%$ in the reported experiments because all submitted poisons are chemically valid and graph-string consistent by construction. This accounting matches the admission-facing threat model because EPR measures the poison signal admitted into training, not the number of internal candidate proposals explored during search.
\subsection{Victim Model and Optimization}
\label{app:victim-training}

The default victim model is a two-layer GCN with hidden dimension 64 and ReLU activations, followed by global mean pooling and a linear classifier. Node and bond features follow the standard molecular graph featurization used in the main experiments. Models are trained using Adam with learning rate $10^{-3}$ and batch size 128. Unless otherwise stated, results are averaged over five fixed seeds and reported as mean$\pm$std.

\subsection{Baseline Adaptation under Chemistry-aware Pipelines}
\label{app:baseline-wrapper}

We compare \textbf{ChemBack} against representative graph backdoor baselines: GTA, Motif-Backdoor (MB), UGBA, and DPGBA. These methods were originally designed for graph settings where trigger insertion is generally treated as admissible. To avoid an unfair comparison where baselines are rejected only because they never attempt chemistry-aware realization, we use a chemistry-aware wrapper whenever possible. The wrapper attempts to realize intended graph edits as molecular edits by selecting host atoms with available valence, assigning chemically consistent bonds, re-sanitizing the edited molecule, canonicalizing the molecular string, and checking graph-string consistency. If the edited record fails sanitization or topology consistency, it is rejected under \textbf{ChemGuard}.

This wrapper gives graph-only baselines the opportunity to survive chemistry-aware admission. The low EPR of these baselines therefore reflects a genuine mismatch between generic graph edits and chemically admissible molecular modifications, rather than an implementation artifact.

\section{\textbf{ChemGuard} and Chemistry Toolkit Validation}
\label{app:toolkits}

\subsection{Topology Consistency Check}
\label{app:topology-check}

\textbf{ChemGuard} admits a molecular record only if two conditions hold. First, the molecular string must be sanitizable under the toolkit-specific preprocessing operator $\phi_{\mathcal{T}}$.
Second, the graph reconstructed from the molecular string must be consistent with the stored molecular graph. In our implementation, topology consistency includes atom identities, atom count, undirected bond set, and bond types under the toolkit-induced canonical representation.

This check prevents a poisoned record from storing an arbitrary graph while presenting a different valid molecular string. It also prevents an invalid graph edit from being counted as a successful molecular poison simply because the original host molecule remains valid.

\subsection{Alternative Toolkit Validation}
\label{app:multi-toolkit}

The main experiments instantiate \textbf{ChemGuard} using RDKit, the default toolkit used throughout the paper. However, the admission-stage formulation itself is not specific to RDKit. Different cheminformatics toolkits may implement sanitization, aromaticity perception, canonicalization, and valence handling differently. However, they share the same high-level role in molecular learning pipelines, ensuring that submitted records are chemically admissible and representation-consistent before entering training or evaluation.

To test whether the result is specific to one toolkit implementation, we repeat the BBBP validation experiment using Indigo and Open Babel as alternative toolkit-specific admission rules. To avoid conflating toolkit choice with optimizer choice, we evaluate the same three search strategies described in Appendix~\ref{app:search-strategies}: RL search, exhaustive search, and greedy search. RDKit serves as the default reference toolkit. For each toolkit, EPR is computed under the corresponding toolkit-specific admission rule, and all values are averaged over 5 seeds.

\begin{table}[t]
\centering
\caption{
Alternative chemistry-toolkit validation on BBBP at $\alpha=10\%$.
Values are mean{\scriptsize $\pm$std} over 5 seeds. RDKit is the default toolkit used in the main experiments, while Indigo and Open Babel instantiate the same chemistry-aware admission rule with different cheminformatics validators.}
\label{tab:toolkit-validation}
\footnotesize
\setlength{\tabcolsep}{5pt}
\begin{tabular}{@{}ccccc@{}}
\toprule
\textbf{Toolkit}            & \textbf{Search variant} & \textbf{CA (\%) $\uparrow$} & \textbf{ASR (\%) $\uparrow$} & \textbf{EPR (\%) $\uparrow$} \\ \midrule
\multirow{3}{*}{RDKit}
& RL search          & \mstd{81.08}{0.52} & \mstd{69.36}{1.14} & \mstd{100.00}{0.00} \\
& Exhaustive search  & \mstd{81.24}{0.61} & \mstd{70.18}{1.27} & \mstd{100.00}{0.00} \\
& Greedy search      & \mstd{81.17}{0.57} & \mstd{69.84}{1.36} & \mstd{100.00}{0.00} \\
\midrule
\multirow{3}{*}{Indigo}
& RL search          & \mstd{81.36}{0.63} & \mstd{70.41}{1.38} & \mstd{100.00}{0.00} \\
& Exhaustive search  & \mstd{81.42}{0.59} & \mstd{71.06}{1.21} & \mstd{100.00}{0.00} \\
& Greedy search      & \mstd{81.31}{0.66} & \mstd{70.27}{1.44} & \mstd{100.00}{0.00} \\
\midrule
\multirow{3}{*}{Open Babel}
& RL search          & \mstd{81.29}{0.68} & \mstd{70.13}{1.33} & \mstd{100.00}{0.00} \\
& Exhaustive search  & \mstd{81.46}{0.71} & \mstd{71.32}{1.18} & \mstd{100.00}{0.00} \\
& Greedy search      & \mstd{81.22}{0.64} & \mstd{70.64}{1.29} & \mstd{100.00}{0.00} \\ \bottomrule
\end{tabular}
\end{table}

Table~\ref{tab:toolkit-validation} shows that \textbf{ChemBack} remains effective across all three toolkit-specific admission rules.
Across RDKit, Indigo, and Open Babel, every search variant constructs admitted poisons with EPR at $100.00\%$. The CA and ASR values are also comparable across toolkits, suggesting that the result is not an artifact of a single sanitizer or canonicalization implementation.
The optimizer comparison further shows that the core effect is not tied to a particular search strategy. RL search, exhaustive search, and greedy search optimize the same chemistry-aware, Tanimoto-guided objective and yield similar admitted-poison behavior.

These results should not be interpreted as a claim that toolkit sanitization captures all relevant chemical realism. RDKit, Indigo, and Open Babel provide first-layer chemistry-aware admission checks, but they do not model all aspects of synthetic accessibility, medicinal-chemistry rules, assay-specific curation, or expert review. Rather, Table~\ref{tab:toolkit-validation} supports the narrower claim that \textbf{ChemGuard}-style admission is a pipeline-level constraint shared by common molecular toolkits, and that \textbf{ChemBack} is constructed to survive this class of validation rather than a single RDKit-specific check.

\section{\textbf{ChemBack} Implementation Details}
\label{app:implementation}

\subsection{Motif Library Construction}
\label{app:motif-library}

\textbf{ChemBack} constructs a library of candidate molecular motifs from available training molecules. Motifs are chosen as chemically meaningful substructures that can be attached to host molecules through feasible atomic anchors. In our implementation, candidate motifs are mined from molecular scaffolds and filtered to remove candidates that cannot provide feasible attachment sites. Motif rarity may be used to reduce accidental natural occurrence in clean samples, but rarity is not treated as a complete stealth guarantee.
The final trigger is selected using both \textbf{ChemGuard} admissibility and Tanimoto similarity to the clean target class.

\subsection{Target Fingerprint Bank}
\label{app:fingerprint-bank}

For a fixed target label $y_t$, \textbf{ChemBack} constructs a fingerprint bank from clean, admitted target-class molecules:
\[
B_t =
\{
\mathrm{FP}(\tilde{G}_i):
(\tilde{s}_i,\tilde{G}_i,y_i)\in D^{\mathrm{adm}}_{\mathrm{clean}},
\ y_i=y_t,
\ \mathrm{ChemGuard}_{\mathcal{T}}(\tilde{G}_i,\tilde{s}_i)=1
\}.
\]
Unless otherwise stated, $\mathrm{FP}$ is a Morgan fingerprint with radius 2 and 2048 bits. This fingerprint setting is widely used in molecular similarity search because it captures local atom-neighborhood patterns around functional groups and motifs while keeping similarity computation deterministic and efficient.

\subsection{Tanimoto-guided Trigger Selection}
\label{app:tanimoto-selection}

For each candidate poisoned molecule $\tilde{G}_p$, \textbf{ChemBack} computes the nearest-target Tanimoto score:
\[
S_{\mathrm{Tan}}(\tilde{G}_p)
=
\max_{z\in B_t}
\mathrm{Tan}(\mathrm{FP}(\tilde{G}_p),z).
\]
The trigger-selection reward is
\[
r(G,k,i,j)=
\begin{cases}
r_{\mathrm{inv}}, &
\mathrm{Att}_{\mathcal{T}}(G,m_k,i,j)=\bot
\text{ or fails ChemGuard},\\[1mm]
1+\lambda_{\mathrm{Tan}}S_{\mathrm{Tan}}(\tilde{G}_p), &
\text{otherwise}.
\end{cases}
\]
The first term penalizes invalid or inadmissible attachments.
The constant term rewards admission. The Tanimoto term prefers admitted molecules that are structurally similar to clean target-class molecules.

Importantly, this reward is fully model-free. It does not require a clean victim model, proxy GNN, gradients, learned embeddings, or training-code access. A clean reference model is used only after trigger generation for post-hoc diagnostics in Appendix~\ref{app:structural-diagnostics}.

\subsection{Trigger Freezing and Poison Construction}
\label{app:trigger-freezing}

A backdoor trigger should be shared across poisoned training samples and test-time triggered inputs. \textbf{ChemBack} therefore searches over motif-anchor candidates on a calibration subset of non-target hosts and selects either a single trigger motif or a small trigger family with high admission success and high target-class Tanimoto similarity. It then poisons a fraction $\alpha$ of non-target training molecules by applying the selected trigger and assigning the target label $y_t$.

At test time, the same trigger rule is applied to non-target test molecules. If a trigger cannot be realized as a \textbf{ChemGuard}-admissible molecule on a particular test host, the attempt is counted as an attack failure under the operational ASR definition.

\section{Attachment Feasibility}
\label{app:attach}

\textbf{ChemBack} constructs poisons by attaching a motif to a host molecule through chemically feasible anchors. For a host molecule $M$ and a motif $m_k$, let $i$ denote a host atom and $j$ denote a motif atom. We define $B_{i,j}(M,m_k)$ as the molecule obtained by removing one hydrogen atom from atoms $i$ and $j$ when needed, adding a single bond between them, sanitizing the resulting molecule, and canonicalizing the molecular string. The set of feasible motif-side anchors for a host atom $i$ is
\[
\mathcal{J}_{\mathrm{valid}}(M,i,m_k)
=
\{
j:
\mathrm{Sanitize}(B_{i,j}(M,m_k))\ \text{succeeds}
\}.
\]
If $\mathcal{J}_{\mathrm{valid}}(M,i,m_k)=\varnothing$, the attachment at host atom $i$ is infeasible and returns $\bot$.
Otherwise, \textbf{ChemBack} selects a valid motif anchor $j^\star\in \mathcal{J}_{\mathrm{valid}}(M,i,m_k)$ and returns the sanitized molecule.

This enumeration enforces chemical admissibility before any poisoned record is submitted to training. Invalid motif-anchor proposals are internal rejected candidates during trigger search. Only \textbf{ChemGuard}-admissible poisoned records are submitted to the final poisoned training set, which explains why \textbf{ChemBack} has EPR $=100\%$ in the main experiments.

\section{Black-box Search Strategies and Runtime}
\label{app:search_strategies}

\textbf{ChemBack}'s trigger selection is a discrete, non-differentiable optimization problem. The action space includes motif selection, host-anchor selection, motif-anchor selection, sanitization, canonicalization, and graph-string consistency checks. Since these steps cannot be differentiated through, \textbf{ChemBack} treats trigger selection as black-box search. The core method is optimizer-agnostic, allowing deterministic search, greedy search, exhaustive search, and reinforcement-learning search to optimize the same Tanimoto-guided admission reward.

\subsection{Search Strategies}
\label{app:search-strategies}

\textbf{ChemBack} selects triggers by optimizing a discrete reward over motif-anchor candidates. The reward combines a chemistry-admission term, which requires the candidate to pass \textbf{ChemGuard}, with a target-alignment term based on Tanimoto similarity to the clean target fingerprint bank. Because motif choice, host-anchor choice, motif-anchor choice, sanitization, canonicalization, and graph-string consistency checks are discrete and non-differentiable operations, \textbf{ChemBack} treats trigger selection as a black-box search problem. Importantly, the search procedure is an implementation choice rather than a threat-model assumption.

\textbf{Exhaustive search.}
When the candidate space is small, \textbf{ChemBack} can deterministically enumerate all motif-anchor candidates.
For each candidate, it attempts chemical attachment, applies the corresponding \textbf{ChemGuard} admission rule, computes the Tanimoto score for admitted candidates, and selects the admitted candidate with the highest reward. This strategy provides a strong deterministic reference because it directly optimizes the same objective over the enumerated search space. The main drawback is computational cost, since enumeration becomes increasingly expensive as the motif library grows.

\textbf{Greedy search.}
Greedy search reduces the cost of exhaustive enumeration by evaluating a subset of promising motif-anchor candidates and iteratively improving the selected trigger according to the same reward. At each step, the search keeps candidates that pass chemistry-aware admission and improve target-class Tanimoto similarity.
This strategy is cheaper than exhaustive search while still using the same validity and target-alignment criteria.

\textbf{Reinforcement-learning search.}
RL search parameterizes a proposal distribution over motif and anchor tokens. The policy samples a motif-anchor action, observes whether the resulting molecule passes chemistry-aware admission, and updates the proposal distribution according to the admission and Tanimoto reward. This provides a practical optimizer for larger discrete candidate spaces. RL search does not require gradients through RDKit, Indigo, Open Babel, \textbf{ChemGuard}, molecular fingerprints, the victim model, or the victim training procedure.

We use these three search strategies in the toolkit validation experiment in Appendix~\ref{app:multi-toolkit}. That experiment compares RDKit, Indigo, and Open Babel under the same search variants.
The goal is to disentangle optimizer dependence from toolkit dependence in \textbf{ChemBack}. As shown in Table~\ref{tab:toolkit-validation}, the three search strategies produce comparable high-ASR and fully admitted poisons across toolkit-specific admission rules.

\subsection{Computational Overhead}
\label{app:runtime}

Table~\ref{tab:runtime-overhead} reports runtime and resource overhead for representative attacks across the four main datasets.
\textbf{SearchTime} measures trigger-search or trigger-construction time before victim training. For graph-only baselines, this cost is small because their triggers are fixed or generated by lightweight graph edits. For \textbf{ChemBack}, search time includes chemistry-aware motif-anchor search and toolkit-level validity checks.
\textbf{FullTime} includes trigger search, poisoned-set construction, and victim training under the same experimental environment.
We also report peak GPU memory and average CPU utilization to show that the additional cost mainly comes from discrete chemical search rather than GPU memory growth. 

The results show three trends. First, the graph-only baselines have negligible trigger-search overhead, but this is partly because they do not solve the harder chemistry-admissible trigger construction problem. Second, \textbf{ChemBack-RL} adds only modest overhead over graph-only baselines while maintaining chemistry-admissible poisons.
Third, exhaustive search is substantially more expensive because it enumerates many motif-anchor candidates, while greedy search provides an intermediate cost-accuracy trade-off. These results support the use of RL search as the default optimizer, while confirming that \textbf{ChemBack}'s core mechanism is not tied to a single optimizer.

\begin{table}[t]
\centering
\caption{
Runtime and resource overhead across the four main datasets.
\textbf{SearchTime} is trigger-search or trigger-construction time.
\textbf{FullTime} includes search, poison construction, and victim training.
Values are per seed under the same profiling setup.
}
\label{tab:runtime-overhead}
\footnotesize
\setlength{\tabcolsep}{4.5pt}
\begin{tabular}{cccccc}
\toprule
\textbf{Dataset} & \textbf{Method} &
\textbf{SearchTime (s)} &
\textbf{FullTime (s)} &
\textbf{PeakGPU (MB)} &
\textbf{CPUAvgUtil (\%)} \\
\midrule
\multirow{7}{*}{BBBP}
& GTA                 & 0.02 & 10.72 & 31.93 & 1.53 \\
& MotifBackdoor       & 0.02 & 9.81  & 31.93 & 1.53 \\
& UGBA                & 0.02 & 9.85  & 31.93 & 1.54 \\
& DPGBA               & 0.02 & 9.01  & 31.93 & 1.55 \\
& \textbf{ChemBack}-RL         & 0.03 & 13.34 & 31.93 & 1.56 \\
& \textbf{ChemBack}-Exhaustive & 0.32 & 60.78 & 31.93 & 1.56 \\
& \textbf{ChemBack}-Greedy     & 0.15 & 33.10 & 31.93 & 1.56 \\
\midrule
\multirow{7}{*}{BACE}
& GTA                 & 0.02 & 8.54  & 37.84 & 1.54 \\
& MotifBackdoor       & 0.02 & 7.91  & 37.84 & 1.54 \\
& UGBA                & 0.02 & 8.28  & 37.84 & 1.54 \\
& DPGBA               & 0.02 & 8.40  & 37.84 & 1.54 \\
& \textbf{ChemBack}-RL         & 0.04 & 15.09 & 37.84 & 1.55 \\
& \textbf{ChemBack}-Exhaustive & 0.67 & 91.18 & 37.84 & 1.56 \\
& \textbf{ChemBack}-Greedy     & 0.31 & 48.44 & 37.84 & 1.56 \\
\midrule
\multirow{7}{*}{SIDER}
& GTA                 & 0.02 & 9.34  & 35.71 & 1.54 \\
& MotifBackdoor       & 0.02 & 8.87  & 35.71 & 1.54 \\
& UGBA                & 0.02 & 9.12  & 35.71 & 1.55 \\
& DPGBA               & 0.02 & 9.26  & 35.71 & 1.55 \\
& \textbf{ChemBack}-RL         & 0.04 & 14.62 & 35.71 & 1.56 \\
& \textbf{ChemBack}-Exhaustive & 0.54 & 88.73 & 35.71 & 1.57 \\
& \textbf{ChemBack}-Greedy     & 0.26 & 45.86 & 35.71 & 1.57 \\
\midrule
\multirow{7}{*}{Tox21}
& GTA                 & 0.03 & 34.68 & 48.62 & 1.62 \\
& MotifBackdoor       & 0.03 & 32.94 & 48.62 & 1.62 \\
& UGBA                & 0.03 & 33.71 & 48.62 & 1.63 \\
& DPGBA               & 0.03 & 35.18 & 48.62 & 1.63 \\
& \textbf{ChemBack}-RL         & 0.08 & 42.76 & 48.62 & 1.65 \\
& \textbf{ChemBack}-Exhaustive & 1.84 & 128.64 & 48.62 & 1.66 \\
& \textbf{ChemBack}-Greedy     & 0.92 & 78.37  & 48.62 & 1.66 \\
\bottomrule
\end{tabular}
\end{table}

\section{Sensitivity Analyses}
\label{app:sensitivity}

\subsection{Sensitivity to Poison Rate}
\label{app:sensitivity:poison:rate}

Figure~\ref{fig:sens-poison} reports CA and ASR as a function of poison rate $\alpha\in\{1,5,10\}\%$. Across datasets, ASR increases with the poisoning budget and then begins to saturate, while CA remains relatively stable. This suggests that increasing the poison budget primarily strengthens the backdoor objective rather than causing large clean-utility degradation.

\begin{figure}[ht]
  \centering
  \begin{subfigure}[t]{0.49\linewidth}
    \centering
    \includegraphics[width=0.8\linewidth]{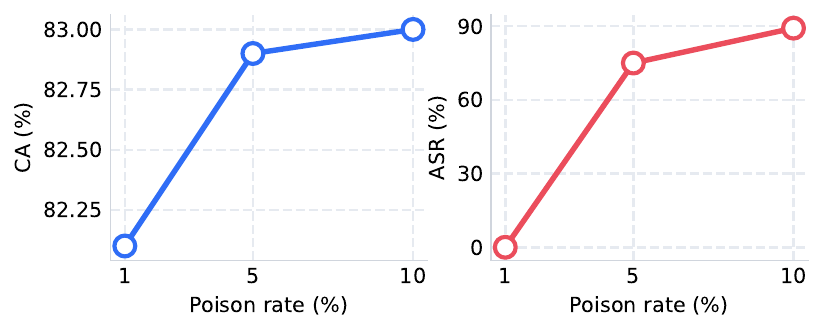}
    \caption{BBBP.}
  \end{subfigure}
  \vspace{0.15cm}
  \begin{subfigure}[t]{0.49\linewidth}
    \centering
    \includegraphics[width=0.8\linewidth]{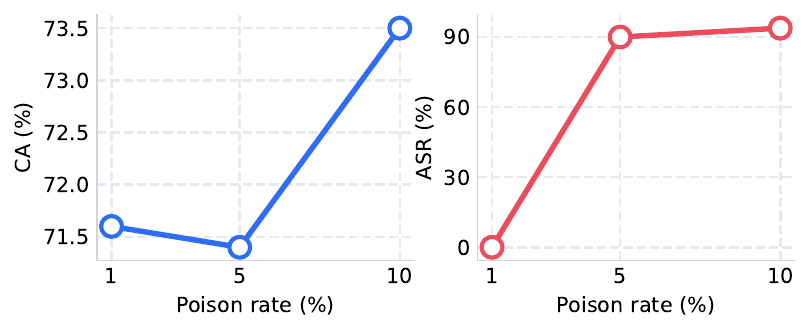}
    \caption{BACE.}
  \end{subfigure}
  \vspace{0.15cm}
  \begin{subfigure}[t]{0.49\linewidth}
    \centering
    \includegraphics[width=0.8\linewidth]{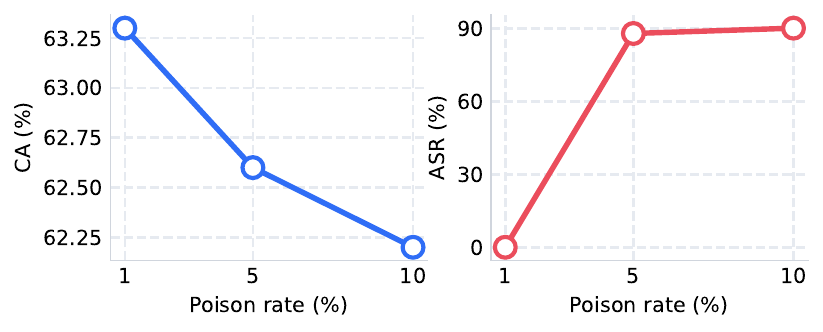}
    \caption{SIDER.}
  \end{subfigure}
  \vspace{0.15cm}
  \begin{subfigure}[t]{0.49\linewidth}
    \centering
    \includegraphics[width=0.8\linewidth]{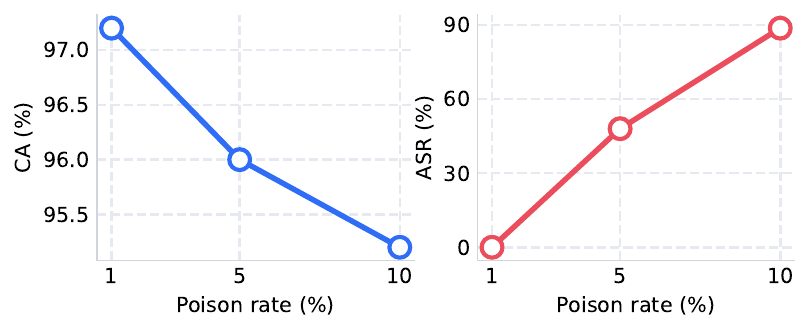}
    \caption{Tox21.}
  \end{subfigure}
  \caption{Sensitivity to poison rate $\alpha\in\{1,5,10\}\%$.
  For each dataset, the left plot shows CA on clean test molecules, and the right plot shows ASR on triggered non-targets.
  Curves are mean$\pm$std over 5 seeds.}
  \label{fig:sens-poison}
\end{figure}

\subsection{Sensitivity to Tanimoto Weight}
\label{app:sensitivity:tanimoto}

We vary the Tanimoto reward weight $\lambda_{\mathrm{Tan}}$ to study the trade-off between target-class structural alignment and chemistry-aware admission during trigger search. When $\lambda_{\mathrm{Tan}}=0$, \textbf{ChemBack} still enforces chemically feasible attachment, but it does not prefer candidates that are structurally close to the clean target class. As $\lambda_{\mathrm{Tan}}$ increases, the selected candidates obtain higher Tanimoto similarity to the clean target fingerprint bank, indicating stronger target-neighborhood alignment.

Figure~\ref{fig:sens-tanimoto} shows that moderate Tanimoto guidance improves structural alignment without destabilizing the attack.
Across datasets, ASR remains stable within a narrow range, while Tanimoto similarity increases and saturates near high-similarity candidates. However, very large $\lambda_{\mathrm{Tan}}$ can slightly reduce the candidate poison-validity rate because the search becomes overly focused on target similarity and may prefer motif-anchor combinations that are harder to realize as valid molecules.
This suggests a practical trade-off in setting $\lambda_{\mathrm{Tan}}$, which should be large enough to select target-aligned poisons without overwhelming attachment feasibility.

In the final poisoning pipeline, \textbf{ChemBack} submits only \textbf{ChemGuard}-admissible poisoned records, so its final EPR remains $100.00\%$ in Table~\ref{tab:main-10}. The sensitivity result instead shows how the internal candidate search behaves before final filtering. Overall, the default setting $\lambda_{\mathrm{Tan}}=1.0$ balances attack effectiveness, chemical admissibility, and target alignment, yielding high ASR, high EPR, and strong Tanimoto similarity to the clean target class.
\begin{figure*}[t]
\centering
\begin{subfigure}[t]{0.49\textwidth}
    \centering
    \includegraphics[width=\linewidth]{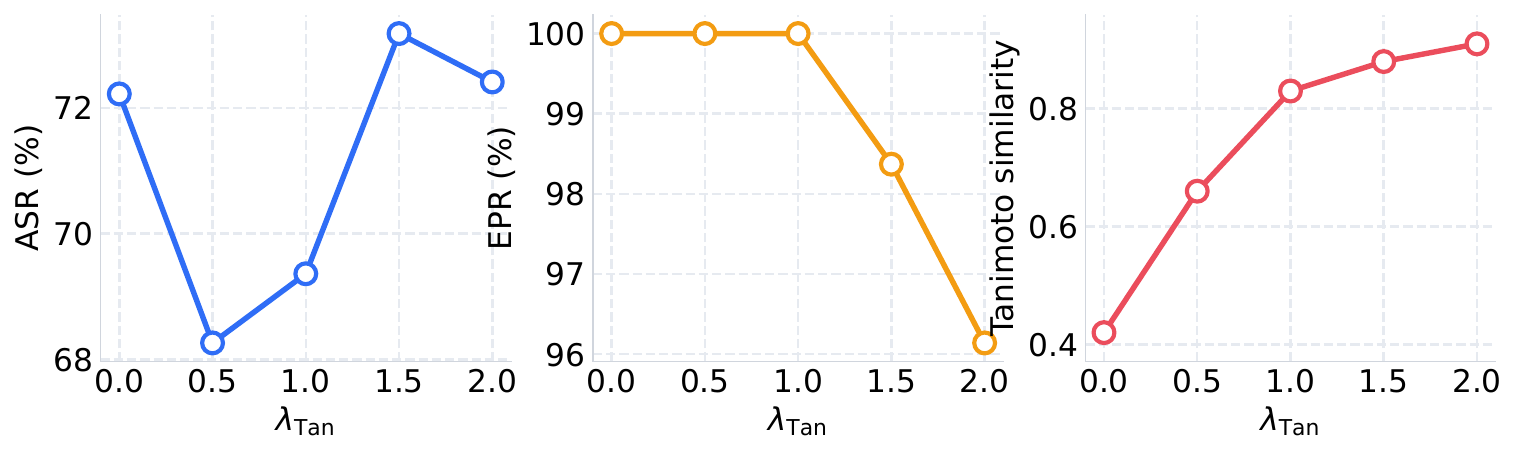}
    \caption{BBBP.}
\end{subfigure}
\hfill
\begin{subfigure}[t]{0.49\textwidth}
    \centering
    \includegraphics[width=\linewidth]{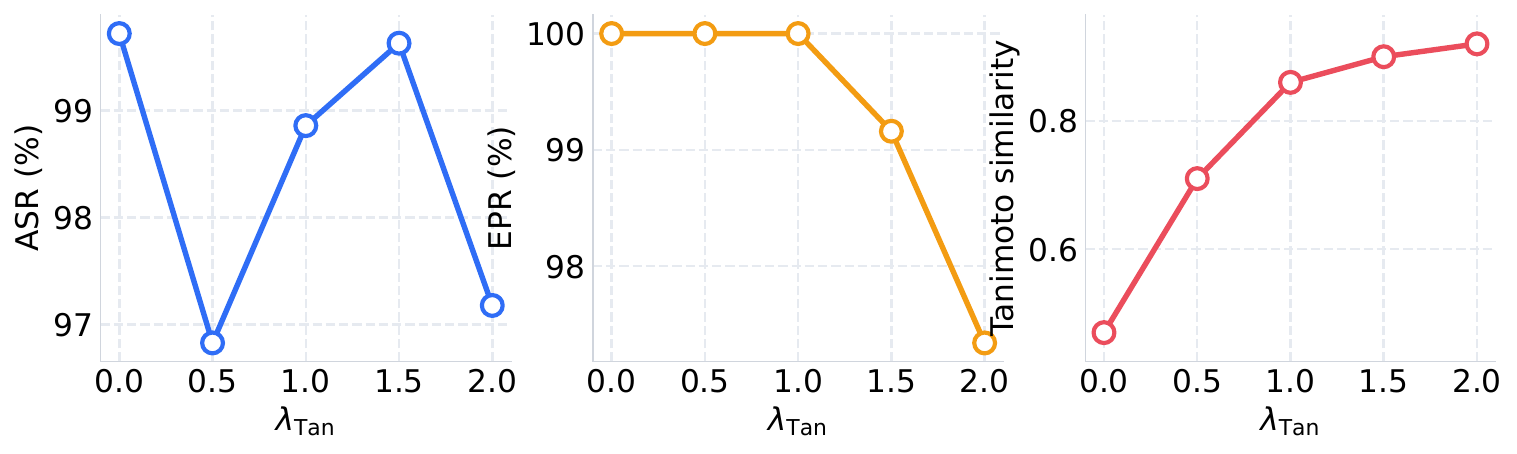}
    \caption{BACE.}
\end{subfigure}
\hfill
\begin{subfigure}[t]{0.49\textwidth}
    \centering
    \includegraphics[width=\linewidth]{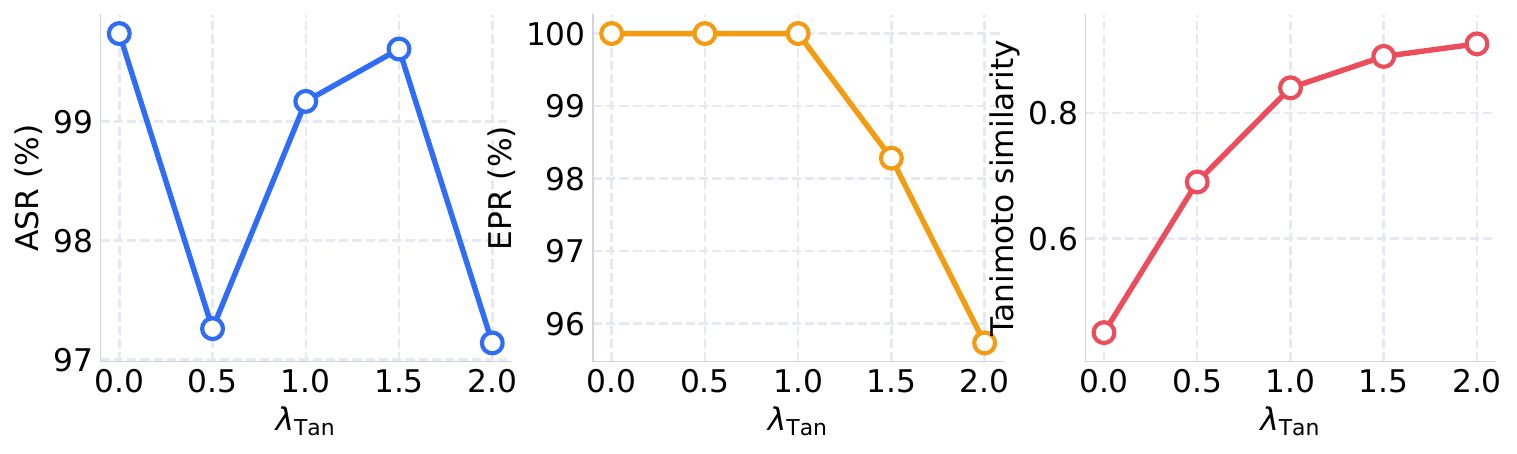}
    \caption{SIDER.}
\end{subfigure}
\hfill
\begin{subfigure}[t]{0.49\textwidth}
    \centering
    \includegraphics[width=\linewidth]{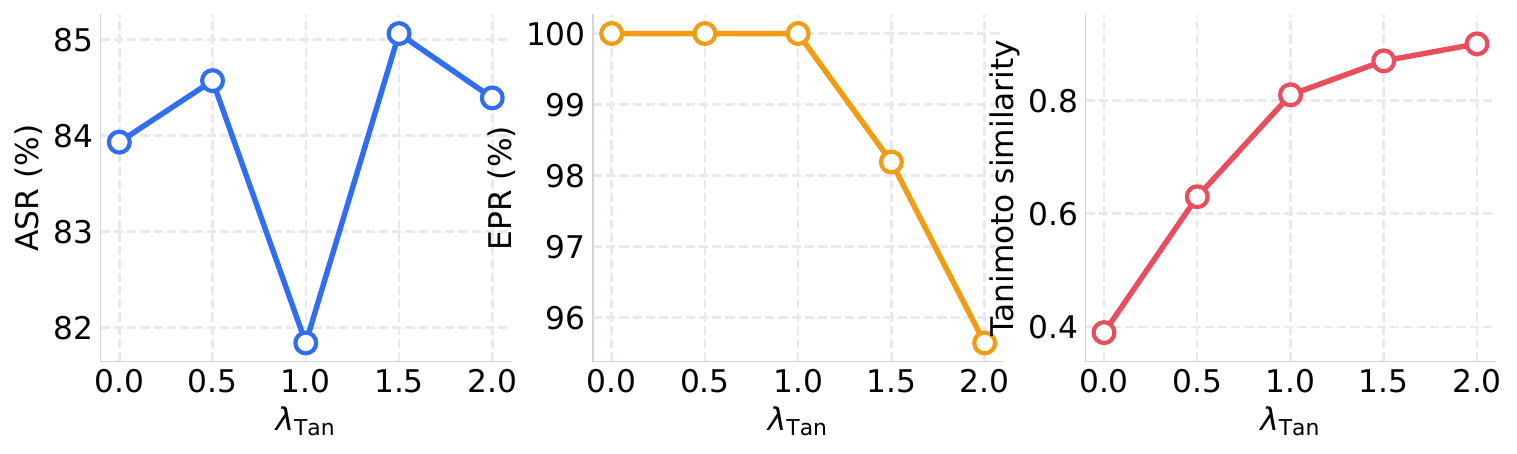}
    \caption{Tox21.}
\end{subfigure}
\caption{Sensitivity to the Tanimoto reward weight $\lambda_{\mathrm{Tan}}$. Each panel reports ASR, EPR, and Tanimoto similarity to the clean target fingerprint bank. Increasing $\lambda_{\mathrm{Tan}}$ improves target-class structural similarity and saturates near high-similarity candidates, while overly large values can slightly reduce candidate validity by over-prioritizing similarity over attachment feasibility.}
\label{fig:sens-tanimoto}
\end{figure*}

\section{Post-hoc Structural and Representation Diagnostics}
\label{app:structural-diagnostics}

This section provides post-hoc diagnostics for \textbf{ChemBack} poisons after trigger generation. These diagnostics are not used by the attacker and are not part of the \textbf{ChemBack} trigger-selection algorithm. They are used only to understand why model-free Tanimoto similarity can produce effective target-aligned poisons and to assess whether the generated poisons are extreme outliers under standard chemistry-facing descriptors.

\subsection{Tanimoto Similarity and Clean-model Representation Proximity}
\label{app:tanimoto-md2}

\textbf{ChemBack} selects admitted candidates using fingerprint-based Tanimoto similarity to clean target-class molecules.
This selection rule is fully model-free and does not require a clean victim model, a proxy GNN, gradients, learned embeddings, or training-code access. To analyze why this structural signal is effective, we train a clean reference model only after trigger generation and compute the target-class Mahalanobis distance, MD$^2$, in the clean model's embedding space.

The main text reports the Tanimoto-MD$^2$ diagnostic in Figure~\ref{fig:tanimoto-md2}. Across datasets, higher Tanimoto similarity to the clean target-class fingerprint bank aligns with lower clean-model target-class MD$^2$. This suggests that the chemistry-native structural neighborhood used by \textbf{ChemBack} is aligned with the learned target region in the victim representation space. Importantly, the clean-model analysis is used only post hoc. \textbf{ChemBack} itself relies only on molecular structures, target labels, fingerprints, and chemistry-aware validity checks.

\subsection{Chemistry-facing Descriptor Balance}
\label{app:descriptor-diagnostics}

Representation-space proximity alone is not a complete stealth guarantee. We therefore further compare \textbf{ChemBack} poisons with clean target-class molecules using standard chemistry-facing descriptors. The evaluated descriptors include physicochemical and structural properties commonly used in molecular screening and drug-likeness analysis, including logP, hydrogen-bond donors and acceptors, rotatable bonds, ring counts, fraction Csp$^3$, QED, formal charge, and stereocenters~\cite{lipinski1997experimental,veber2002molecular,bickerton2012quantifying}.

For each descriptor $d$, we compute the standardized mean difference
\begin{equation}
\mathrm{SMD}(d)
=
\frac{
\mu_p(d)-\mu_c(d)
}{
\sqrt{
\frac{1}{2}
\left(
\sigma_p^2(d)+\sigma_c^2(d)
\right)
}
},
\label{eq:smd}
\end{equation}
where $\mu_p,\sigma_p$ are the mean and standard deviation of the descriptor among \textbf{ChemBack} poisons, and $\mu_c,\sigma_c$ are computed over clean target-class molecules. We report $|\mathrm{SMD}|$ as a scale-normalized descriptor-balance statistic. Small $|\mathrm{SMD}|$ indicates that the poison and clean target-class descriptor distributions have similar central tendency relative to their natural variability~\cite{austin2009balance}.

We also apply a two one-sided equivalence test (TOST) with a pre-specified standardized equivalence bound of $\pm 0.2$~\cite{schuirmann1987comparison,lakens2017equivalence}.
A descriptor is marked as \emph{Pass} when the equivalence interval lies within this bound. This should be interpreted as descriptive statistical evidence of descriptor alignment under the evaluated properties, not as a universal certificate of stealth.

\begin{table*}[t]
\centering
\caption{
Post-hoc descriptor balance between clean target-class molecules and \textbf{ChemBack} poisons on Tox21. Abs SMD denotes the absolute standardized mean difference between clean and poison descriptor distributions. TOST reports whether the two one-sided equivalence test falls within the pre-specified $\pm0.2$ standardized bound.
These diagnostics are computed only after trigger generation and are not used by \textbf{ChemBack}.
}
\label{tab:descriptor-balance}
\footnotesize
\setlength{\tabcolsep}{4pt}
\begin{tabular}{lccccc}
\toprule
\textbf{Descriptor} &
\textbf{Clean mean$\pm$std} &
\textbf{Poison mean$\pm$std} &
\textbf{$\Delta$} &
\textbf{Abs SMD $\downarrow$} &
\textbf{TOST $\pm0.2$} \\
\midrule
logP           & 2.41$\pm$2.18 & 2.57$\pm$2.22 & 0.16  & 0.07 & Pass \\
HBD            & 1.12$\pm$1.31 & 1.20$\pm$1.28 & 0.08  & 0.06 & Pass \\
HBA            & 4.38$\pm$2.82 & 4.71$\pm$2.96 & 0.33  & 0.11 & Pass \\
RotB           & 4.96$\pm$3.74 & 5.42$\pm$3.81 & 0.46  & 0.12 & Pass \\
AromaticRings  & 1.86$\pm$1.18 & 2.05$\pm$1.21 & 0.19  & 0.16 & Pass \\
AliphaticRings & 0.78$\pm$1.02 & 0.89$\pm$1.04 & 0.11  & 0.11 & Pass \\
FractionCSP3   & 0.34$\pm$0.25 & 0.32$\pm$0.24 & -0.02 & 0.08 & Pass \\
QED            & 0.55$\pm$0.20 & 0.53$\pm$0.20 & -0.02 & 0.10 & Pass \\
FormalCharge   & 0.01$\pm$0.14 & 0.01$\pm$0.13 & 0.00  & 0.00 & Pass \\
StereoCenters  & 0.72 $\pm $1.48 & 0.89$\pm$1.51 & 0.17  & 0.11 & Pass \\
\bottomrule
\end{tabular}
\end{table*}

Table~\ref{tab:descriptor-balance} shows that \textbf{ChemBack} poisons remain close to clean target-class molecules under the evaluated descriptor distributions. All descriptors have $|\mathrm{SMD}| \leq 0.16$, and all TOST checks pass under the $\pm0.2$ standardized equivalence bound. Together with the Tanimoto-MD$^2$ diagnostic in the main text, these results support the claim that \textbf{ChemBack} poisons are not extreme outliers under both learned-representation and chemistry-facing diagnostics. At the same time, we do not claim universal stealth against all molecular filters, synthetic-accessibility checks, medicinal-chemistry rules, or human expert inspection.

\subsection{Embedding Visualizations}
\label{app:embedding}

We provide qualitative embedding visualizations to complement the quantitative diagnostics in the main text. These plots are strictly post-hoc analyses. The clean reference model used to compute embeddings and MD$^2$ is never used by \textbf{ChemBack} for trigger selection. The purpose is twofold. First, we visualize why representative graph-only baselines become unreliable under chemistry-aware admission. Second, we visualize why \textbf{ChemBack} remains effective after \textbf{ChemGuard}. Its admitted poisons are not only chemically valid but also close to clean target-class regions under the learned representation.

\paragraph{Baseline failure modes.}
Figure~\ref{fig:app-baseline-embedding-bbbp} visualizes attempted poisons from representative graph backdoors on BBBP. The plot includes clean target molecules, \textbf{ChemGuard}-admitted poisons, and rejected attempted poisons. This distinction matters because rejected points are shown only to diagnose why raw abstract-graph evaluation can overestimate attack success. They do not enter the operational training pipeline. For graph-only baselines such as GTA and MB, many attempted poisons are either rejected or placed in sparse regions away from the dense clean target cluster. UGBA and DPGBA can improve representation-space alignment relative to simpler graph edits, but they still do not guarantee chemistry-aware admission. This supports the main claim that chemical validity and target alignment are both necessary for operational molecular backdoors.

\begin{figure}[t]
  \centering
  \includegraphics[width=0.36\linewidth]{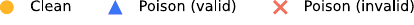}

  \vspace{1mm}

  \begin{subfigure}[t]{0.24\linewidth}
    \centering
    \includegraphics[width=\linewidth]{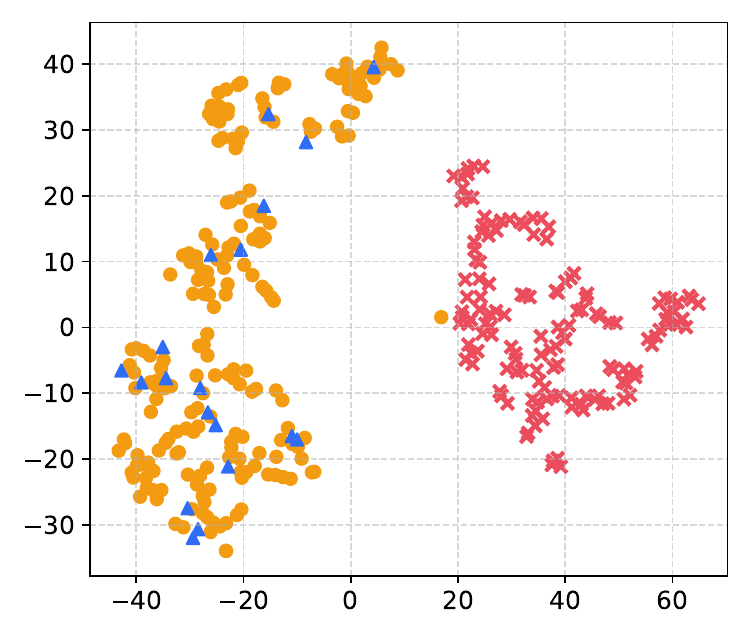}
    \caption{GTA.}
  \end{subfigure}
  \hfill
  \begin{subfigure}[t]{0.24\linewidth}
    \centering
    \includegraphics[width=\linewidth]{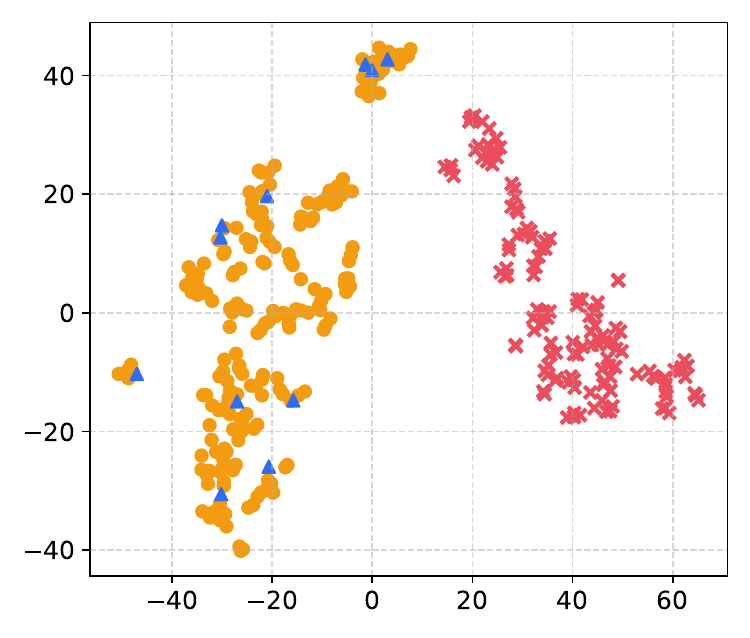}
    \caption{MB.}
  \end{subfigure}
  \hfill
  \begin{subfigure}[t]{0.24\linewidth}
    \centering
    \includegraphics[width=\linewidth]{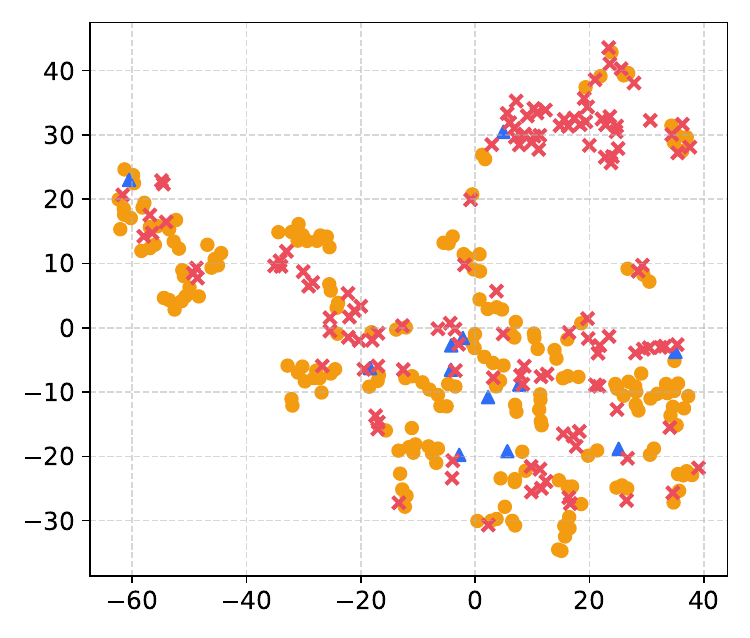}
    \caption{UGBA.}
  \end{subfigure}
  \hfill
  \begin{subfigure}[t]{0.24\linewidth}
    \centering
    \includegraphics[width=\linewidth]{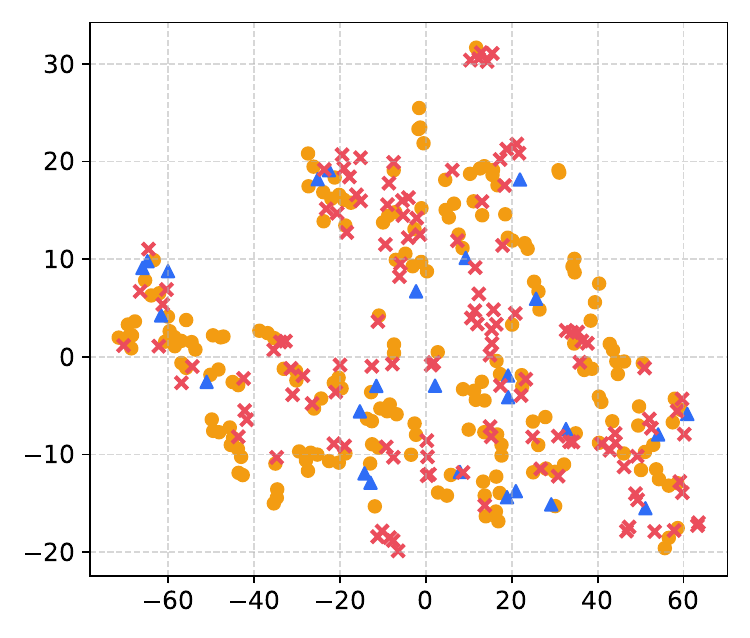}
    \caption{DPGBA.}
  \end{subfigure}

  \caption{
  Post-hoc embedding diagnostics for representative graph backdoors on BBBP. Clean target molecules, \textbf{ChemGuard}-admitted poisons, and rejected attempted poisons are shown in the learned representation space. Rejected poisons are plotted only for diagnosis and do not enter the operational \textbf{ChemGuard} pipeline. Graph-only attacks often produce rejected or off-manifold attempts, explaining why their raw ASR can be inflated relative to chemistry-aware evaluation.
  }
  \label{fig:app-baseline-embedding-bbbp}
\end{figure}

\paragraph{\textbf{ChemBack} admitted poisons.}
Figure~\ref{fig:app-embedding-chemb-bace-tox21} visualizes \textbf{ChemBack} on BACE and Tox21. Unlike the baseline diagnostic above, all plotted \textbf{ChemBack} poisons are \textbf{ChemGuard}-admissible. The left panels overlay clean target molecules and admitted poisons in the learned embedding space. The right panels plot target probability against clean-model MD$^2$. Although MD$^2$ is not used during trigger generation, the post-hoc pattern shows that Tanimoto-selected poisons tend to lie near clean target regions and receive high target probability. Together with the Tanimoto--MD$^2$ relation in the main text and the descriptor-balance results in Appendix~\ref{app:descriptor-diagnostics}, these visualizations support the view that \textbf{ChemBack} produces poisons that are chemically admissible and less outlying under the diagnostics considered in this work.

\begin{figure}[t]
  \centering
  \includegraphics[width=0.22\linewidth]{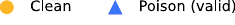}

  \vspace{1mm}

  \begin{subfigure}[t]{0.46\linewidth}
    \centering
    \includegraphics[width=\linewidth]{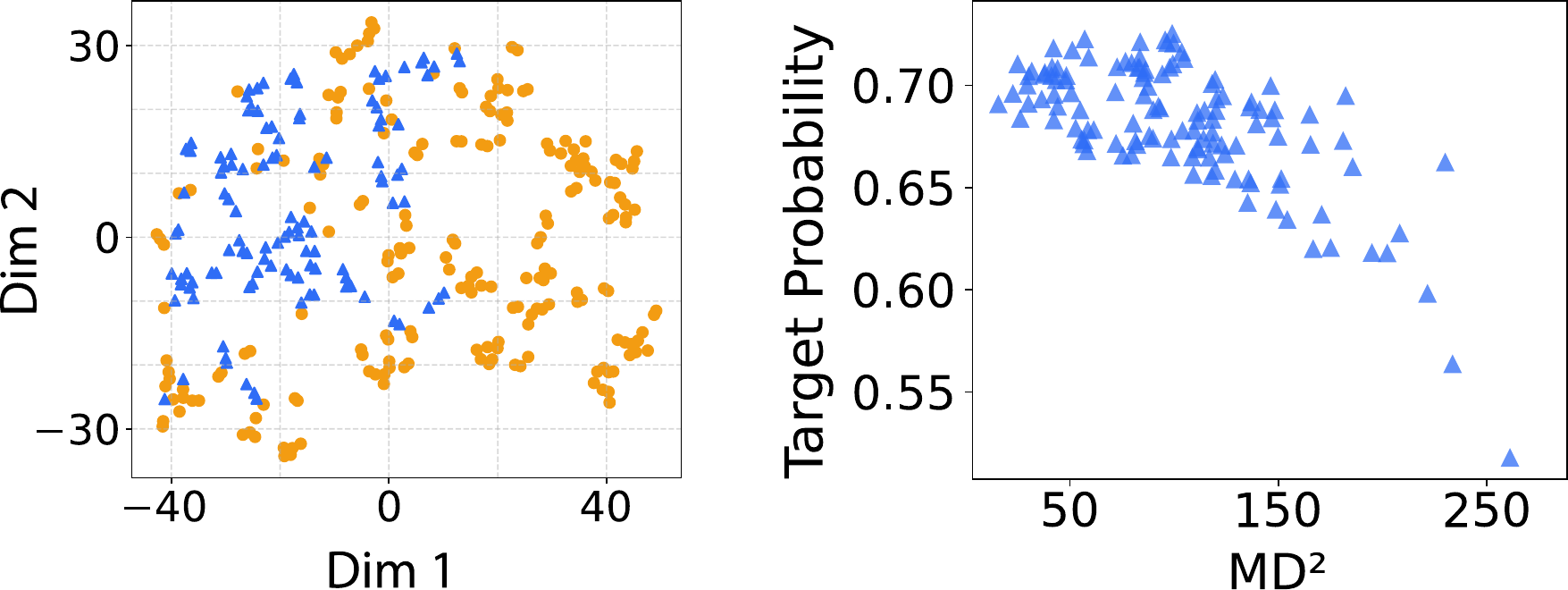}
    \caption{BACE.}
  \end{subfigure}
  \hspace{0.2cm}
  \begin{subfigure}[t]{0.46\linewidth}
    \centering
    \includegraphics[width=\linewidth]{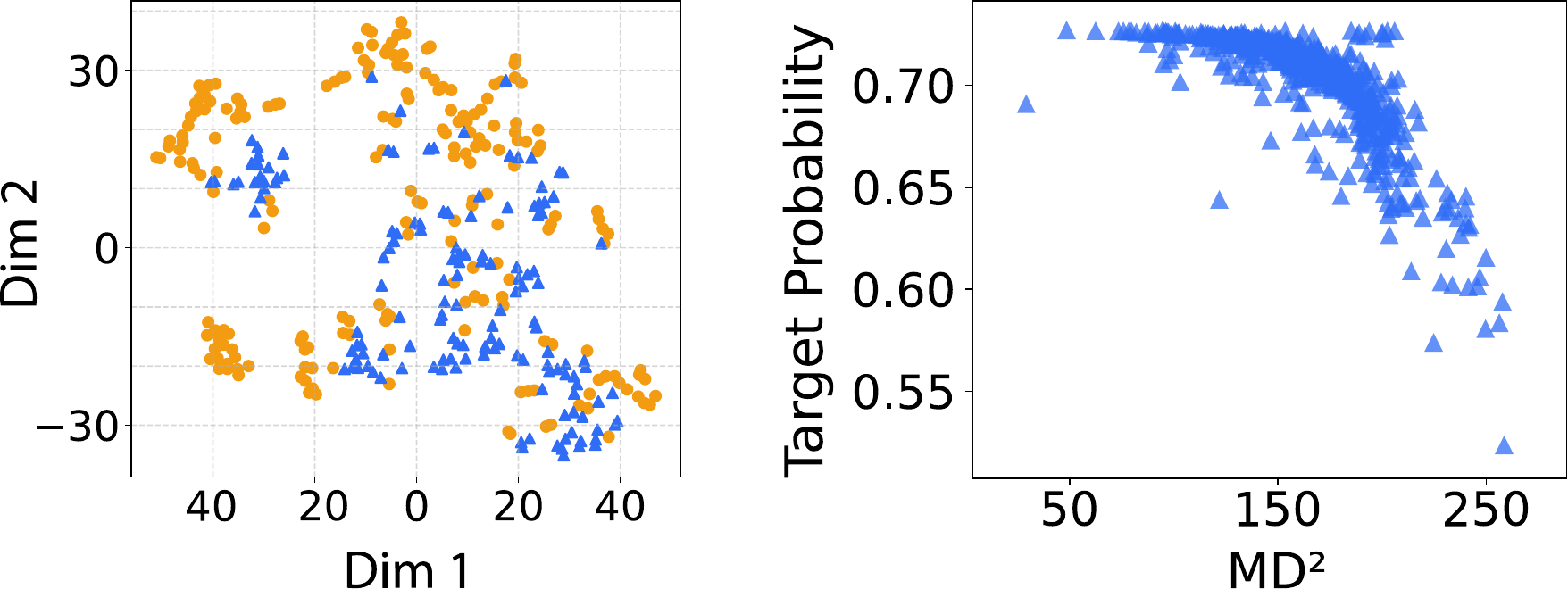}
    \caption{Tox21.}
  \end{subfigure}

  \caption{
  Post-hoc embedding diagnostics for \textbf{ChemBack} on BACE and Tox21. Left panels overlay clean target molecules and \textbf{ChemGuard}-admissible poisons. Right panels plot target probability against clean-model MD$^2$. The clean model is used only for analysis; \textbf{ChemBack} selects triggers using model-free Tanimoto similarity.
  }
  \label{fig:app-embedding-chemb-bace-tox21}
\end{figure}
\section{Robustness Across Architectures}
\label{app:arch-robustness}

To verify that our conclusions are not an artifact of a particular victim backbone, we repeat the poisoning-training-evaluation pipeline under multiple GNN architectures. We consider GCN, GIN, GraphSAGE, GAT, and MPNN. For each architecture, we keep the dataset split, poison rate, target label, training hyperparameters, and operational \textbf{ChemGuard} evaluation protocol identical to the main experimental setup. Invalid trigger realizations are rejected by \textbf{ChemGuard} and counted as attack failures in ASR.

This experiment evaluates architecture robustness, not surrogate-model transfer. \textbf{ChemBack} is model-free. It does not use a victim model, clean model snapshot, proxy GNN, gradients, or learned embeddings during trigger synthesis. Instead, the same chemistry-aware, Tanimoto-selected trigger construction is evaluated across different victim architectures. Table~\ref{tab:arch-robustness} reports CA and ASR across backbones. The GCN column matches the main comparison in Table~\ref{tab:main-10}. Overall, \textbf{ChemBack} maintains high ASR across architectures while keeping CA close to the corresponding no-attack reference, suggesting that its effectiveness comes from chemistry-admissible and target-aligned poison construction rather than a backbone-specific artifact.

\begin{table*}[t]
  \centering
  \caption{
  Robustness across GNN architectures under \textbf{ChemGuard} at $\alpha=10\%$.
  CA is measured on clean test molecules, and ASR is measured on triggered non-target molecules under operational \textbf{ChemGuard} evaluation.
  Invalid trigger realizations are counted as attack failures.
  Values are mean$\pm$std over 5 seeds.
  }
  \label{tab:arch-robustness}
  \scriptsize
  \setlength{\tabcolsep}{2.4pt}
  \resizebox{\textwidth}{!}{
  \begin{tabular}{llcccccccccc}
\toprule
\multirow{2}{*}{\textbf{Dataset}} &
\multirow{2}{*}{\textbf{Method}} &
\multicolumn{2}{c}{\textbf{GCN}} &
\multicolumn{2}{c}{\textbf{GIN}} &
\multicolumn{2}{c}{\textbf{GraphSAGE}} &
\multicolumn{2}{c}{\textbf{GAT}} &
\multicolumn{2}{c}{\textbf{MPNN}} \\
\cmidrule(lr){3-4}
\cmidrule(lr){5-6}
\cmidrule(lr){7-8}
\cmidrule(lr){9-10}
\cmidrule(lr){11-12}
& &
\textbf{CA} & \textbf{ASR} &
\textbf{CA} & \textbf{ASR} &
\textbf{CA} & \textbf{ASR} &
\textbf{CA} & \textbf{ASR} &
\textbf{CA} & \textbf{ASR} \\
\midrule

\multirow{6}{*}{\textbf{BBBP}}
& No-Attack
& \mstd{86.34}{0.43} & -
& \mstd{86.24}{0.53} & -
& \mstd{85.94}{0.63} & -
& \mstd{86.06}{0.43} & -
& \mstd{86.73}{0.43} & - \\
& GTA
& \mstd{84.23}{0.46} & \mstd{54.47}{0.56}
& \mstd{85.14}{1.34} & \mstd{54.36}{10.04}
& \mstd{85.74}{1.03} & \mstd{58.34}{2.23}
& \mstd{85.54}{0.53} & \mstd{52.76}{9.93}
& \mstd{85.03}{3.04} & \mstd{54.06}{15.13} \\
& MB
& \mstd{84.35}{0.63} & \mstd{46.27}{8.24}
& \mstd{86.04}{0.23} & \mstd{81.36}{2.23}
& \mstd{85.04}{1.74} & \mstd{86.76}{4.23}
& \mstd{86.23}{0.43} & \mstd{80.74}{3.56}
& \mstd{86.06}{0.73} & \mstd{81.36}{3.24} \\
& UGBA
& \mstd{83.94}{0.34} & \mstd{47.83}{3.86}
& \mstd{85.64}{0.63} & \mstd{72.74}{4.53}
& \mstd{85.66}{0.64} & \mstd{78.34}{5.54}
& \mstd{85.94}{0.53} & \mstd{77.76}{1.54}
& \mstd{85.93}{1.24} & \mstd{70.06}{6.63} \\
& DPGBA
& \mstd{84.27}{0.84} & \mstd{52.86}{3.36}
& \mstd{86.84}{0.43} & \mstd{68.04}{2.03}
& \mstd{85.34}{1.13} & \mstd{73.46}{3.94}
& \mstd{85.64}{0.53} & \mstd{74.06}{2.04}
& \mstd{85.66}{0.73} & \mstd{63.76}{10.03} \\
& \textbf{ChemBack}
& \mstd{81.08}{0.52} & \mstd{69.36}{1.14}
& \mstd{85.74}{0.93} & \mstd{81.94}{12.23}
& \mstd{85.86}{1.13} & \mstd{86.36}{3.94}
& \mstd{85.34}{0.23} & \mstd{87.34}{14.13}
& \mstd{86.04}{1.13} & \mstd{84.94}{6.03} \\

\midrule
\multirow{6}{*}{\textbf{BACE}}
& No-Attack
& \mstd{71.64}{0.53} & -
& \mstd{71.14}{1.23} & -
& \mstd{71.16}{1.94} & -
& \mstd{70.43}{2.24} & -
& \mstd{72.74}{0.73} & - \\
& GTA
& \mstd{71.32}{2.47} & \mstd{27.46}{2.14}
& \mstd{70.23}{2.24} & \mstd{34.26}{12.14}
& \mstd{78.14}{1.74} & \mstd{39.54}{16.34}
& \mstd{72.83}{0.73} & \mstd{37.46}{9.74}
& \mstd{73.84}{2.04} & \mstd{27.26}{1.14} \\
& MB
& \mstd{71.06}{1.94} & \mstd{13.38}{8.74}
& \mstd{70.84}{2.83} & \mstd{9.76}{5.34}
& \mstd{70.86}{3.64} & \mstd{9.86}{0.64}
& \mstd{72.84}{3.43} & \mstd{9.24}{0.43}
& \mstd{72.26}{2.23} & \mstd{9.64}{1.34} \\
& UGBA
& \mstd{70.74}{0.34} & \mstd{15.37}{3.34}
& \mstd{71.34}{1.03} & \mstd{95.56}{2.64}
& \mstd{71.16}{3.34} & \mstd{96.16}{1.84}
& \mstd{70.34}{4.14} & \mstd{95.96}{0.94}
& \mstd{69.34}{4.94} & \mstd{93.26}{2.84} \\
& DPGBA
& \mstd{70.24}{1.24} & \mstd{16.47}{4.54}
& \mstd{71.73}{1.14} & \mstd{51.64}{1.74}
& \mstd{75.24}{0.73} & \mstd{53.16}{2.24}
& \mstd{71.64}{1.43} & \mstd{43.64}{5.34}
& \mstd{72.46}{1.03} & \mstd{51.46}{7.74} \\
& \textbf{ChemBack}
& \mstd{70.63}{0.72} & \mstd{98.86}{0.43}
& \mstd{71.13}{1.03} & \mstd{96.56}{3.43}
& \mstd{74.13}{2.43} & \mstd{97.76}{2.04}
& \mstd{73.74}{0.73} & \mstd{97.94}{2.63}
& \mstd{70.54}{3.63} & \mstd{96.16}{2.93} \\

\midrule
\multirow{6}{*}{\textbf{SIDER}}
& No-Attack
& \mstd{63.34}{0.74} & -
& \mstd{63.43}{2.23} & -
& \mstd{63.24}{1.74} & -
& \mstd{63.46}{1.93} & -
& \mstd{63.34}{1.94} & - \\
& GTA
& \mstd{60.74}{0.54} & \mstd{49.96}{7.84}
& \mstd{61.43}{1.84} & \mstd{58.54}{1.43}
& \mstd{63.13}{3.43} & \mstd{54.26}{12.34}
& \mstd{62.64}{1.23} & \mstd{51.86}{2.73}
& \mstd{63.34}{4.43} & \mstd{50.76}{19.24} \\
& MB
& \mstd{60.07}{1.46} & \mstd{51.17}{3.86}
& \mstd{60.84}{0.63} & \mstd{89.74}{1.13}
& \mstd{65.74}{1.94} & \mstd{87.84}{2.83}
& \mstd{62.24}{1.73} & \mstd{89.46}{0.73}
& \mstd{65.64}{1.13} & \mstd{87.36}{0.73} \\
& UGBA
& \mstd{60.16}{0.94} & \mstd{53.07}{4.36}
& \mstd{64.34}{3.03} & \mstd{86.94}{1.13}
& \mstd{65.76}{1.34} & \mstd{88.24}{2.23}
& \mstd{64.54}{2.63} & \mstd{87.84}{2.24}
& \mstd{63.54}{2.23} & \mstd{87.64}{2.04} \\
& DPGBA
& \mstd{60.84}{0.34} & \mstd{45.27}{4.46}
& \mstd{63.34}{1.53} & \mstd{58.96}{15.54}
& \mstd{60.34}{1.84} & \mstd{73.74}{7.13}
& \mstd{61.74}{3.33} & \mstd{72.84}{7.83}
& \mstd{63.36}{1.43} & \mstd{50.74}{19.23} \\
& \textbf{ChemBack}
& \mstd{60.84}{0.63} & \mstd{99.17}{0.34}
& \mstd{62.43}{1.23} & \mstd{96.43}{3.93}
& \mstd{61.14}{1.13} & \mstd{95.13}{3.13}
& \mstd{62.44}{1.93} & \mstd{94.34}{3.23}
& \mstd{62.14}{0.73} & \mstd{93.54}{5.53} \\

\midrule
\multirow{6}{*}{\textbf{Tox21}}
& No-Attack
& \mstd{97.24}{0.34} & -
& \mstd{96.74}{0.13} & -
& \mstd{96.84}{0.13} & -
& \mstd{96.83}{0.23} & -
& \mstd{96.74}{0.04} & - \\
& GTA
& \mstd{97.14}{0.24} & \mstd{16.73}{2.18}
& \mstd{95.54}{0.73} & \mstd{51.74}{2.23}
& \mstd{95.94}{0.83} & \mstd{50.86}{5.04}
& \mstd{95.24}{0.93} & \mstd{50.94}{2.73}
& \mstd{95.64}{0.73} & \mstd{50.76}{3.74} \\
& MB
& \mstd{97.23}{0.13} & \mstd{24.61}{3.57}
& \mstd{96.43}{0.34} & \mstd{95.04}{0.33}
& \mstd{96.54}{0.13} & \mstd{95.64}{0.13}
& \mstd{96.34}{0.23} & \mstd{95.44}{0.23}
& \mstd{96.24}{0.33} & \mstd{95.14}{0.33} \\
& UGBA
& \mstd{97.26}{0.34} & \mstd{22.84}{2.46}
& \mstd{96.34}{0.33} & \mstd{95.64}{0.23}
& \mstd{96.24}{0.23} & \mstd{95.24}{0.43}
& \mstd{96.44}{0.23} & \mstd{95.64}{0.23}
& \mstd{96.34}{0.33} & \mstd{95.64}{0.23} \\
& DPGBA
& \mstd{97.24}{0.23} & \mstd{18.49}{2.69}
& \mstd{96.24}{0.63} & \mstd{84.34}{4.63}
& \mstd{96.24}{0.83} & \mstd{83.74}{1.23}
& \mstd{96.04}{0.83} & \mstd{84.24}{1.03}
& \mstd{96.43}{0.83} & \mstd{84.04}{1.03} \\
& \textbf{ChemBack}
& \mstd{96.34}{0.43} & \mstd{81.84}{0.84}
& \mstd{96.04}{0.53} & \mstd{91.43}{14.03}
& \mstd{95.94}{0.53} & \mstd{93.94}{9.93}
& \mstd{95.84}{1.03} & \mstd{95.64}{7.13}
& \mstd{95.84}{0.83} & \mstd{96.34}{3.53} \\
\bottomrule
\end{tabular}
}
\end{table*}
\section{Additional Model-level and Post-hoc Defense Results}
\label{app:model-level-defenses}

This appendix provides full CA/ASR results for the defenses discussed in the main text and for two additional model-level robust GNN defenses.
All results are evaluated at $\alpha=10\%$ under the raw pipeline, i.e., after poisoned samples have entered the learning pipeline.
This setting differs intentionally from \textbf{ChemGuard}. \textbf{ChemGuard} performs admission-stage chemistry checks before training, whereas the defenses in this appendix operate after admission.

\paragraph{Full results for main-text defenses.} 
Table~\ref{tab:app-defense-full-main} reports full CA/ASR results for Spectral Signatures, DShield, RGCN, RIGBD, and PGNNCert. The Raw columns match Table~\ref{tab:main-10}. Spectral Signatures has a larger effect on graph-only baselines whose poisons are more separable in representation space, while UGBA and DPGBA are less affected by Spectral Signatures because they explicitly target unnoticeability or distribution preservation. \textbf{ChemBack} is also only mildly affected by Spectral Signatures because its poisons are \textbf{ChemGuard}-admissible and selected using target-class Tanimoto similarity. RIGBD and PGNNCert can reduce ASR more strongly because they operate through robustness-inspired or certified/model-level mechanisms, but they still do not remove \textbf{ChemBack} after poisons enter training.

\begin{table*}[t]
\centering
\caption{
Full defense results for the main-text defenses at $\alpha=10\%$ under the raw pipeline. Values report CA/ASR (\%) as mean$\pm$std over 5 seeds. Raw columns match Table~\ref{tab:main-10}.
}
\label{tab:app-defense-full-main}
\scriptsize
\setlength{\tabcolsep}{2.1pt}
\resizebox{\textwidth}{!}{
\begin{tabular}{llcccccccccccc}
\toprule
\textbf{Dataset} & \textbf{Attack}
& \multicolumn{2}{c}{\textbf{Raw}}
& \multicolumn{2}{c}{\textbf{Spectral}}
& \multicolumn{2}{c}{\textbf{DShield}}
& \multicolumn{2}{c}{\textbf{RGCN}}
& \multicolumn{2}{c}{\textbf{RIGBD}}
& \multicolumn{2}{c}{\textbf{PGNNCert}} \\
\cmidrule(lr){3-4}
\cmidrule(lr){5-6}
\cmidrule(lr){7-8}
\cmidrule(lr){9-10}
\cmidrule(lr){11-12}
\cmidrule(lr){13-14}
& & \textbf{CA} & \textbf{ASR}
& \textbf{CA} & \textbf{ASR}
& \textbf{CA} & \textbf{ASR}
& \textbf{CA} & \textbf{ASR}
& \textbf{CA} & \textbf{ASR}
& \textbf{CA} & \textbf{ASR} \\
\midrule

\multirow{5}{*}{\textbf{BBBP}}
& GTA
& \mstd{79.72}{0.63} & \mstd{73.16}{1.24}
& \mstd{83.47}{0.58} & \mstd{48.93}{1.47}
& \mstd{83.28}{0.54} & \mstd{42.17}{1.63}
& \mstd{82.74}{0.69} & \mstd{45.86}{1.58}
& \mstd{83.06}{0.67} & \mstd{37.42}{1.79}
& \mstd{80.48}{0.76} & \mstd{36.94}{1.56} \\
& MB
& \mstd{80.43}{0.64} & \mstd{72.18}{1.36}
& \mstd{83.61}{0.67} & \mstd{50.26}{4.87}
& \mstd{83.42}{0.63} & \mstd{43.72}{4.59}
& \mstd{82.89}{0.78} & \mstd{46.83}{4.28}
& \mstd{83.18}{0.74} & \mstd{35.61}{4.36}
& \mstd{80.23}{0.82} & \mstd{46.74}{4.91} \\
& UGBA
& \mstd{82.14}{0.53} & \mstd{60.67}{1.16}
& \mstd{83.34}{0.46} & \mstd{59.38}{1.92}
& \mstd{83.17}{0.51} & \mstd{56.21}{2.38}
& \mstd{82.76}{0.58} & \mstd{57.46}{2.14}
& \mstd{82.94}{0.54} & \mstd{34.87}{2.41}
& \mstd{80.86}{0.68} & \mstd{35.24}{2.19} \\
& DPGBA
& \mstd{81.93}{0.65} & \mstd{69.24}{1.13}
& \mstd{83.43}{0.84} & \mstd{68.16}{2.21}
& \mstd{83.28}{0.86} & \mstd{65.47}{2.36}
& \mstd{82.97}{0.91} & \mstd{66.83}{2.17}
& \mstd{83.14}{0.93} & \mstd{48.92}{2.53}
& \mstd{82.16}{0.87} & \mstd{49.31}{2.47} \\
& \textbf{ChemBack}
& \mstd{81.08}{0.52} & \mstd{69.36}{1.14}
& \mstd{80.96}{0.61} & \mstd{68.74}{1.39}
& \mstd{80.73}{0.64} & \mstd{66.58}{1.72}
& \mstd{80.84}{0.58} & \mstd{67.13}{1.46}
& \mstd{80.27}{0.73} & \mstd{61.37}{1.85}
& \mstd{80.91}{0.62} & \mstd{68.28}{1.32} \\

\midrule
\multirow{5}{*}{\textbf{BACE}}
& GTA
& \mstd{67.34}{0.54} & \mstd{47.96}{1.14}
& \mstd{70.43}{0.83} & \mstd{32.76}{2.13}
& \mstd{70.28}{0.94} & \mstd{27.84}{1.96}
& \mstd{70.11}{0.87} & \mstd{29.63}{2.18}
& \mstd{70.24}{1.06} & \mstd{21.38}{1.83}
& \mstd{70.16}{0.92} & \mstd{22.47}{1.94} \\
& MB
& \mstd{70.04}{0.53} & \mstd{82.67}{1.54}
& \mstd{70.23}{0.64} & \mstd{60.21}{2.47}
& \mstd{70.17}{0.72} & \mstd{53.86}{2.18}
& \mstd{70.08}{0.76} & \mstd{55.74}{2.26}
& \mstd{70.13}{0.84} & \mstd{15.42}{4.73}
& \mstd{70.06}{0.83} & \mstd{48.36}{4.97} \\
& UGBA
& \mstd{72.64}{0.63} & \mstd{67.18}{1.24}
& \mstd{70.54}{0.68} & \mstd{65.92}{1.71}
& \mstd{70.36}{0.63} & \mstd{61.47}{1.68}
& \mstd{70.42}{0.71} & \mstd{62.84}{1.76}
& \mstd{70.18}{0.74} & \mstd{28.61}{2.34}
& \mstd{70.24}{0.69} & \mstd{31.86}{2.27} \\
& DPGBA
& \mstd{69.36}{0.54} & \mstd{55.74}{1.23}
& \mstd{69.86}{0.63} & \mstd{54.83}{1.76}
& \mstd{69.74}{0.67} & \mstd{52.13}{1.82}
& \mstd{69.69}{0.71} & \mstd{53.46}{1.67}
& \mstd{69.64}{0.73} & \mstd{31.28}{3.16}
& \mstd{69.78}{0.68} & \mstd{35.74}{3.04} \\
& \textbf{ChemBack}
& \mstd{70.63}{0.72} & \mstd{98.86}{0.43}
& \mstd{70.54}{0.73} & \mstd{97.84}{0.76}
& \mstd{70.42}{0.81} & \mstd{96.37}{1.08}
& \mstd{70.31}{0.77} & \mstd{96.91}{0.94}
& \mstd{69.94}{0.92} & \mstd{92.38}{1.63}
& \mstd{70.18}{0.86} & \mstd{97.16}{0.81} \\

\midrule
\multirow{5}{*}{\textbf{SIDER}}
& GTA
& \mstd{60.83}{0.74} & \mstd{66.47}{1.43}
& \mstd{60.64}{0.78} & \mstd{45.36}{1.57}
& \mstd{60.52}{0.76} & \mstd{40.78}{1.49}
& \mstd{60.39}{0.82} & \mstd{42.19}{1.68}
& \mstd{60.46}{0.86} & \mstd{36.74}{1.53}
& \mstd{60.42}{0.74} & \mstd{38.21}{1.64} \\
& MB
& \mstd{64.73}{0.73} & \mstd{83.28}{1.46}
& \mstd{64.52}{0.86} & \mstd{63.17}{1.89}
& \mstd{64.39}{0.83} & \mstd{58.42}{1.73}
& \mstd{64.17}{0.91} & \mstd{55.83}{2.14}
& \mstd{58.86}{1.62} & \mstd{40.69}{2.85}
& \mstd{60.54}{1.37} & \mstd{41.24}{2.37} \\
& UGBA
& \mstd{59.47}{0.64} & \mstd{86.18}{1.74}
& \mstd{59.34}{0.68} & \mstd{84.97}{1.82}
& \mstd{59.26}{0.67} & \mstd{82.36}{1.69}
& \mstd{59.18}{0.71} & \mstd{82.74}{1.51}
& \mstd{59.04}{0.74} & \mstd{41.28}{3.17}
& \mstd{59.21}{0.63} & \mstd{80.37}{1.58} \\
& DPGBA
& \mstd{62.94}{0.63} & \mstd{49.67}{1.04}
& \mstd{62.82}{0.64} & \mstd{48.72}{1.16}
& \mstd{62.71}{0.66} & \mstd{47.16}{1.09}
& \mstd{62.64}{0.69} & \mstd{48.31}{1.03}
& \mstd{59.86}{0.57} & \mstd{44.26}{3.28}
& \mstd{61.63}{0.72} & \mstd{48.86}{1.12} \\
& \textbf{ChemBack}
& \mstd{60.84}{0.63} & \mstd{99.17}{0.34}
& \mstd{60.72}{0.63} & \mstd{98.52}{0.61}
& \mstd{60.56}{0.72} & \mstd{96.39}{0.82}
& \mstd{60.48}{0.69} & \mstd{96.87}{0.94}
& \mstd{60.13}{0.83} & \mstd{92.74}{1.46}
& \mstd{60.31}{0.78} & \mstd{93.46}{1.21} \\

\midrule
\multirow{5}{*}{\textbf{Tox21}}
& GTA
& \mstd{96.43}{0.43} & \mstd{42.78}{1.29}
& \mstd{96.54}{0.34} & \mstd{30.86}{1.47}
& \mstd{96.52}{0.36} & \mstd{27.49}{1.38}
& \mstd{96.61}{0.32} & \mstd{29.64}{1.42}
& \mstd{96.84}{0.23} & \mstd{23.76}{1.51}
& \mstd{96.78}{0.24} & \mstd{25.83}{1.47} \\
& MB
& \mstd{96.94}{0.34} & \mstd{96.37}{0.63}
& \mstd{96.84}{0.33} & \mstd{92.83}{0.72}
& \mstd{96.72}{0.34} & \mstd{85.67}{0.96}
& \mstd{96.69}{0.32} & \mstd{88.43}{0.87}
& \mstd{96.87}{0.16} & \mstd{61.24}{0.58}
& \mstd{96.81}{0.19} & \mstd{74.68}{0.63} \\
& UGBA
& \mstd{96.53}{0.33} & \mstd{97.06}{0.54}
& \mstd{96.48}{0.32} & \mstd{96.78}{0.57}
& \mstd{96.41}{0.34} & \mstd{93.21}{0.68}
& \mstd{96.39}{0.31} & \mstd{94.16}{0.62}
& \mstd{96.76}{0.33} & \mstd{68.47}{0.76}
& \mstd{96.69}{0.31} & \mstd{91.38}{0.64} \\
& DPGBA
& \mstd{96.56}{0.43} & \mstd{52.76}{0.94}
& \mstd{96.52}{0.42} & \mstd{51.84}{0.87}
& \mstd{96.46}{0.43} & \mstd{50.39}{0.91}
& \mstd{96.42}{0.46} & \mstd{51.18}{0.84}
& \mstd{96.88}{0.26} & \mstd{46.73}{0.58}
& \mstd{96.73}{0.28} & \mstd{50.82}{0.71} \\
& \textbf{ChemBack}
& \mstd{96.34}{0.43} & \mstd{81.84}{0.84}
& \mstd{96.24}{0.42} & \mstd{81.12}{0.91}
& \mstd{96.13}{0.43} & \mstd{79.86}{1.08}
& \mstd{96.08}{0.46} & \mstd{80.34}{0.96}
& \mstd{95.92}{0.51} & \mstd{74.63}{1.46}
& \mstd{96.01}{0.49} & \mstd{78.92}{1.18} \\

\bottomrule
\end{tabular}
}
\end{table*}

\paragraph{Additional robust GNN defenses.}
Table~\ref{tab:app-defense-gnnguard-prognn} reports results for GNNGuard and Pro-GNN. These defenses modify propagation or learn graph structure after samples have already entered training. They can reduce learned backdoor behavior in some settings, but they do not perform chemistry-aware molecular sanitization or graph-string consistency checks. Thus, they are complementary to \textbf{ChemGuard} rather than replacements for it.

\begin{table*}[t]
\centering
\caption{
Additional robust GNN defense results at $\alpha=10\%$ under the raw pipeline. Values report CA/ASR (\%) as mean$\pm$std over 5 seeds.
GNNGuard and Pro-GNN are evaluated as model-level defenses after poisoned records have entered training.
}
\label{tab:app-defense-gnnguard-prognn}
\footnotesize
\setlength{\tabcolsep}{6pt}
\begin{tabular}{cccccccc}
\toprule
\textbf{Dataset} & \textbf{Attack}
& \multicolumn{2}{c}{\textbf{Raw}}
& \multicolumn{2}{c}{\textbf{GNNGuard}}
& \multicolumn{2}{c}{\textbf{Pro-GNN}} \\
\cmidrule(lr){3-4}
\cmidrule(lr){5-6}
\cmidrule(lr){7-8}
& & \textbf{CA} & \textbf{ASR}
& \textbf{CA} & \textbf{ASR}
& \textbf{CA} & \textbf{ASR} \\
\midrule

\multirow{5}{*}{\textbf{BBBP}}
& GTA
& \mstd{79.72}{0.63} & \mstd{73.16}{1.24}
& \mstd{82.86}{0.54} & \mstd{43.72}{1.58}
& \mstd{83.14}{0.61} & \mstd{39.18}{1.64} \\
& MB
& \mstd{80.43}{0.64} & \mstd{72.18}{1.36}
& \mstd{82.94}{0.62} & \mstd{45.86}{4.77}
& \mstd{83.07}{0.67} & \mstd{41.62}{4.83} \\
& UGBA
& \mstd{82.14}{0.53} & \mstd{60.67}{1.16}
& \mstd{82.71}{0.53} & \mstd{55.27}{2.46}
& \mstd{83.02}{0.57} & \mstd{51.84}{2.38} \\
& DPGBA
& \mstd{81.93}{0.65} & \mstd{69.24}{1.13}
& \mstd{82.83}{0.71} & \mstd{64.13}{2.74}
& \mstd{83.11}{0.76} & \mstd{60.32}{2.68} \\
& \textbf{ChemBack}
& \mstd{81.08}{0.52} & \mstd{69.36}{1.14}
& \mstd{80.74}{0.66} & \mstd{66.21}{1.43}
& \mstd{80.52}{0.71} & \mstd{64.77}{1.58} \\

\midrule
\multirow{5}{*}{\textbf{BACE}}
& GTA
& \mstd{67.34}{0.54} & \mstd{47.96}{1.14}
& \mstd{70.64}{0.84} & \mstd{30.83}{2.16}
& \mstd{70.38}{0.91} & \mstd{25.64}{2.24} \\
& MB
& \mstd{70.04}{0.53} & \mstd{82.67}{1.54}
& \mstd{70.21}{0.67} & \mstd{57.42}{2.63}
& \mstd{70.16}{0.72} & \mstd{48.93}{3.18} \\
& UGBA
& \mstd{72.64}{0.63} & \mstd{67.18}{1.24}
& \mstd{70.33}{0.68} & \mstd{64.21}{1.74}
& \mstd{70.24}{0.73} & \mstd{60.47}{1.96} \\
& DPGBA
& \mstd{69.36}{0.54} & \mstd{55.74}{1.23}
& \mstd{69.81}{0.64} & \mstd{53.17}{1.83}
& \mstd{69.62}{0.69} & \mstd{50.86}{1.94} \\
& \textbf{ChemBack}
& \mstd{70.63}{0.72} & \mstd{98.86}{0.43}
& \mstd{70.18}{0.79} & \mstd{96.74}{0.83}
& \mstd{69.94}{0.88} & \mstd{95.61}{0.94} \\

\midrule
\multirow{5}{*}{\textbf{SIDER}}
& GTA
& \mstd{60.83}{0.74} & \mstd{66.47}{1.43}
& \mstd{60.52}{0.78} & \mstd{42.86}{1.74}
& \mstd{60.31}{0.82} & \mstd{39.42}{1.86} \\
& MB
& \mstd{64.73}{0.73} & \mstd{83.28}{1.46}
& \mstd{63.96}{0.87} & \mstd{60.73}{1.92}
& \mstd{63.72}{0.93} & \mstd{56.68}{2.14} \\
& UGBA
& \mstd{59.47}{0.64} & \mstd{86.18}{1.74}
& \mstd{59.16}{0.69} & \mstd{82.91}{1.83}
& \mstd{58.92}{0.74} & \mstd{80.83}{1.97} \\
& DPGBA
& \mstd{62.94}{0.63} & \mstd{49.67}{1.04}
& \mstd{62.41}{0.67} & \mstd{47.92}{1.18}
& \mstd{62.14}{0.72} & \mstd{46.73}{1.24} \\
& \textbf{ChemBack}
& \mstd{60.84}{0.63} & \mstd{99.17}{0.34}
& \mstd{60.32}{0.74} & \mstd{96.84}{0.86}
& \mstd{60.14}{0.81} & \mstd{95.37}{1.04} \\

\midrule
\multirow{5}{*}{\textbf{Tox21}}
& GTA
& \mstd{96.43}{0.43} & \mstd{42.78}{1.29}
& \mstd{96.82}{0.31} & \mstd{28.16}{1.42}
& \mstd{96.94}{0.27} & \mstd{25.74}{1.38} \\
& MB
& \mstd{96.94}{0.34} & \mstd{96.37}{0.63}
& \mstd{96.76}{0.33} & \mstd{88.64}{0.78}
& \mstd{96.82}{0.31} & \mstd{82.37}{0.93} \\
& UGBA
& \mstd{96.53}{0.33} & \mstd{97.06}{0.54}
& \mstd{96.61}{0.34} & \mstd{95.83}{0.58}
& \mstd{96.58}{0.32} & \mstd{93.46}{0.76} \\
& DPGBA
& \mstd{96.56}{0.43} & \mstd{52.76}{0.94}
& \mstd{96.42}{0.44} & \mstd{50.61}{0.91}
& \mstd{96.47}{0.42} & \mstd{48.24}{0.86} \\
& \textbf{ChemBack}
& \mstd{96.34}{0.43} & \mstd{81.84}{0.84}
& \mstd{96.12}{0.46} & \mstd{80.46}{0.96}
& \mstd{96.03}{0.51} & \mstd{78.93}{1.17} \\

\bottomrule
\end{tabular}
\end{table*}

\section{Large-scale and Pretrain-finetune Stress Tests}
\label{app:scale-pretrain}

This section reports additional stress tests beyond the four main MoleculeNet datasets. Our goal is not to claim state-of-the-art molecular property prediction performance. Instead, our evaluation focuses on backdoor-specific metrics, including clean accuracy (CA), attack success rate (ASR), and effective poisoning rate (EPR).
Instead, we ask whether the same chemistry-aware pattern persists when the dataset scale increases and when the victim is fine-tuned from a large pre-trained molecular model.

\subsection{Large-scale PCBA and MUV Benchmarks}
\label{app:large-scale-pcba-muv}

We first evaluate PCBA and MUV as larger-scale MoleculeNet stress tests. PCBA contains 128 bioassay prediction tasks and MUV contains 17 virtual-screening tasks. For scalability, we follow the same targeted binary attack protocol as the main experiments and evaluate a representative target head for each dataset. The purpose of this experiment is to test scale, not to exhaustively report all task heads.

Table~\ref{tab:large-scale} reports ASR before and after applying \textbf{ChemGuard}. The raw pipeline shows that several graph-only baselines can still obtain high apparent ASR when graph edits are treated as automatically admissible. However, once \textbf{ChemGuard} is enforced, their operational ASR decreases because only a small fraction of attempted poisons are admitted. In contrast, \textbf{ChemBack} maintains EPR at $100.00\%$ and has identical raw and \textbf{ChemGuard} ASR, showing that its poisons are chemistry-admissible by construction even at larger scale.

\begin{table*}[t]
\centering
\caption{
Large-scale benchmark stress tests on PCBA and MUV at $\alpha=10\%$.
Raw denotes the abstract-graph evaluation without chemistry-aware admission.
Values are mean$\pm$std over 5 seeds.
}
\label{tab:large-scale}
\footnotesize
\setlength{\tabcolsep}{6pt}
\begin{tabular}{ccccccc}
\toprule
\multirow{2}{*}{\textbf{Dataset}} &
\multirow{2}{*}{\textbf{Attack}} &
\multicolumn{2}{c}{\textbf{Raw}} &
\multicolumn{3}{c}{\textbf{with \textbf{ChemGuard}}} \\
\cmidrule(lr){3-4}
\cmidrule(lr){5-7}
& &
\textbf{CA (\%) $\uparrow$} &
\textbf{ASR (\%) $\uparrow$} &
\textbf{CA (\%) $\uparrow$} &
\textbf{ASR (\%) $\uparrow$} &
\textbf{EPR (\%) $\uparrow$} \\
\midrule

\multirow{5}{*}{\textbf{PCBA}}
& GTA
& \mstd{89.94}{0.41}
& \mstd{98.67}{1.28}
& \mstd{90.28}{0.37}
& \mstd{56.67}{2.18}
& \mstd{14.73}{1.96} \\
& MB
& \mstd{89.86}{0.38}
& \mstd{99.67}{0.74}
& \mstd{90.27}{0.34}
& \mstd{59.67}{2.41}
& \mstd{19.84}{2.27} \\
& UGBA
& \mstd{90.14}{0.36}
& \mstd{99.83}{0.37}
& \mstd{90.46}{0.31}
& \mstd{50.24}{1.93}
& \mstd{9.72}{1.68} \\
& DPGBA
& \mstd{90.13}{0.39}
& \mstd{87.16}{1.84}
& \mstd{90.47}{0.36}
& \mstd{67.24}{2.69}
& \mstd{14.62}{2.13} \\
& \textbf{ChemBack}
& \mstd{90.23}{0.29}
& \bmstd{92.41}{1.37}
& \mstd{90.23}{0.29}
& \bmstd{92.41}{1.37}
& \bmstd{100.00}{0.00} \\

\midrule

\multirow{5}{*}{\textbf{MUV}}
& GTA
& \mstd{99.42}{0.16}
& \mstd{74.13}{2.37}
& \mstd{99.56}{0.12}
& \mstd{34.17}{2.26}
& \mstd{13.79}{1.83} \\
& MB
& \mstd{99.58}{0.13}
& \mstd{96.87}{1.16}
& \mstd{99.66}{0.09}
& \mstd{51.12}{2.47}
& \mstd{20.32}{2.54} \\
& UGBA
& \mstd{99.59}{0.12}
& \mstd{94.26}{1.34}
& \mstd{99.66}{0.11}
& \mstd{20.14}{1.73}
& \mstd{9.47}{1.61} \\
& DPGBA
& \mstd{99.54}{0.14}
& \mstd{82.39}{1.92}
& \mstd{99.66}{0.13}
& \mstd{36.08}{2.18}
& \mstd{14.81}{1.94} \\
& \textbf{ChemBack}
& \mstd{99.43}{0.16}
& \bmstd{93.32}{1.46}
& \mstd{99.43}{0.16}
& \bmstd{93.32}{1.46}
& \bmstd{100.00}{0.00} \\
\bottomrule
\end{tabular}
\end{table*}

\subsection{GROVER Pretrain-finetune Setting}
\label{app:grover-pretrain}

We next evaluate whether \textbf{ChemBack} remains effective when the victim is not trained from scratch. We use GROVER, a large-scale self-supervised molecular graph transformer, as a pretrain-finetune molecular representation model. This setting addresses whether chemistry-aware backdoors persist in modern molecular pipelines that first pretrain on large unlabeled molecular corpora and then fine-tune on downstream property prediction tasks.

Table~\ref{tab:grover} reports results on all four main datasets.
Compared with the from-scratch GCN setting, GROVER generally yields stronger clean utility, which is expected from pretraining. The attack pattern, however, remains the same. Graph-only baselines can achieve high raw ASR when invalid graph edits are treated as admissible, but their operational ASR drops after \textbf{ChemGuard} because EPR remains low. \textbf{ChemBack} maintains EPR at $100.00\%$ and preserves high ASR across all four datasets, showing that the attack is not restricted to from-scratch GCN training.

\begin{table*}[t]
\centering
\caption{
GROVER pretrain-finetune stress tests at $\alpha=10\%$. Raw denotes evaluation without chemistry-aware admission. \textbf{ChemGuard} denotes operational chemistry-aware evaluation. Values are mean$\pm$std over 5 seeds.
}
\label{tab:grover}
\footnotesize
\setlength{\tabcolsep}{6pt}
\begin{tabular}{ccccccc}
\toprule
\multirow{2}{*}{\textbf{Dataset}} &
\multirow{2}{*}{\textbf{Attack}} &
\multicolumn{2}{c}{\textbf{Raw}} &
\multicolumn{3}{c}{\textbf{with \textbf{ChemGuard}}} \\
\cmidrule(lr){3-4}
\cmidrule(lr){5-7}
& &
\textbf{CA (\%) $\uparrow$} &
\textbf{ASR (\%) $\uparrow$} &
\textbf{CA (\%) $\uparrow$} &
\textbf{ASR (\%) $\uparrow$} &
\textbf{EPR (\%) $\uparrow$} \\
\midrule

\multirow{5}{*}{\textbf{BBBP}}
& GTA
& \mstd{91.93}{0.46}
& \mstd{73.41}{1.33}
& \mstd{92.27}{0.41}
& \mstd{54.47}{1.29}
& \mstd{8.17}{12.86} \\
& MB
& \mstd{93.87}{0.42}
& \mstd{72.36}{1.49}
& \mstd{94.36}{0.38}
& \mstd{46.21}{7.83}
& \mstd{11.27}{11.34} \\
& UGBA
& \mstd{93.42}{0.47}
& \mstd{60.91}{1.24}
& \mstd{93.91}{0.45}
& \mstd{47.84}{3.89}
& \mstd{13.19}{13.57} \\
& DPGBA
& \mstd{93.86}{0.74}
& \mstd{69.51}{1.37}
& \mstd{94.23}{0.72}
& \mstd{52.88}{3.37}
& \mstd{10.09}{10.86} \\
& \textbf{ChemBack}
& \mstd{92.13}{0.52}
& \bmstd{79.29}{1.41}
& \mstd{92.13}{0.52}
& \bmstd{79.29}{1.41}
& \bmstd{100.00}{0.00} \\

\midrule

\multirow{5}{*}{\textbf{BACE}}
& GTA
& \mstd{86.78}{0.69}
& \mstd{48.13}{1.27}
& \mstd{87.34}{0.64}
& \mstd{27.43}{2.12}
& \mstd{15.13}{13.24} \\
& MB
& \mstd{84.76}{0.66}
& \mstd{82.74}{1.69}
& \mstd{85.06}{0.61}
& \mstd{13.33}{8.73}
& \mstd{17.46}{10.78} \\
& UGBA
& \mstd{86.38}{0.59}
& \mstd{67.42}{1.38}
& \mstd{86.71}{0.57}
& \mstd{15.36}{3.31}
& \mstd{10.39}{12.16} \\
& DPGBA
& \mstd{81.83}{0.72}
& \mstd{55.83}{1.31}
& \mstd{82.21}{0.68}
& \mstd{16.46}{4.57}
& \mstd{10.91}{13.94} \\
& \textbf{ChemBack}
& \mstd{85.63}{0.73}
& \bmstd{98.81}{0.47}
& \mstd{85.63}{0.73}
& \bmstd{98.81}{0.47}
& \bmstd{100.00}{0.00} \\

\midrule

\multirow{5}{*}{\textbf{SIDER}}
& GTA
& \mstd{63.72}{0.81}
& \mstd{66.83}{1.46}
& \mstd{63.94}{0.74}
& \mstd{50.13}{7.61}
& \mstd{11.13}{11.86} \\
& MB
& \mstd{65.34}{0.76}
& \mstd{83.74}{1.52}
& \mstd{64.86}{1.34}
& \mstd{51.74}{3.91}
& \mstd{7.08}{12.47} \\
& UGBA
& \mstd{63.16}{0.68}
& \mstd{86.52}{1.76}
& \mstd{63.42}{0.93}
& \mstd{53.83}{4.27}
& \mstd{8.57}{10.26} \\
& DPGBA
& \mstd{64.27}{0.69}
& \mstd{50.16}{1.18}
& \mstd{64.13}{0.52}
& \mstd{45.62}{4.31}
& \mstd{5.47}{12.94} \\
& \textbf{ChemBack}
& \mstd{63.84}{0.66}
& \bmstd{98.63}{0.51}
& \mstd{63.84}{0.66}
& \bmstd{98.63}{0.51}
& \bmstd{100.00}{0.00} \\

\midrule

\multirow{5}{*}{\textbf{Tox21}}
& GTA
& \mstd{97.48}{0.29}
& \mstd{44.28}{1.47}
& \mstd{97.61}{0.21}
& \mstd{18.47}{2.34}
& \mstd{24.71}{14.08} \\
& MB
& \mstd{97.36}{0.31}
& \mstd{96.54}{0.67}
& \mstd{97.64}{0.16}
& \mstd{25.18}{3.69}
& \mstd{27.16}{11.23} \\
& UGBA
& \mstd{97.23}{0.34}
& \mstd{97.28}{0.59}
& \mstd{97.58}{0.31}
& \mstd{23.62}{2.58}
& \mstd{25.37}{12.84} \\
& DPGBA
& \mstd{97.31}{0.42}
& \mstd{53.16}{0.97}
& \mstd{97.57}{0.24}
& \mstd{19.28}{2.74}
& \mstd{26.84}{10.58} \\
& \textbf{ChemBack}
& \mstd{97.06}{0.41}
& \bmstd{84.76}{0.89}
& \mstd{97.06}{0.41}
& \bmstd{84.76}{0.89}
& \bmstd{100.00}{0.00} \\
\bottomrule
\end{tabular}
\end{table*}

Overall, these stress tests support two conclusions. First, the low EPR of graph-only baselines is not an artifact of the four small datasets used in the main text; the same admission-stage failure appears on larger molecular benchmarks. Second, \textbf{ChemBack}'s effectiveness is not tied to from-scratch GCN training. Because its trigger is chemically admissible and target-aligned at the molecular-structure level, the poisoned signal remains learnable under both larger datasets and pretrain-finetune molecular representation models.
\section{Task-wise Results on Multi-task Benchmarks}
\label{app:taskwise-asr-chemback}

To complement macro-averaged results in the main paper, we report task-wise ASR for \textbf{ChemBack} on SIDER and Tox21. All settings follow the main experimental protocol, including the same poison rate, victim training configuration, and \textbf{ChemGuard} enforcement at training and evaluation time.

\begin{figure*}[ht]
  \centering
  \begin{subfigure}[t]{0.77\linewidth}
    \centering
    \includegraphics[width=\linewidth]{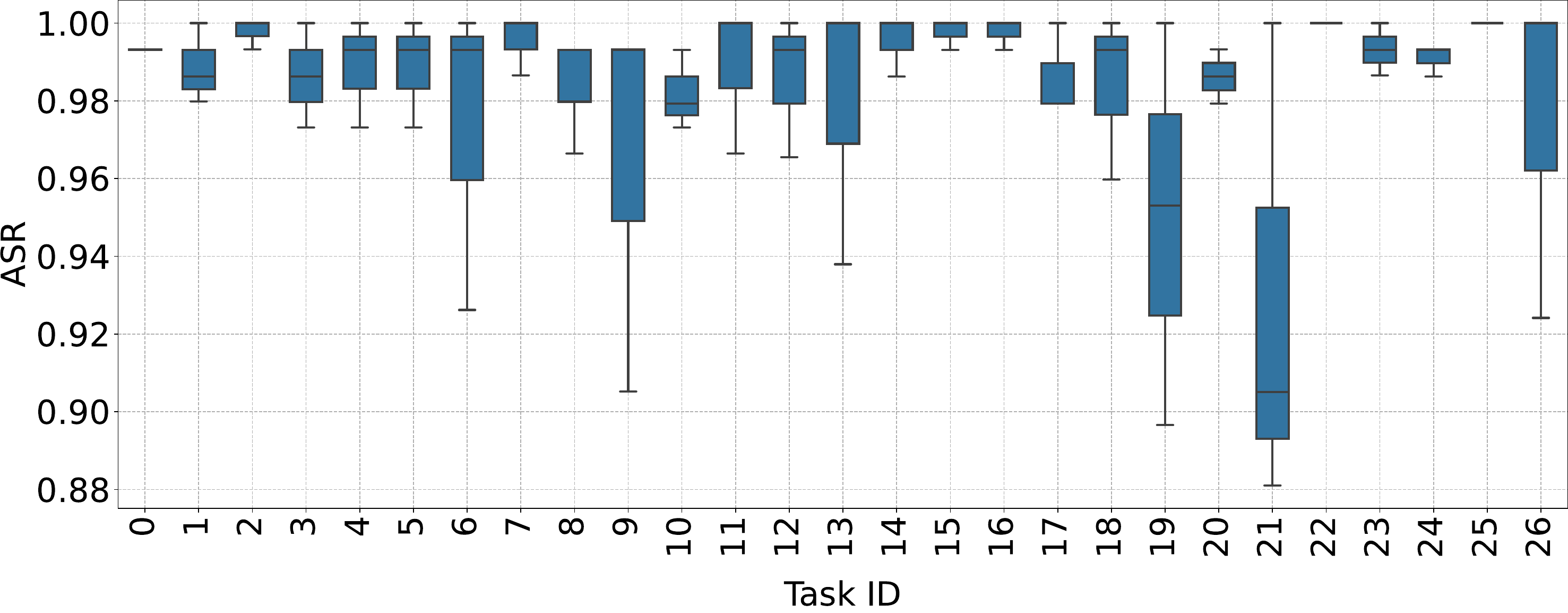}
    \caption{SIDER.}
  \end{subfigure}\hfill
  \begin{subfigure}[t]{0.6\linewidth}
    \centering
    \includegraphics[width=\linewidth]{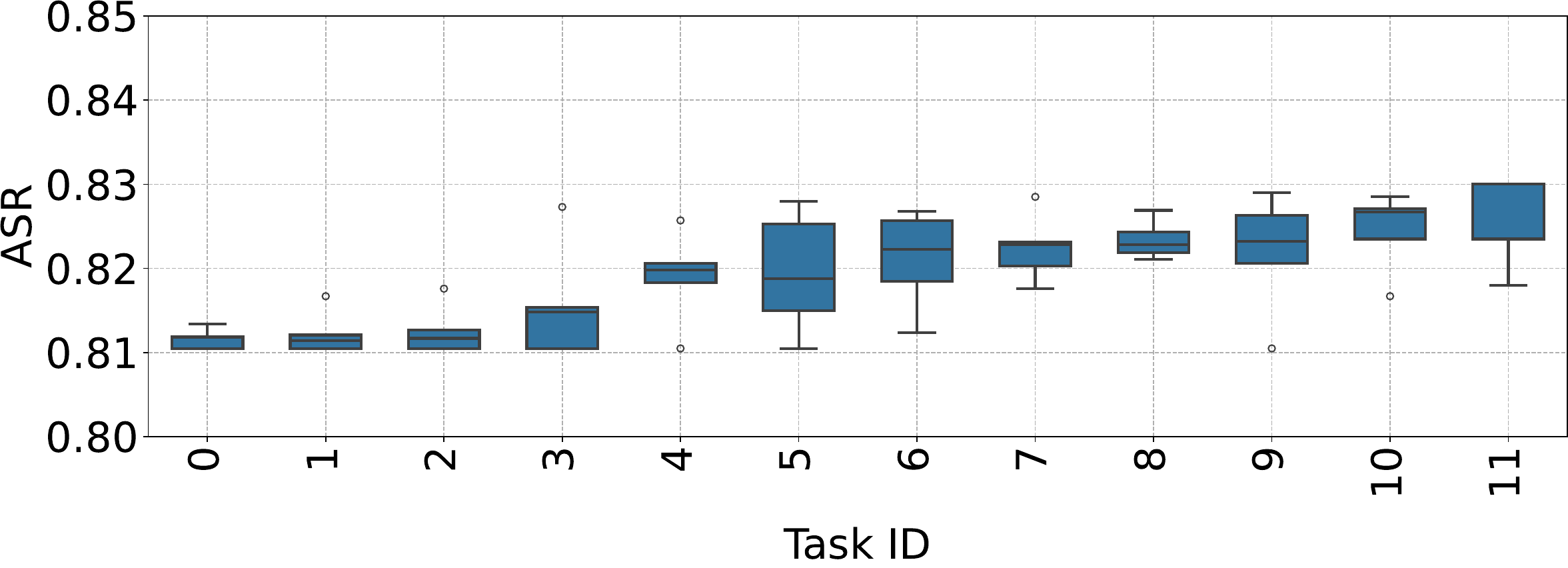}
    \caption{Tox21.}
  \end{subfigure}
  \caption{Task-wise ASR for \textbf{ChemBack} under \textbf{ChemGuard} on multi-task benchmarks.
  Each box summarizes the distribution of ASR across seeds for each task.
  While ASR varies across endpoints, \textbf{ChemBack} remains consistently effective across the evaluated task panel.}
  \label{fig:chemback-taskwise-asr}
\end{figure*}

\section{Theoretical Analysis}
\label{app:theory}

This appendix provides the theoretical analysis of the chemistry-aware evaluation protocol and the design choices in \textbf{ChemBack}. Throughout this appendix, let
\[
\mathrm{Adm}(\tilde{s},\tilde{G})\in\{0,1\}
\]
denote an admission rule. In the raw abstract-graph setting, $\mathrm{Adm}(\tilde{s},\tilde{G})\equiv 1$. Under chemistry-aware evaluation, $\mathrm{Adm}(\tilde{s},\tilde{G})=\mathrm{ChemGuard}_{\mathcal T}(\tilde{s},\tilde{G})$. Let $\tau=(\tau_s,\tau_G)$ denote a trigger transformation, where $\tau_s$ modifies the molecular string and $\tau_G$ denotes the submitted molecular graph after the same modification. We write
\[
\tau(\tilde{s},\tilde{G}) = (\tau_s(\tilde{s}),\tau_G(\tilde{G})).
\]

\subsection{Admission-aware Attack Success}
\label{app:theory-admission-asr}

The first observation formalizes why abstract-graph ASR can overestimate operational risk in molecular pipelines.
A triggered graph can only activate the deployed model if the triggered molecular record is admitted by preprocessing.

\begin{proposition}[Operational ASR is admission-gated]
\label{prop:admission-asr}
Let
\[
\mathcal C =
\{(\tilde{s},\tilde{G},y): \mathrm{A}(\tilde{s},\tilde{G})=1,\ y\neq y_t\}
\]
be the admitted non-target test population. Define the test-time trigger admission rate
\[
q_{\mathrm{test}}
=
\Pr\!\left[
\mathrm{Adm}(\tau_s(\tilde{s}),\tau_G(\tilde{G}))=1
\mid (\tilde{s},\tilde{G},y)\in\mathcal C
\right].
\]
Define the conditional attack success among triggered molecules that pass admission as
\[
r_{\mathrm{cond}}
=
\Pr\!\left[
\arg\max_c f_\theta(\phi_{\mathcal T}(\tau_s(s)))_c=y_t
\mid
\mathrm{Adm}(\tau_s(\tilde{s}),\tau_G(\tilde{G}))=1,\ (\tilde{s},\tilde{G},y)\in\mathcal C
\right],
\]
with the convention that $r_{\mathrm{cond}}=0$ when $q_{\mathrm{test}}=0$. Then the operational ASR satisfies
\[
\mathrm{ASR}
=
q_{\mathrm{test}}\cdot r_{\mathrm{cond}}
\le q_{\mathrm{test}}.
\]
\end{proposition}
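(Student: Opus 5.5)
The plan is to write the operational ASR as an expectation of an indicator over the admitted non-target population $\mathcal C$, and then split that indicator according to whether the triggered record is admitted. I read $\mathrm{A}$ in the definition of $\mathcal C$ as $\mathrm{Adm}$, and $s$ inside $f_\theta$ as $\tilde{s}$. I treat $\mathrm{ASR}$ as the probability, under the empirical (or population) distribution on $\mathcal C$, of the event $\{\hat{y}_\theta(\tau_s(\tilde{s}),\tau_G(\tilde{G}))=y_t\}$. This matches the normalized sum in the Metrics paragraph.

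Write $E_{\mathrm{adm}}=\{\mathrm{Adm}(\tau_s(\tilde{s}),\tau_G(\tilde{G}))=1\}$ and $E_{\mathrm{succ}}=\{\arg\max_c f_\theta(\phi_{\mathcal T}(\tau_s(\tilde{s})))_c=y_t\}$. The key step uses the definition of $\hat{y}_\theta$. On the complement of $E_{\mathrm{adm}}$ we have $\hat{y}_\theta=\bot\neq y_t$, so the ASR indicator vanishes there. On $E_{\mathrm{adm}}$, $\hat{y}_\theta$ equals the argmax, so the indicator equals $\mathbf{1}[E_{\mathrm{succ}}]$. This gives the pointwise identity
\[
\mathbf{1}\!\left[\hat{y}_\theta(\tau_s(\tilde{s}),\tau_G(\tilde{G}))=y_t\right]=\mathbf{1}[E_{\mathrm{adm}}]\,\mathbf{1}[E_{\mathrm{succ}}].
\]
Taking expectations over $\mathcal C$ yields $\mathrm{ASR}=\Pr[E_{\mathrm{adm}}\cap E_{\mathrm{succ}}\mid\mathcal C]$.

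Next I apply the chain rule of conditional probability. If $q_{\mathrm{test}}>0$, then $\Pr[E_{\mathrm{adm}}\cap E_{\mathrm{succ}}\mid\mathcal C]=\Pr[E_{\mathrm{adm}}\mid\mathcal C]\cdot\Pr[E_{\mathrm{succ}}\mid E_{\mathrm{adm}},\mathcal C]=q_{\mathrm{test}}\,r_{\mathrm{cond}}$. If $q_{\mathrm{test}}=0$, the intersection is contained in a null event, so $\mathrm{ASR}=0=q_{\mathrm{test}}\cdot r_{\mathrm{cond}}$ under the stated convention. The inequality then follows from $r_{\mathrm{cond}}\in[0,1]$.

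The only delicate point is notational rather than mathematical. The main-text ASR averages over $D_{\mathrm{test}}^{\neg y_t}$, while the proposition conditions on the admitted population $\mathcal C$. I will state that the proposition takes the denominator to be $\mathcal C$, which coincides with the main-text definition when clean test molecules are admitted. In the empirical case the probabilities are just counts divided by $|\mathcal C|$, so the chain-rule step is the identity $\frac{|E_{\mathrm{adm}}\cap E_{\mathrm{succ}}|}{|\mathcal C|}=\frac{|E_{\mathrm{adm}}|}{|\mathcal C|}\cdot\frac{|E_{\mathrm{adm}}\cap E_{\mathrm{succ}}|}{|E_{\mathrm{adm}}|}$.
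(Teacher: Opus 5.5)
Your proposal is correct and follows the paper's proof: both write $\mathrm{ASR}$ as $\Pr[E_{\mathrm{adm}}\cap E_{\mathrm{succ}}\mid\mathcal C]$, split it with the product rule into $q_{\mathrm{test}}\cdot r_{\mathrm{cond}}$, and then use $r_{\mathrm{cond}}\le 1$. You also fill in three steps the paper leaves implicit: you derive the joint-event form from the definition of $\hat{y}_\theta$ (using $\bot\neq y_t$), treat the case $q_{\mathrm{test}}=0$ separately, and point out that the denominator must be taken over $\mathcal C$.
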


\begin{proof}
By the operational ASR definition,
\[
\mathrm{ASR}
=
\Pr\!\left[
\mathrm{Adm}(\tau_s(\tilde{s}),\tau_G(\tilde{G}))=1
\ \wedge\
\arg\max_c f_\theta(\phi_{\mathcal T}(\tau_s(\tilde{s})))_c=y_t
\mid
(\tilde{s},\tilde{G},y)\in\mathcal C
\right].
\]
Applying the product rule gives
\[
\begin{aligned}
\mathrm{ASR}
&=
\Pr\!\left[
\mathrm{Adm}(\tau_s(\tilde{s}),\tau_G(\tilde{G}))=1
\mid
(\tilde{s},\tilde{G},y)\in\mathcal C
\right]
\\
&\times\Pr\!\left[
\arg\max_c f_\theta(\phi_{\mathcal T}(\tau_s(s)))_c=y_t
\mid
\mathrm{Adm}(\tau_s(\tilde{s}),\tau_G(\tilde{G}))=1,\ (\tilde{s},\tilde{G},y)\in\mathcal C
\right].
\end{aligned}
\]
The first factor is $q_{\mathrm{test}}$ and the second factor is $r_{\mathrm{cond}}$.

Since $r_{\mathrm{cond}}\in[0,1]$, we obtain $\mathrm{ASR}\le q_{\mathrm{test}}$.
\end{proof}

This proposition explains the main evaluation gap. Even if a graph trigger appears effective in raw evaluation, its operational ASR is upper-bounded by the probability that the triggered molecular record survives chemistry-aware admission. Thus, invalid or graph-string inconsistent test-time triggers must be counted as attack failures rather than successful triggered inputs.

A similar accounting applies to training-time poisoning. Suppose the attacker attempts a nominal poison rate $\alpha$ before admission.
Let
\[
q_p
=
\Pr[\mathrm{Adm}(\tilde{s}_p,\tilde{G}_p)=1\mid (\tilde{s}_p,\tilde{G}_p,y_t)\in D_{\mathrm{poison}}^{\mathrm{att}}],
\]
and let
\[
q_c
=
\Pr[\mathrm{Adm}(\tilde{s},\tilde{G})=1\mid (\tilde{s},\tilde{G},y)\in D_{\mathrm{clean}}].
\]
Then the admitted poison fraction in the effective training distribution is
\begin{equation}
\alpha_{\mathrm{eff}}
=
\frac{\alpha q_p}
{(1-\alpha)q_c+\alpha q_p}.
\label{eq:alpha-eff-theory}
\end{equation}
When clean molecular records are mostly valid, $q_{\mathrm{clean}}\approx 1$, and
\[
\alpha_{\mathrm{eff}}
\approx
\frac{\alpha q_p}{1-\alpha+\alpha q_p}.
\]
Thus, a low poison admission rate directly reduces the amount of poisoned signal that can reach training.

\begin{proof}
Before admission, the training distribution is a mixture with clean mass $(1-\alpha)$ and attempted poison mass $\alpha$. After admission, clean records contribute mass $(1-\alpha)q_c$, while attempted poisons contribute mass $\alpha q_p$. Normalizing the admitted poison mass by the total admitted mass gives Equation~\eqref{eq:alpha-eff-theory}.
\end{proof}

\subsection{\textbf{ChemBack} Admissibility by Construction}
\label{app:theory-admissibility}

The second result formalizes why \textbf{ChemBack} reports EPR $=100\%$ for submitted poisons. This does not mean that every internal motif-anchor attempt succeeds. Rather, invalid internal attempts are rejected during search; only admitted poisoned records are submitted to the poisoned training set.

\begin{proposition}[\textbf{ChemBack} submitted poisons pass \textbf{ChemGuard}]
\label{prop:admissibility-by-construction}
Assume the attachment operator $\mathrm{Att}_{\mathcal T}(G,m_k,i,j)$ returns a molecular record $(\tilde{s}_p,\tilde{G}_p)$ only if
\[
\phi_{\mathcal T}(\tilde{s}_p)\neq\varnothing
\quad\text{and}\quad
\mathrm{Topo}(\phi_{\mathcal T}(\tilde{s}_p))=\mathrm{Topo}(\tilde{G}_p).
\]
Otherwise, the operator returns $\bot$.
If the final submitted poisoned set $D_{\mathrm{poison}}^{\mathrm{sub}}$ contains only records returned by $\mathrm{Att}_{\mathcal T}$, then
\[
\mathrm{ChemGuard}_{\mathcal T}(\tilde{s}_p,\tilde{G}_p)=1
\quad
\text{for all }(\tilde{s}_p,\tilde{G}_p,y_t)\in D_{\mathrm{poison}}^{\mathrm{sub}},
\]
and therefore
\[
\mathrm{EPR}=1.
\]
\end{proposition}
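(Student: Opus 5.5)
The plan is to unfold Definition 1 and check that its two indicator factors equal one for every submitted record, then substitute into the EPR formula in Equation~\eqref{eq:epr-cg}. Fix an arbitrary record $(\tilde{s}_p,\tilde{G}_p,y_t)\in D_{\mathrm{poison}}^{\mathrm{sub}}$. By hypothesis, this record was returned by $\mathrm{Att}_{\mathcal T}(G,m_k,i,j)$ for some host $G$ and action $(k,i,j)$, and not as $\bot$, since $\bot$ is not a record and so cannot lie in $D_{\mathrm{poison}}^{\mathrm{sub}}$. The assumed contract on $\mathrm{Att}_{\mathcal T}$ then gives $\phi_{\mathcal T}(\tilde{s}_p)\neq\varnothing$ and $\mathrm{Topo}(\phi_{\mathcal T}(\tilde{s}_p))=\mathrm{Topo}(\tilde{G}_p)$.

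Next I will plug these two facts into Equation~\eqref{eq:chemguard}. The first indicator $\mathbf{1}[\phi_{\mathcal T}(\tilde{s}_p)\neq\varnothing]$ equals $1$. The second indicator is well defined because $\phi_{\mathcal T}(\tilde{s}_p)$ is a genuine valid graph, and it also equals $1$. Their product is therefore $\mathrm{ChemGuard}_{\mathcal T}(\tilde{s}_p,\tilde{G}_p)=1$. Since the record was arbitrary, this proves the first claim.

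For $\mathrm{EPR}=1$, I will identify the attempted poison set in Equation~\eqref{eq:epr-cg} with the submitted set. This follows the accounting convention of Appendix~\ref{app:epr-accounting}: internal rejected proposals are not submitted records, so $D_{\mathrm{poison}}^{\mathrm{att}}=D_{\mathrm{poison}}^{\mathrm{sub}}$ for ChemBack. The EPR is then an average of terms each equal to $1$, which gives $1$, provided the set is nonempty. I will state that proviso explicitly, or adopt the convention that the quantity is only reported when at least one poison is submitted.

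The only real obstacle is this bookkeeping step, not any mathematics. The proof must make clear that EPR is computed over submitted records rather than over all search proposals. It must also make clear that the $\mathrm{Topo}$ comparison uses the same toolkit $\mathcal T$ both inside $\mathrm{Att}_{\mathcal T}$ and in the ChemGuard instance that is applied. If a different validator were used at admission, the argument would not transfer, so I would note that the statement is relative to a fixed $\mathcal T$.
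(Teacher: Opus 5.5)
Your proposal is correct and matches the paper's proof: both unfold Equation~\eqref{eq:chemguard}, note that each indicator equals one by the assumed contract on $\mathrm{Att}_{\mathcal T}$, and then average over the submitted poisons to obtain $\mathrm{EPR}=1$. Your extra provisos are all implicit in the paper's argument, and stating them makes the proof slightly more careful: the submitted set must be nonempty, $D_{\mathrm{poison}}^{\mathrm{att}}$ is identified with $D_{\mathrm{poison}}^{\mathrm{sub}}$, and a single fixed toolkit $\mathcal T$ is used throughout.
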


\begin{proof}
By the definition of \textbf{ChemGuard},
\[
\mathrm{ChemGuard}_{\mathcal T}(\tilde{s}_p,\tilde{G}_p)
=
\mathbf{1}[\phi_{\mathcal T}(\tilde{s}_p)\neq\varnothing]
\cdot
\mathbf{1}[
\mathrm{Topo}(\phi_{\mathcal T}(\tilde{s}_p))=\mathrm{Topo}(\tilde{G}_p)
].
\]
For every submitted poison generated by $\mathrm{Att}_{\mathcal T}$, both indicators are one by assumption. Thus, $\mathrm{ChemGuard}_{\mathcal T}(\tilde{s}_p,\tilde{G}_p)=1$ for every submitted poisoned record. The EPR is the average of this indicator over submitted attempted poisons, so $\mathrm{EPR}=1$.
\end{proof}

This proposition is intentionally about submitted poisoned records, not about all internal candidate proposals. A search procedure may explore many invalid motif-anchor actions, but \textbf{ChemBack} filters them before submission. This distinction is important because EPR measures the poison signal that can enter training, not the number of invalid candidates explored during black-box trigger search.

\subsection{Clean-risk Preservation under Rare Triggers}
\label{app:theory-clean-risk}

Backdoor attacks aim to preserve clean utility while changing behavior only when the trigger is present. The following simple bound explains why rare molecular triggers can preserve clean accuracy, provided that the trigger predicate rarely appears naturally in clean non-target data.

Let $z(G)\in\{0,1\}$ denote a trigger predicate, where $z(G)=1$ if molecule $G$ contains the trigger substructure. Let $f_0$ be a clean classifier. Define the idealized backdoored classifier
\[
f_{\mathrm{bd}}(G)
=
\begin{cases}
y_t, & z(G)=1,\\
f_0(G), & z(G)=0.
\end{cases}
\]

\begin{proposition}[Rare triggers preserve clean risk]
\label{prop:rare-trigger-risk}
Let $R_{\mathrm{clean}}(f)=\Pr_{(G,y)\sim P_{\mathrm{clean}}}[f(G)\neq y]$.
Then
\[
R_{\mathrm{clean}}(f_{\mathrm{bd}})
\le
R_{\mathrm{clean}}(f_0)
+
\Pr_{(G,y)\sim P_{\mathrm{clean}}}[z(G)=1,\ y\neq y_t].
\]
\end{proposition}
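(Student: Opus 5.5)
The plan is to partition the error event of $f_{\mathrm{bd}}$ according to the trigger predicate $z(G)$ and bound each piece separately. Write
\[
\{f_{\mathrm{bd}}(G)\neq y\}
=
\{f_{\mathrm{bd}}(G)\neq y,\ z(G)=0\}
\;\cup\;
\{f_{\mathrm{bd}}(G)\neq y,\ z(G)=1\},
\]
a disjoint union. The probability of each term under $P_{\mathrm{clean}}$ will then be controlled in turn.

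On the event $z(G)=0$, the definition of $f_{\mathrm{bd}}$ gives $f_{\mathrm{bd}}(G)=f_0(G)$. Hence
\[
\Pr[f_{\mathrm{bd}}(G)\neq y,\ z(G)=0]
=
\Pr[f_0(G)\neq y,\ z(G)=0]
\le
\Pr[f_0(G)\neq y]
=
R_{\mathrm{clean}}(f_0).
\]

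On the event $z(G)=1$, the definition gives $f_{\mathrm{bd}}(G)=y_t$. An error therefore occurs exactly when $y\neq y_t$, so
\[
\Pr[f_{\mathrm{bd}}(G)\neq y,\ z(G)=1]
=
\Pr[z(G)=1,\ y\neq y_t].
\]

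Adding the two bounds gives the claimed inequality. We can even say slightly more: the bound is tight up to the dropped term $\Pr[f_0(G)\neq y,\ z(G)=1]$.

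There is no real obstacle here, since the argument is a one-line case split. The only point needing care is that the label $y$ is treated as a single class label; for the multi-task datasets the statement should be read per task head. I would also note after the proof that the additive penalty is small precisely when the trigger rarely occurs naturally in clean non-target molecules. This connects the bound to the motif-rarity heuristic used in ChemBack's motif library construction.
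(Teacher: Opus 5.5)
Your proof is correct and uses the same case split on $z(G)$ as the paper. The paper phrases it as the pointwise indicator inequality $\mathbf{1}[f_{\mathrm{bd}}(G)\neq y]\le\mathbf{1}[f_0(G)\neq y]+\mathbf{1}[z(G)=1,\ y\neq y_t]$ and then takes expectations, whereas your event-level decomposition also gives the exact identity, with slack equal to the dropped term $\Pr[f_0(G)\neq y,\ z(G)=1]$.
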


\begin{proof}
For any clean sample $(G,y)$, if $z(G)=0$, then $f_{\mathrm{bd}}(G)=f_0(G)$ and the two classifiers have identical error. If $z(G)=1$ and $y=y_t$, then $f_{\mathrm{bd}}(G)=y_t=y$, so the backdoored classifier is correct on that sample. The only additional clean error caused by the trigger rule can occur when $z(G)=1$ and $y\neq y_t$.
Therefore, pointwise,
\[
\mathbf{1}[f_{\mathrm{bd}}(G)\neq y]
\le
\mathbf{1}[f_0(G)\neq y]
+
\mathbf{1}[z(G)=1,\ y\neq y_t].
\]
Taking expectation over $P_{\mathrm{clean}}$ proves the claim.
\end{proof}

This result justifies using rare motifs to reduce accidental clean trigger occurrence. However, rarity alone is not a stealth guarantee.
A rare motif may still be structurally or representationally outlying.
This is why \textbf{ChemBack} additionally selects among admitted candidates using Tanimoto similarity to clean target-class molecules.

\subsection{Tanimoto Similarity and Representation Proximity}
\label{app:theory-tanimoto-md2}

\textbf{ChemBack} does not use a clean victim model during trigger selection. It ranks admitted candidates by fingerprint-based Tanimoto similarity to clean target-class molecules. This subsection explains why such a chemistry-native structural signal can align with the post-hoc MD$^2$ diagnostic, without making MD$^2$ part of the attack.

Let $\mathrm{FP}(G)\in\{0,1\}^d$ be a molecular fingerprint and define the Tanimoto distance
\[
d_{\mathrm{Tan}}(G,G')
=
1-\mathrm{Tan}(\mathrm{FP}(G),\mathrm{FP}(G')).
\]
Let $h_\theta(G)\in\mathbb{R}^p$ denote the penultimate representation of a clean reference model used only for diagnostics.
For the clean target class $y_t$, let $\mu_t$ and $\Sigma_t$ be the empirical mean and covariance of clean target embeddings.
We use a ridge-regularized covariance
\[
\widetilde{\Sigma}_t=\Sigma_t+\rho I,
\quad \rho>0,
\]
and define
\[
\mathrm{MD}_{t}(G)
=
\left\|
\widetilde{\Sigma}_t^{-1/2}
\left(h_\theta(G)-\mu_t\right)
\right\|_2.
\]
Thus $\mathrm{MD}_{t}^{2}(G)$ is the squared Mahalanobis distance to the clean target class.

\begin{assumption}[Local fingerprint-representation smoothness]
\label{assump:local-smoothness}
For the molecular neighborhood considered by \textbf{ChemBack}, there exists $L>0$ such that for any admitted molecules $G$ and $G'$,
\[
\|h_\theta(G)-h_\theta(G')\|_2
\le
L\, d_{\mathrm{Tan}}(G,G').
\]
\end{assumption}

This assumption does not grant the attacker access to $h_\theta$.
It is only a post-hoc regularity condition used to explain why structural similarity can align with learned representation proximity.

\begin{proposition}[Tanimoto target similarity implies bounded MD under local smoothness]
\label{prop:tanimoto-md2-bound}
Suppose Assumption~\ref{assump:local-smoothness} holds.
Let $G_p$ be a \textbf{ChemGuard}-admissible poison.
Assume there exists a clean target-class molecule $G_z$ such that
\[
d_{\mathrm{Tan}}(G_p,G_z)\le \epsilon
\quad\text{and}\quad
\mathrm{MD}_{t}^{2}(G_z)\le r.
\]
Let $\lambda_{\min}(\widetilde{\Sigma}_t)$ be the smallest eigenvalue of $\widetilde{\Sigma}_t$.
Then
\[
\mathrm{MD}_{t}(G_p)
\le
\sqrt{r}
+
\frac{L}{\sqrt{\lambda_{\min}(\widetilde{\Sigma}_t)}}\epsilon,
\]
and consequently,
\[
\mathrm{MD}_{t}^{2}(G_p)
\le
\left(
\sqrt{r}
+
\frac{L}{\sqrt{\lambda_{\min}(\widetilde{\Sigma}_t)}}\epsilon
\right)^2.
\]
\end{proposition}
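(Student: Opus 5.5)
The plan is to bound $\mathrm{MD}_t(G_p)$ by a triangle inequality in the whitened norm $\|\widetilde{\Sigma}_t^{-1/2}(\cdot)\|_2$, and then convert the Tanimoto proximity into an embedding-space bound using Assumption~\ref{assump:local-smoothness}. First I will check that $\widetilde{\Sigma}_t=\Sigma_t+\rho I$ is symmetric positive definite: $\Sigma_t$ is an empirical covariance, hence positive semidefinite, and $\rho>0$. So $\lambda_{\min}(\widetilde{\Sigma}_t)\ge\rho>0$, and $\widetilde{\Sigma}_t^{-1/2}$ is well defined and symmetric.

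Next I write
\[
h_\theta(G_p)-\mu_t=\bigl(h_\theta(G_z)-\mu_t\bigr)+\bigl(h_\theta(G_p)-h_\theta(G_z)\bigr),
\]
apply the linear map $\widetilde{\Sigma}_t^{-1/2}$, and use the triangle inequality for $\|\cdot\|_2$. This gives
\[
\mathrm{MD}_t(G_p)\le \mathrm{MD}_t(G_z)+\bigl\|\widetilde{\Sigma}_t^{-1/2}\bigl(h_\theta(G_p)-h_\theta(G_z)\bigr)\bigr\|_2.
\]
The first term is at most $\sqrt{r}$, because $\mathrm{MD}_t^2(G_z)\le r$ and $\mathrm{MD}_t\ge 0$.

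For the second term I use the operator-norm bound $\|\widetilde{\Sigma}_t^{-1/2}\|_{\mathrm{op}}=\lambda_{\min}(\widetilde{\Sigma}_t)^{-1/2}$. This holds by the spectral theorem: the eigenvalues of $\widetilde{\Sigma}_t^{-1/2}$ are the reciprocal square roots of those of $\widetilde{\Sigma}_t$, and the largest of these is $\lambda_{\min}(\widetilde{\Sigma}_t)^{-1/2}$. I then bound $\|h_\theta(G_p)-h_\theta(G_z)\|_2\le L\,d_{\mathrm{Tan}}(G_p,G_z)\le L\epsilon$ via Assumption~\ref{assump:local-smoothness}. Combining the two terms yields the first claimed inequality. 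Since both sides are nonnegative and $x\mapsto x^2$ is monotone on $[0,\infty)$, squaring gives the bound on $\mathrm{MD}_t^2(G_p)$.

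The main obstacle is applying the assumption legitimately. The assumption is stated for admitted molecules in \textbf{ChemBack}'s neighborhood, so I must note two things. First, $G_p$ is admitted by hypothesis. Second, $G_z$ is a clean target-class molecule that passes \textbf{ChemGuard}, since the fingerprint bank $B_t$ is built only from admitted records. With both points in place the assumption applies to the pair $(G_p,G_z)$. Everything else is routine linear algebra.
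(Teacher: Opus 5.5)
Your proof is correct and follows the paper's argument step for step: the triangle inequality in the whitened norm, then the operator-norm bound $\|\widetilde{\Sigma}_t^{-1/2}\|_2=\lambda_{\min}(\widetilde{\Sigma}_t)^{-1/2}$, then Assumption~\ref{assump:local-smoothness}, then monotone squaring. Your extra checks are more careful than the paper's proof, which applies the assumption to $(G_p,G_z)$ without comment. One caveat: the statement never says $G_z\in B_t$, so the admissibility of $G_z$ is really an implicit extra hypothesis, not something you can derive from the stated ones.
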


\begin{proof}
By the triangle inequality in the Mahalanobis norm,
\[
\mathrm{MD}_{t}(G_p)
=
\left\|
\widetilde{\Sigma}_t^{-1/2}
(h_\theta(G_p)-\mu_t)
\right\|_2
\]
\[
\le
\left\|
\widetilde{\Sigma}_t^{-1/2}
(h_\theta(G_z)-\mu_t)
\right\|_2
+
\left\|
\widetilde{\Sigma}_t^{-1/2}
(h_\theta(G_p)-h_\theta(G_z))
\right\|_2.
\]
The first term is $\mathrm{MD}_{t}(G_z)\le\sqrt{r}$.
For the second term,
\[
\left\|
\widetilde{\Sigma}_t^{-1/2}
(h_\theta(G_p)-h_\theta(G_z))
\right\|_2
\le
\|\widetilde{\Sigma}_t^{-1/2}\|_2
\cdot
\|h_\theta(G_p)-h_\theta(G_z)\|_2.
\]
Since
\[
\|\widetilde{\Sigma}_t^{-1/2}\|_2
=
\frac{1}{\sqrt{\lambda_{\min}(\widetilde{\Sigma}_t)}},
\]
and by Assumption~\ref{assump:local-smoothness},
\[
\|h_\theta(G_p)-h_\theta(G_z)\|_2
\le
L\, d_{\mathrm{Tan}}(G_p,G_z)
\le
L\epsilon,
\]
we obtain
\[
\mathrm{MD}_{t}(G_p)
\le
\sqrt{r}
+
\frac{L}{\sqrt{\lambda_{\min}(\widetilde{\Sigma}_t)}}\epsilon.
\]
Squaring both sides gives the bound on $\mathrm{MD}_{t}^{2}(G_p)$.
\end{proof}

This proposition provides a formal explanation for the post-hoc diagnostic in the main text.
If Tanimoto similarity is high, then $\epsilon$ is small.
Under local smoothness, a structurally target-similar poison is also close to the target region in the clean model's representation space.
Importantly, the attack does not need to know $h_\theta$, $\mu_t$, or $\Sigma_t$.
These quantities are used only by the evaluator after trigger generation.

\section{Technical Limitations}
\label{app:limitations}

This section expands the limitations briefly discussed in the main conclusion.

\textbf{\textbf{ChemGuard} is a first-layer admission rule, not a complete defense.} \textbf{ChemGuard} formalizes molecular sanitization and graph-string consistency checks that are already implicit in many molecular learning pipelines. It does not certify robustness against all chemistry-valid attacks. It also does not cover all domain-specific screening procedures used in medicinal chemistry or materials discovery.

\textbf{Toolkit sanitization is not full chemical realism.}
Passing RDKit, Indigo, or Open Babel sanitization does not imply synthetic accessibility, biological relevance, assay compatibility, or expert approval. These toolkits enforce important but limited notions of molecular validity. Future work should incorporate deeper chemistry constraints, including synthetic accessibility, retrosynthetic feasibility, reaction validity, PAINS filters, medicinal-chemistry rules, and expert review.

\textbf{\textbf{ChemBack} uses a controlled single-step attachment space.} \textbf{ChemBack} currently uses motif attachment through a single chemically valid bond. This controlled design isolates the effect of chemistry-aware admission and makes the attack interpretable. However, richer multi-step or reaction-like transformations may produce broader classes of valid molecular backdoors. Studying such transformations is an important future direction.

\textbf{Tanimoto similarity is a model-free structural signal, not a universal stealth guarantee.} \textbf{ChemBack} uses Tanimoto similarity to select poisons structurally close to the clean target class. Post-hoc diagnostics show alignment with clean-model MD$^2$ and chemistry-facing descriptors, but these diagnostics do not guarantee human-level stealth or robustness against every possible structural detector. The stealth claims in this work are therefore limited to the evaluated diagnostics.

\textbf{Benchmarks are proxies for deployed molecular pipelines.}
MoleculeNet benchmarks provide standardized evaluation settings, but deployed molecular ML systems may include proprietary validation rules, temporal data shifts, assay-specific curation, human review, or additional domain filters. Our results show that chemistry-aware admission materially changes molecular backdoor evaluation, but further work is needed to study more complex deployment pipelines.

\end{document}